
\RequirePackage{fix-cm}
\let\oldvec\vec
\documentclass[twocolumns, natbib]{svjour3}
\let\vec\oldvec

\newif\ifallannexe
\allannexetrue 

\newif\ifworkingpaper
\workingpapertrue 

\usepackage{amsmath, amssymb, amsfonts,mathtools,stmaryrd}
\usepackage{dsfont, bm}
\usepackage[english]{babel}
\usepackage{pgffor}

\usepackage[dvipsnames]{xcolor}
\newcommand\myshade{85}
\colorlet{mylinkcolor}{violet}
\colorlet{mycitecolor}{YellowOrange}
\colorlet{myurlcolor}{Aquamarine}
\definecolor{myblue}{RGB}{18,75,126}
\definecolor{myorange}{RGB}{241,105,19}
\definecolor{myred}{RGB}{179,0,0}

\usepackage[colorlinks,
	linkcolor=myblue,
	citecolor=myorange,
	urlcolor=Aquamarine!\myshade!black,
	backref=page
]{hyperref}

\usepackage[normalem]{ulem}
\usepackage{soul}

\usepackage{graphicx}

\usepackage{booktabs, multirow, multicol}
\usepackage{algorithmic}
\usepackage[algoruled, linesnumbered]{algorithm2e}
\usepackage{url}

\usepackage{tikz}

\definecolor{green_commentary}{RGB}{35,139,69}

\SetCommentSty{mycommfont}



\def\bse{{\boldsymbol{e}}}

\def\bsn{{\boldsymbol{n}}}

\def\bsq{{\boldsymbol{q}}}

\def\bsu{{\boldsymbol{u}}}
\def\bsv{{\boldsymbol{v}}}

\def\bsz{{\boldsymbol{z}}}

\def\bsY{{\boldsymbol{Y}}}
\def\bsZ{{\boldsymbol{Z}}}


\def\bfe{{\mathbf{e}}}

\def\bfp{{\mathbf{p}}}

\def\bfs{{\mathbf{s}}}

\def\bfu{{\mathbf{u}}}
\def\bfv{{\mathbf{v}}}

\def\bfx{{\mathbf{x}}}
\def\bfy{{\mathbf{y}}}
\def\bfz{{\mathbf{z}}}

\def\bfA{{\mathbf{A}}}
\def\bfB{{\mathbf{B}}}
\def\bfC{{\mathbf{C}}}

\def\bfH{{\mathbf{H}}}

\def\bfN{{\mathbf{N}}}

\def\bfP{{\mathbf{P}}}

\def\bfR{{\mathbf{R}}}

\def\bfU{{\mathbf{U}}}

\def\bfW{{\mathbf{W}}}
\def\bfX{{\mathbf{X}}}
\def\bfY{{\mathbf{Y}}}
\def\bfZ{{\mathbf{Z}}}



\def\calU{{\mathcal{U}}}

\def\calS{{\mathcal{S}}}

\def\calA{{\mathcal{A}}}
\def\calB{{\mathcal{B}}}

\def\calH{{\mathcal{H}}}
\def\calI{{\mathcal{I}}}
\def\calJ{{\mathcal{J}}}
\def\calK{{\mathcal{K}}}

\def\calN{{\mathcal{N}}}
\def\calO{{\mathcal{O}}}

\def\calS{{\mathcal{S}}}

\def\calU{{\mathcal{U}}}

\graphicspath{{figures/}}

\newcommand{\headTab}[1]{ \textnormal{\textbf{#1}} }


\usepackage{tikz}

\newcommand{\Nobs}{N}
\newcommand{\nobs}{n}
\newcommand{\dimcoef}{K}
\newcommand{\indcoef}{k}
\newcommand{\dimobs}{D}
\newcommand{\indobs}{d}

\newcommand{\MATobs}{\bfY}
\newcommand{\Vobs}{\bfy}
\newcommand{\obs}[1]{y_{#1}}

\newcommand{\MATcoef}{\bfX}
\newcommand{\Vcoef}{\bfx}
\newcommand{\coef}[1]{x_{#1}}

\newcommand{\MATibp}{\bfZ}
\newcommand{\Vibp}{\bfz}
\newcommand{\ibp}[1]{z_{#1}}

\newcommand{\MATfac}{\bfP}
\newcommand{\Vfac}{\bfp}

\newcommand{\Vnoise}{\bfe}

\newcommand{\noisestd}{\sigma}
\newcommand{\noisevar}{\noisestd^2}

\newcommand{\ibpkn}{z_{\indcoef, \nobs}}
\newcommand{\coefkn}{\coef{\indcoef, \nobs}}

\newcommand{\given}{|}
\newcommand{\prob}{ \mathrm{p} }
\newcommand{\Prob}{ \mathrm{P} }
\newcommand{\qprob}{ \mathrm{q} }

\newcommand{\stiefel}[2]{\calS_{#1}^{#2}}
\newcommand{\zellner}{\delta}

\newcommand{\ibpparam}{\alpha}

\newcommand{\transp}{^T}
\newcommand{\Esp}{{\rm I\kern-.3em E}}

\newcommand{\R}{\mathbb{R}}

\newcommand{\indfunc}[2]{\mathds{1}_{#1}{\left(#2\right)}}
\newcommand{\indmat}{\mathds{I}}


\newcommand{\paramvect}{\boldsymbol{\theta}}
\newcommand{\hypervect}{\boldsymbol{\phi}}

\newcommand{\scalarProdSquare}[2]{\big(#1\transp#2\big)^2}



\DeclareMathOperator{\trace}{tr}
\DeclareMathOperator{\etr}{etr}

\def\thetabf{\boldsymbol{\theta}}

\newcommand{\zellnervar}{ \zellner^2 }
\newcommand{\mkn}{ m_{\indcoef, -\nobs} }

\newcommand{\prodYPscalar}{ \left\langle \Vfac_\indcoef,  \Vobs_\nobs \right\rangle^2 }

\renewcommand{\epsilon}{\varepsilon}
\newcommand{\Khat}{\widehat{\dimcoef}}
\newcommand{\Kkolmo}{\Khat_{\mathrm{KS}}}



\begin{document}


\title{Bayesian nonparametric Principal Component Analysis}

\author{Cl\'ement Elvira \and
        Pierre Chainais  \and
        Nicolas Dobigeon 
}


\institute{C. Elvira \and P. Chainais \at
              Univ. Lille, CNRS, Centrale Lille, UMR 9189 - CRIStAL - Centre de Recherche en Informatique Signal et Automatique de Lille, F-59000 Lille, France \\
              \email{\{clement.elvira, pierre.chainais\}@centralelille.fr}           
           \and
           N. Dobigeon \at
              University of Toulouse, IRIT/INP-ENSEEIHT, CNRS, 2 rue Charles Camichel, BP 7122, 31071 Toulouse cedex 7, France \\
              \email{nicolas.dobigeon@enseeiht.fr}
}

\date{}

\maketitle


\begin{abstract}
	Principal component analysis (PCA) is very popular to perform dimension reduction.
	The selection of the number of significant components is essential but often based on some practical heuristics depending on the application.
	Only few works have proposed a probabilistic approach able to infer the number of significant components.
	To this purpose, this paper introduces a Bayesian nonparametric principal component analysis (BNP-PCA).
	The proposed model projects observations onto a random orthogonal basis which is assigned a prior distribution defined on the Stiefel manifold.
	The prior on factor scores involves an Indian buffet process to model the uncertainty related to the number of components.
	The parameters of interest as well as the nuisance parameters are finally inferred within a fully Bayesian framework via Monte Carlo sampling.
	A study of the (in-)consistence of the marginal maximum a posteriori estimator of the latent dimension is carried out.
	A new estimator of the subspace dimension is proposed. Moreover, for sake of statistical significance, a Kolmogorov-Smirnov test based on the posterior distribution of the principal components is used to refine this estimate.
	The behaviour of the algorithm is first studied on various synthetic examples. Finally,  the proposed BNP dimension reduction approach is shown to be easily yet efficiently coupled with clustering or latent factor models within a unique framework.
\end{abstract}

\begin{keywords}{}
	Bayesian nonparametrics, dimension reduction, distribution on the Stiefel manifold, Indian buffet process.
\end{keywords}





\section{Introduction}\label{sec:introduction}

Dimension reduction (DR) is an ubiquitous preprocessing step in signal processing and statistical data analysis.
It aims at finding a lower dimensional subspace explaining a set of data while minimizing the resulting loss of information. Related interests are numerous, e.g., reducing the impact of noise, data storage, computational time.

Principal component analysis (PCA) permits DR by projecting observations onto a subset of orthonormal vectors. It provides an elegant solution to DR by looking for a $\dimcoef$-dimensional representation of a dataset $\MATobs=\left[\Vobs_{1},\ldots,\Vobs_{N}\right]$ with $\Vobs_{n} \in \mathbb{R}^{\dimobs}$ in an orthonormal basis, referred to as principal components. Given $\dimcoef$, the $\dimcoef$-dimensional subspace spanned by these principal components is supposed to minimize the quadratic reconstruction error of the dataset, see~\citep{Jolliffe1986} for a comprehensive review of PCA. According to one of its standard formulations, PCA can be interpreted as the search of an orthonormal basis $\bfP$ of $\mathbb{R}^{\dimobs}$ such that all matrices formed by the first $\dimcoef$ columns of $\bfP$ and denoted  $\bfP_{:, 1:\dimcoef}$ ensures
\begin{equation} \label{eq:formationPCA}
	\forall \dimcoef\in\left\{1,\dots,\dimobs\right\}, \quad \bfP_{:, 1:\dimcoef} = \operatornamewithlimits{argmax}_{\bfU \in \stiefel{\dimobs}{\dimcoef} } \bfU\transp \MATobs \MATobs\transp \bfU
\end{equation}
where $\stiefel{\dimobs}{\dimcoef}$ is the Stiefel manifold, i.e., the set of ${\dimobs}\times{\dimcoef}$ orthonormal matrices.

However, Eq.~\eqref{eq:formationPCA} does not provide tools to assert the relevance of the selected principal components in expectation over the data distribution.
To fill this gap, \citet{tipping1999} have shown that PCA can be interpreted as a maximum likelihood estimator of latent factors following the linear model
\begin{equation} \label{eq:intro:modelPPCA}
	\forall \nobs\in\left\{1,\dots,\Nobs\right\}, \quad\Vobs_\nobs = \bfW \bfx_\nobs + \boldsymbol{\epsilon}_\nobs
\end{equation}
where $\Vobs_\nobs$ is the observation vector, $\bfW$ is the matrix of latent factors assumed to be Gaussian, $\bfx_\nobs$ is the associated vector of coefficients and $\epsilon_\nobs$ is an isotropic  Gaussian noise. 
If the coefficients $\bfx_\nobs$ are assumed Gaussian, they can be analytically marginalized out thanks to a natural conjugacy property. The resulting marginalized likelihood function $\mathrm{p}(\Vobs\given \bfW,\, \boldsymbol{\epsilon}_\nobs)$ can be expressed in terms of the empirical covariance matrix $\bfY\transp \bfY$ and the hermitian matrix $\bfW\transp\bfW$.
Although no orthogonality constraint is imposed on the latent factors, the resulting marginal maximum likelihood estimator is precisely provided by the singular value decomposition (SVD) of the noise-corrected observation vector: the SVD produces a set of orthogonal vectors.
The subspace can then be recovered using an expectation-maximization (EM) algorithm. One of the main advantages of this so-called probabilistic PCA (PPCA) lies in its ability to deal with non-conventional datasets. For instance, such an approach allows PCA to be conducted while facing missing data or non linearities \citep{Tipping1999b,tipping1999}. Several works have pursued these seminal contributions, e.g., to investigate these non linearities more deeply \citep{Bolton2003,lawrence2005,lian2009} or the robustness of PPCA with respect to the presence of corrupted data or outliers \citep{Archambeau2008,Schmitt2016}.

Several studies have addressed the issue of determining the relevant latent dimension of the data, $\dimcoef$ here.
The PPCA along with its variational approximation proposed by \cite{bishop1999,export:67241_rd} automatically prunes directions associated with low variances, in the spirit of automatic relevance determination \citep{Mackay95}.
Another strategy considers the latent dimension $\dimcoef$ as a random variable within a hierarchical model of the form $f(\bfW | \dimcoef) f(\dimcoef)$ and uses the SVD decomposition of $\bfW$. However, explicit expressions of the associated estimators are difficult to derive.
To bypass this issue, \cite{minka2000_rd} and \cite{Smidl2007} have proposed Laplace and variational approximations of the resulting posteriors, respectively. Solutions approximated by Monte Carlo sampling are even harder to derive since the size of the parameter space varies with $\dimcoef$. \cite{zhang2004} have proposed to use reversible jump Markov chain Monte Carlo (RJ-MCMC) algorithms \citep{green1995} to build a Markov chain able to explore spaces of varying dimensions. Despite satisfying results, this method is computationally very expensive. 

Bayesian nonparametric (BNP) inference has been a growing topic over the past fifteen years, see for instance the review by \cite{muller2013}. Capitalizing on these recent advances of the BNP literature, this work proposes to use the Indian buffet process (IBP) as a BNP prior to deal with the considered subspace inference problem. More precisely, the basis of the relevant subspace and associated representation coefficients are incorporated into a single Bayesian framework called Bayesian nonparametric principal component analysis or BNP-PCA. A preliminary version of this work was presented at ICASSP 2017~\citep{elvira2017icassp}. Following the approach by \cite{Besson2011}, the prior distribution of the principal components is a uniform distribution over the Stiefel manifold. Then, the IBP permits to model the observations by a combination of a potentially infinite number of latent factors. Inheriting from intrinsic properties of BNP, the IBP naturally penalizes the complexity of the model (i.e., the number $K$ of relevant factors), which is a desired behaviour for dimension reduction. In addition, while the IBP still permits to infer subspaces of potentially infinite dimension, the orthogonality constraint imposed to the latent factors enforces their number $\dimcoef$ to be at most $\dimobs$: orthogonality has some regularization effect as well. The posterior of interest is then sampled using an efficient MCMC algorithm which does not require reversible jumps.

Compared to alternative approaches, in particular those relying on RJ-MCMC sampling, the adopted strategy conveys significant advantages. First, although RJ-MCMC is a powerful and generic tool, its implementation needs the definition of bijections between parameter spaces of different sizes. As a consequence, Jacobian matrices contribute to the probability of jumping between spaces of different dimensions. These Jacobian terms are often both analytically and computationally expensive. Within a BNP framework, there are no such Jacobian terms. Monte Carlo sampling of BNP models implicitly realizes trans-dimensional moves since the IBP prior is a distribution on infinite binary matrices. Combined to the conjugacy properties of the IBP, such a formulation permits more efficient Monte Carlo sampling. Then, the use of the IBP and its induced sparsity alleviates the overestimation of the latent dimension coupled with a subsequent pruning strategy followed by other crude approaches. The proposed model also opens the door to a theoretical analysis of the consistency of estimators. Finally, the method is flexible enough to be coupled with standard machine learning (e.g., classification) and signal processing (e.g., signal decomposition) tasks.




\begin{table}
\begin{center}
\begin{tabular}{cl}
	\toprule
	\headTab{Symbol} & \headTab{Description} \\
	\midrule
	$\Nobs$, $\nobs$ & number of observations, with index \\
	$\dimobs$, $\indobs$ & dimension of observations with index \\
	$\dimcoef$, $\indcoef$ & number of latent factors, with index \\
	$\mathcal{P}(\alpha)$ & Poisson distribution with parameter $\alpha$ \\
	\multirow{2}{*}{$ \stiefel{\dimobs}{\dimcoef} $} & set of $\dimobs\times\dimcoef$ matrices $\MATfac$ such that \\
	& \qquad $\MATfac\transp \MATfac = \indmat_\dimcoef$ \\
	$\mathcal{O}_\dimobs$ & The orthogonal group \\
	$\etr$ & $\exp \trace$ \\
	${}_i\mathrm{F}_j$ & Confluent hypergeometric function \\ 
	${}\gamma(a, b)$ & Lower incomplete Gamma function \\ 
	$\langle \cdot, \cdot \rangle$ & Euclidean scalar product \\
	\bottomrule
	\end{tabular}
\end{center}
	\caption{List of symbols \label{tab:notations}}
\end{table}

This paper is organized as follows. Section~\ref{sec:preliminaries} recalls notions on directional statistics and the IBP.
Section~\ref{sec:model} describes the proposed hierarchical Bayesian model for BNP-PCA.
Section~\ref{sec:algo} describes the MCMC inference scheme.
Section~\ref{sec:estimators} defines several estimators and gathers theoretical results on their properties, in particular their (in-)consistency.
Section~\ref{sec:easyExperiment} illustrates the performance of the proposed method on numerical examples.
Concluding remarks are finally reported in Section~\ref{sec:conclusion}. Note that all notations are gathered in Table~\ref{tab:notations}.

\section{Preliminaries} \label{sec:preliminaries}

\subsection{Distribution on the Stiefel Manifold} \label{subsec:preliminaries:stiefel}
The set of $\dimobs \times \dimcoef$ real matrices $\MATfac$ which verify the relation $\MATfac\transp \MATfac = \indmat_\dimcoef$ is called the Stiefel manifold and is denoted $\stiefel{\dimobs}{\dimcoef}$. Note that when $\dimcoef=\dimobs$, The Stiefel manifold $\stiefel{\dimobs}{\dimobs}$ corresponds to the orthogonal group $\calO_\dimobs$. The Stiefel manifold is compact with finite volume 
\begin{equation} \label{eq:preleminaries:vol_stiefel}
	\mathrm{vol}\left( \stiefel{\dimobs}{\dimcoef} \right) = \frac{2^\dimcoef \pi^{\frac{\dimobs \dimcoef}{2}}}{\pi^{\frac{1}{4} \dimcoef(\dimcoef-1)} \prod_{i=1}^\dimobs \Gamma \left( \frac{\dimobs}{2} - \frac{i-1}{2} \right) } .
\end{equation}
Hence, the uniform distribution $\calU_{ \stiefel{\dimobs}{\dimcoef} }$ on the Stiefel manifold is defined by the density with respect to the Lebesgue measure given by
\begin{equation} \label{eq:preliminaries:pdf_unif_stiefel}
	\prob_{\mathrm{U}} \left( \MATfac \right) = \frac{1}{\mathrm{vol}(\stiefel{\dimobs}{\dimcoef})} \indfunc{ \stiefel{\dimobs}{\dimcoef} }{\MATfac}.
\end{equation}
Over the numerous distributions defined on the Stiefel manifold, two of them play a key role in the proposed Bayesian model, namely the \textit{matrix von Mises-Fisher} and the \textit{matrix Bingham} distributions. Their densities with respect to the Haar measure on the Stiefel Manifold have the following form
\begin{align}
	\prob_{\mathrm{vMF}} \left( \MATfac \given \bfC \right) \; = & \;  _0\mathrm{F}_1^{-1} \left( \emptyset, \frac{\dimobs}{2}, \bfC\transp\bfC \right)   \etr \left( \bfC\transp \MATfac \right) \label{eq:density:vmf} \\
	\prob_{\mathrm{B}} \left( \MATfac \given \bfB \right) \; = & \; _1\mathrm{F}_1^{-1} \left(\frac{\dimobs}{2}, \frac{\dimcoef}{2},\bfB \right)  \etr \left( \MATfac\transp \bfB \MATfac \right) \label{eq:density:bingham}
\end{align}
where $\bfC$ is a $\dimobs\times \dimcoef$ matrix, $\bfB$ is a $\dimobs\times\dimobs$ symmetric matrix and $\etr(\cdot)$ stands for the exponential of the trace of the corresponding matrix. The two special functions $_0\mathrm{F}_1$ and $_0\mathrm{F}_0$ are two confluent hypergeometric functions of matrix arguments \citep{Herz1955}.

\subsection{Nonparametric sparse promoting prior} \label{subsec:preliminaries:ibp}



The Indian buffet process (IBP), introduced by \cite{gg11}, defines a distribution over binary matrices with a fixed number $\Nobs$ of columns but a potentially infinite number of rows denoted by $\dimcoef$.
The IBP can be understood with the following culinary metaphor. Let consider a buffet with an infinite number of available dishes. The first customer chooses $K_1 \sim \mathcal{P}(\alpha)$ dishes. The $\nobs$th customer selects the $k$th dish among those already selected with probability $\frac{m_\indcoef}{\nobs}$ (where $m_\indcoef$ is the number of times dish $k$ has been previously chosen) and tries $K_\nobs \sim \mathcal{P}(\frac{\alpha}{\nobs})$ new dishes. Let $\MATibp$ the binary matrix defined by $z_{k,n} = 1$ if the $\nobs$th customer has chosen the $k$th dish, and zero otherwise.
The probability of any realization of $\MATibp$ is called the \emph{exchangeable feature probability function} by~\cite{broderick2013} and is given by
\begin{equation} \label{eq:intro:pdf_ibp}
	\Prob \big[ \MATibp \given \alpha \big] = \frac{\ibpparam^\dimcoef e^{- \ibpparam\sum_\nobs\frac{1}{\nobs}}}{\prod_{i=1}^{2^\Nobs-1} K_i! } \prod_{\indcoef=1}^\dimcoef \frac{ (\Nobs - m_\indcoef)!(m_\indcoef-1)! }{\Nobs!}
\end{equation}
where $K_i$ denotes the number of times a \emph{history} has appeared: the term \emph{history} refers to a realization of the binary vector of size $\Nobs$ formed by the rows $(z_{k,\cdot})$ of $\MATibp$. Thus, there are $2^\Nobs-1$ possibilities. The IBP can also be interpreted as the asymptotic distribution of a beta Bernoulli process where the beta process has been marginalized out \citep{thibaux2007_rd}. A stick-breaking construction has been also proposed by \cite{TehGorGha2007}. 
We emphasize that the IBP of parameter $\alpha$ is a $\alpha$-sparsity promoting prior since the expected number of non-zero coefficient in $\MATibp$ is of order $\alpha N\log N$.

\section{Bayesian nonparametric principal component analysis (BNP-PCA)} \label{sec:model}

This section introduces a Bayesian method called BNP-PCA for dimension reduction that includes the a priori unknown number of underlying components into the model.
The latent factor model and the associated likelihood function are first introduced in Section~\ref{subsec:model:bayesian}. The prior model is described in Section~\ref{subsec:hierarchicalmodel}. A Monte Carlo-based inference scheme will be proposed in Section~\ref{sec:algo}.

\subsection{Proposed latent factor model} \label{subsec:model:bayesian}

Let $\MATobs = \left[\Vobs_1,\dotsc,\Vobs_\Nobs\right]$ denote the $D\times \Nobs$-matrix of observation vectors $\Vobs_\nobs = [\obs{1,\nobs}, \dotsc, \obs{\dimobs,\nobs}]\transp$. For sake of simplicity but without loss of generality, the sample mean vector $\bar{\Vobs} \triangleq \frac{1}{\Nobs}\sum_{\nobs=1}^{\Nobs} \Vobs_{\nobs}$ is assumed to be zero. Data are supposed to live in an unknown subspace of dimension $\dimcoef \leq \dimobs$. The problem addressed here is thus to identify both the latent subspace and its dimension. To this aim, the observation vectors are assumed to be represented according to the following latent factor model
\begin{equation} \label{eq:model:model}
	\forall \nobs\in\left\{1,\dotsc,\Nobs\right\}, \quad \Vobs_\nobs = \MATfac (\Vibp_\nobs \odot \Vcoef_\nobs) + \Vnoise_\nobs
\end{equation}
where $\MATfac  = \left[\Vfac_1,\dotsc,\Vfac_D\right]$ is an orthonormal base of $\R^{\dimobs}$, i.e., $\MATfac\transp \MATfac = \indmat_{\dimobs}$, $\Vibp_\nobs=\left[z_{1,\nobs}, \ldots, z_{D,\nobs}\right]^T$ is a binary vector, $\Vcoef_\nobs=\left[x_{1,\nobs},\dotsc,x_{D,\nobs}\right]^T$ is a vector of coefficients and $\odot$ denotes the Hadamard (term-wise) product. In Eq.~\eqref{eq:model:model}, the additive term $\Vnoise_\nobs$ can stand for a measurement noise or a modeling error and is assumed to be white and Gaussian with variance $\sigma^2$. It is worth noting that the binary variable $z_{k,\nobs}$ ($k\in \left\{1,\dotsc,D\right\}$) explicitly encodes the activation hence the relevance of the coefficient $x_{k,\nobs}$ and of the corresponding direction $\Vfac_k$ for the latent representation. Thus, the term-wise product vectors $\bfs_\nobs \triangleq \Vibp_\nobs \odot \Vcoef_\nobs$ would be referred to as \emph{factor scores} in the PCA terminology. This is the reason why we call this approach Bayesian nonparametric principal component analysis or BNP-PCA.


The likelihood function is obtained by exploiting the Gaussian property of the additive white noise term. The likelihood of the set of $\Nobs$ observed vectors assumed to be a priori independent can be written as
\begin{equation}
	\begin{split}
	f(\MATobs& \given \MATfac, \MATibp, \MATcoef, \noisevar) =  {} (2 \pi \noisevar)^{- \dimobs \Nobs / 2} \\
	 & \qquad \exp \bigg( -\frac{1}{2\noisevar} \sum_{\nobs = 1}^\Nobs \left\| \Vobs_\nobs - \MATfac (\Vibp_\nobs \odot \Vcoef_\nobs) \right\|_2^2 \bigg),
	\end{split}
\end{equation}
where $\MATibp = \left[\Vibp_1,\dotsc,\Vibp_\Nobs\right]$ is the binary activation matrix, $\MATcoef=\left[\Vcoef_1,\dotsc,\Vcoef_\Nobs\right]$ is the matrix of representation coefficients and $\Vert \cdot \Vert_2$ stands for the $\ell_2$-norm.

\subsection{Prior distributions} \label{subsec:hierarchicalmodel}

The unknown parameters associated with the likelihood function are the orthonormal basis $\MATfac$, the binary matrix $\MATibp$, the coefficients $\MATcoef$ and the noise variance $\noisevar$. Let define the corresponding set of parameters as $\thetabf = (\MATfac, \MATibp, \noisevar)$, leaving $\MATcoef$ apart for future marginalization. \\ \par

\noindent \textbf{Orthonormal basis $\MATfac$.} By definition, $\MATfac$ is an orthonormal basis and belongs to the orthogonal group $\mathcal{O}_{\dimobs} $. Since no information is available a priori about any preferred direction, a uniform distribution on $\mathcal{O}_{\dimobs} $ is chosen as a prior distribution on $\MATfac$ whose probability density function (pdf) with respect to the Lebesgue measure is given by Eq.~\eqref{eq:preliminaries:pdf_unif_stiefel}. \\

\noindent \textbf{Indian buffet process $\MATibp$.} Since the observation vectors are assumed to live in a lower dimensional subspace, most of the factor scores in the vectors $\Vibp_\nobs \odot \Vcoef_\nobs$ are expected to be zero. To reflect this key feature, an IBP prior $\mathrm{IBP}(\alpha)$ is assigned to the binary latent factor activation coefficients, as discussed in Section~\ref{subsec:preliminaries:ibp}. The parameter $\alpha$ controls the underlying sparsity of $\MATibp$. Note that the IBP is a prior over binary matrices with a potentially infinite number of rows $\dimcoef$. However any factor model underlied by a matrix $\MATibp$ with $\dimcoef > \dimobs$ will occur with null probability due to to the orthogonality of $\MATfac$. Our purpose is to combine the flexibility of the IBP prior with the search for an orthogonal projector. 


\noindent \textbf{Coefficients $\MATcoef$.} Independent Gaussian prior distributions are assigned to the individual representation coefficients gathered in the matrix $\MATcoef$. This choice can be easily motivated for large $\Nobs$ by the central limit theorem since these coefficients are expected to result from orthogonal projections of the observed vectors onto the identified basis. Moreover, it has the great advantage of being conjugate to make later marginalization tractable analytically (see next section). To reflect the fact that the relevance of a given direction $\Vfac_k$ is assessed by the ratio between the energy of the corresponding representation coefficients in $x_k$ and the noise variance $\sigma^2$, we follow the recommendation of~\cite{punskaya2002_IEEE_TSP} to define the prior variances of these coefficients as multiples of the noise variance through a Zellner's prior
\begin{equation}
    \label{eq:prior_coefficients}
	\forall \indcoef \in \mathbb{N},\quad \Vcoef_{\indcoef} \given \zellner_\indcoef^2, \noisevar \sim \prod_{\nobs=1}^{\Nobs} \calN(0, \zellner_{\indcoef}^2\noisevar ).
\end{equation}
Along this interpretation, the hyperparameters $\delta_k^2$ would correspond to the ratios between the eigenvalues of a classical PCA and the noise variance.\\

\noindent \textbf{Noise variance $\noisevar$.} A non informative Jeffreys' prior is assigned to $\noisevar$
\begin{equation}
	f(\noisevar) \propto \frac{1}{\noisevar} \indfunc{  \mathbb{R}_{+}}{\noisevar}.
\end{equation}
\noindent \textbf{Hyperparameters.} The set of hyperparameters is gathered in $\boldsymbol{\phi} = \left\{ \boldsymbol{\zellner}, \alpha \right\}$ with $\boldsymbol{\zellner}=\left\{\zellner_1^2,\dotsc, \zellner_\dimcoef^2\right\}$. The IBP parameter $\alpha$ will control the mean number of active latent factors while each hyperparameter $\zellner_\indcoef^2$ scales the power of each component $\Vfac_k$ with respect to the noise variance $\noisevar$. In this work, we propose to include them into the Bayesian model and to jointly estimate them with the parameters of interest. This hierarchical Bayesian approach requires to define priors for these hyperparameters (usually referred to as hyperpriors), which are summarized below. \\

\noindent \textit{Scale parameters $\delta_k^2$.} The powers of relevant components are expected to be at least of the order of magnitude of the noise variance. Thus, the scale parameters $\delta_k^2$ are assumed to be a priori independent and identically distributed according to a conjugate shifted inverse gamma (sIG, see Appendix \ref{app:sIG} for more details) distribution defined over $\R_+$ as in \citep{Godsill2010}
\begin{equation}
\label{eq:sIG}
	\begin{split}
	\prob_{\mathrm{sIG}} \big( \zellner_{\indcoef}^2  \given a_\delta, b_\delta \big) = \frac{ b_\delta^{a_\delta} }{\gamma \left(a_\delta, b_\delta \right)} \qquad \qquad \qquad \qquad \\
	\qquad \quad \bigg(\frac{1}{1 + \zellner_{\indcoef}^2}\bigg)^{a_\delta+1} \exp \left( - \frac{b_\delta}{1 + \zellner_{\indcoef}^2} \right) \indfunc{\R_+}{ \zellner_{\indcoef}^2 }
	\end{split}
\end{equation}
where $\gamma(a,b)$ is the lower incomplete gamma function and $a_{\delta}$ and $b_{\delta}$ are positive hyperparameters chosen to design a vague prior, typically $a=1$ and $b=0.1$.
Note that the specific choice $a_{\delta} = b_{\delta}=0$ would lead to a noninformative Jeffreys prior \citep{punskaya2002_IEEE_TSP}. However, this choice is prohibited here since it would also lead to an improper posterior distribution \citep{Robert2007}.

\noindent \textit{IBP parameter $\alpha$.} Without any prior knowledge regarding this hyperparameter, a Jeffreys prior is assigned to $\ibpparam$. As shown in Appendix \ref{app:ibp_jeffrey}, the corresponding pdf is given by
\begin{equation}
	f(\ibpparam) \propto \frac{1}{\ibpparam} \indfunc{  \mathbb{R}_{+}}{\ibpparam}.
\end{equation}

\section{Inference: MCMC algorithms} \label{sec:algo}

The posterior distribution resulting from the hierarchical Bayesian model for BNP-PCA described in Section~\ref{sec:model} is too complex to derive closed-form expressions of the Bayesian estimators associated with the parameters of interest, namely, the orthonormal matrix $\MATfac$ and the binary matrix $\MATibp$ selecting the relevant components. To overcome this issue, this section introduces a MCMC algorithm to generate samples asymptotically distributed according to the posterior distribution of interest. It also describes a practical way of using these samples to approximate Bayesian estimators. \par

\subsection{Marginalized posterior distribution}

\par A common tool  to reduce the dimension of the  space to be explored while resorting to MCMC consists in marginalizing the full posterior distribution with respect to some parameters. In general, the resulting collapsed sampler exhibits faster convergence and better mixing properties \citep{vanDyk2008}. Here, taking benefit from the conjugacy property induced by the prior in Eq.~\eqref{eq:prior_coefficients}, we propose to marginalize over the coefficients $\MATcoef$ according to the following hierarchical model
\begin{equation}
	f\left( \paramvect,\hypervect  \given \MATobs\right) = \int_{\R^{\dimobs\Nobs} } f\left( \MATobs \given \paramvect,\MATcoef\right) f\left(\paramvect,\MATcoef \given \hypervect \right) f\left(\hypervect\right) \mathrm{d} \MATcoef.
\end{equation}
Calculations detailed in Appendix~\ref{app:marginalize_coeffs} lead to the marginalized posterior distribution
\begin{equation} \label{eq:posterior_marginalized}
	\begin{split}
		\
		f \big( \paramvect, \hypervect& \given \MATobs \big)  \; = {}
		\left(\frac{1}{2\pi\noisevar}\right)^{\frac{\Nobs\dimobs}{2}} \exp \left( -\frac{\mbox{tr}(\MATobs\transp\MATobs)}{2\noisevar}  \right) \\
		\times \; & \prod_{\indcoef=1}^{\dimcoef} \exp \left[ \frac{1}{2 \noisevar } \frac{\zellner_\indcoef^2}{1 + \zellner_\indcoef^2} \sum_{\nobs} \ibp{\indcoef, \nobs} \prodYPscalar \right]  \\
		\times \; & \prod_{\indcoef=1}^{\dimcoef} \left(\frac{1}{1 + \zellner_\indcoef^2}\right)^{a_\delta + \frac{1}{2} \sum_{\nobs} \ibp{\indcoef, \nobs} } \exp \left( - \frac{b_\delta}{1 + \zellner_\indcoef^2} \right) \\
		\times \; & \frac{ \ibpparam^{\dimcoef} }{ \prod_\indcoef \dimcoef_{\nobs}! } e^{-\ibpparam \sum_\nobs \frac{1}{i}} \prod_\indcoef \frac{(\Nobs - m_{\indcoef})! \; (m_\indcoef-1)!}{\Nobs!}
		\\
		\times \; &  \left( \frac{ b_\delta^{a_\delta} }{\gamma \left(a_\delta, b_\delta \right)} \right)^K \left( \noisevar \right)^{-1} \ibpparam^{-1} \indfunc{ \mathbb{U}_{\dimobs} }{ \MATfac } .
	\end{split}
\end{equation}
Note that, since the main objective of this work is to recover a lower dimensional subspace (and not necessarily the representation coefficients of observations on this subspace), this marginalization goes beyond a crude sake of algorithmic convenience. It is also worth noting that it is still possible to marginalize with respect to the scale parameters $\zellnervar_\indcoef$. This finding will be exploited in Section~\ref{subsec:model:inference}.

\subsection{MCMC algorithm} \label{subsec:model:inference}

The proposed MCMC algorithm includes the sampling of $\MATibp$ described in Algo.~\ref{alg:singleton} and is summarized in Algo.~\ref{alg:gibbs}. It implements a Gibbs sampling to generate samples asymptotically distributed according to Eq.~\eqref{eq:posterior_marginalized}. This section derives the conditional distributions associated with the parameters and hyperparameters. \par

\def\mkn{m_{\indcoef}(\nobs)}

\noindent \textbf{Sampling the binary matrix $\MATibp$.} The matrix $\MATibp$ is updated as suggested by \cite{knowles2011_rd}, see Algo.~\ref{alg:singleton}.
Let $\mkn = \sum_{i\neq n} z_{k,i}$ the number of observations different from $\nobs$ which actually use the direction $\Vfac_\indcoef$, \textit{i.e.} verifying $\ibp{\indcoef, i}=1$ for $i \neq \nobs$.
Directions for which $\mkn = 0$ are called \emph{singletons} and the corresponding indices are gathered in a set denoted by $\calJ_n$. Conversely, directions for which $\mkn > 0$ are referred to as \emph{non-singletons} and the set of corresponding indices is denoted by $\calI_n$. Note that $\forall n,\: \calI_n \cup \calJ_n = \{1, \dots, \dimcoef\}$.
First, non-singletons are updated through a Gibbs sampling step where $\zellnervar_\indcoef$ can be marginalized out. One has
\begin{equation} \label{eq:model:posterior:ibp:normal}
	\begin{split}
		&\frac{\Prob \left( \ibpkn=1 \given \MATobs, \MATfac, \noisevar \right) }{\Prob \left( \ibpkn=0 \given \MATobs, \MATfac, \noisevar \right)} = \\
		\quad &\frac{ \mkn }{ \Nobs - 1 - \mkn }
		\exp \left(\frac{1}{2\noisevar} \scalarProdSquare{\Vfac_\indcoef}{\Vobs_\nobs}\right) \times \\
		& \frac{ \gamma\left(a + 1, b + \frac{1}{2\noisevar} \scalarProdSquare{\Vfac_\indcoef}{\Vobs_\nobs} \right) }{ \gamma(a, b) } \; \frac{b^a}{ \left(b + \frac{1}{2\noisevar} \scalarProdSquare{\Vfac_\indcoef}{\Vobs_\nobs} \right)^{a + 1} }
	\end{split}
\end{equation}
where
\begin{align}
	a {}={} & a_\delta + \sum_{i =1, i\neq \nobs}^\Nobs  \ibp {\indcoef, i} \\
	b {}={} & b_\delta + \frac{1}{2\noisevar} \sum_{i =1, i\neq \nobs}^\Nobs  \ibp {\indcoef, i} \scalarProdSquare{\Vfac_\indcoef}{\Vobs_i} .
\end{align}

\begin{figure}
	\centering
	\includegraphics[width=\columnwidth]{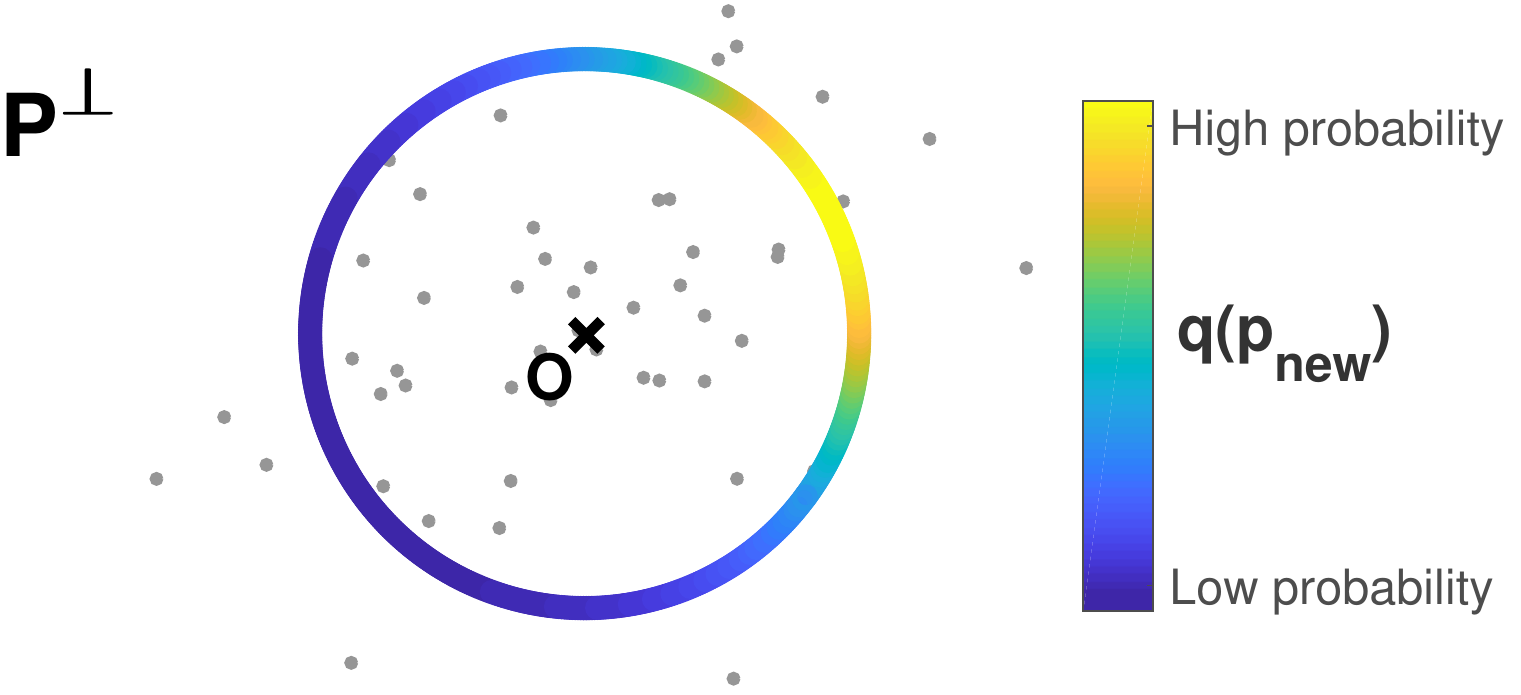}
	\caption{An example of the proposition of new directions when $\MATfac^\perp$ is $2$ dimensional and $\kappa^\star=1$.
	Gray dots are observations projected on $\MATfac^\perp$.
	The colored circle is the pdf of the proposal distribution when $\kappa^\star=1$.}
	\label{fig:ajoutAtome}
\end{figure}

A Metropolis Hastings step is used to update singletons. Let $\kappa = \mathrm{card}\big(\calJ_n\big)$ be the number of singletons, $\MATfac_{\calI_n}$ and $\MATfac_{\calJ_n}\triangleq [\widetilde{\Vfac}_1, \dots \widetilde{\Vfac}_\kappa]$ be the sub-matrices of $\MATfac$ with indices in $\calI_n$ and $\calJ_n$, respectively. The move goes from a current state $\mathbf{s}=\left\{ \kappa, \MATfac_{\calJ_n} \right\}$ to a new state $\mathbf{s}^{\star}=\left\{ \kappa^{\star}, \MATfac_{\calJ_n^{\star}}^{\star} \right\}$.
The proposal distribution in the Metropolis-Hastings step is chosen according to the conditional model
\begin{equation}
	\label{eq:model:full_proposal}
	\mathrm{q}\left( \kappa^{\star}, \MATfac_{\calJ_n^{\star}}^{\star} | \kappa, \MATfac_{\calJ_n}, \MATfac_{\calI_n} \right)
	= \mathrm{q}\left( \kappa^{\star} | \MATfac_{\calI_n} \right) \mathrm{q}\left( \MATfac_{\calJ_n^{\star}}^{\star} | \kappa^{\star}, \MATfac_{\calI_n} \right) .
\end{equation}

\begin{algorithm}
	\caption{Detailed procedure to sample $\MATibp$}
	\label{alg:singleton}
	\KwIn{$\MATobs, \MATibp^{(t-1)}, \MATfac^{(t-1)}, \noisestd^{2\,(t-1)}, \delta_\indcoef^{2\,(t-1)}$}
	\BlankLine
	Let $\MATfac^{(t-\frac{1}{2})} = \MATfac^{(t-1)}$ \;
	\For{$\nobs \gets 1$ \KwTo $\Nobs$} {
		\tcp{Identify shared directions and singletons}
		\For{$\indcoef \gets 1$ \KwTo $\dimcoef$} {
			Compute $\mkn = \sum_{l\neq \nobs} \ibp{\indcoef, l}^{(t-1)}$\;
		}
		Let $\calI_n\; \triangleq \big\{\indcoef, \; \mkn > 0 \big\}$  \;
		Let $\calJ_n \triangleq \big\{\indcoef, \; \mkn = 0 \big\}$  \;
		\tcp{Sample shared directions}
		\ForEach{$\indcoef \text{ in } \calI$} {
			Sample $\ibpkn^{(t)}$ according to Eq.~\eqref{eq:model:posterior:ibp:normal} \;
		}

		\tcp{Define set of singletons}
		Let $\kappa\; \triangleq \mathrm{card} \big(\calJ_n \big)$  \;
		Let $\MATfac_{\calI_n} \triangleq \big[ \Vfac_\indcoef, \indcoef \in \calI_n \big]$. \;
		\tcp{Sample new number of singletons}
		Sample $\kappa^{\star}$ according to Eq.~\eqref{eq:model:proposalKappa} \;
		\tcp{Sample iteratively new directions}
		Let $\MATfac_{\calJ_n^\star}^\star = [\,]$ \;
		\For{$\indcoef \gets 1$ \KwTo $\kappa^{\star}$} {
			Let $\bfN$ an orthonormal basis of $\big[\MATfac_{\calI_n}, \MATfac_{\calJ_n^\star}^\star\big]^{\perp}$\;
			Let $\bfv$ the first eigenvector of $\bfN\transp \MATobs \MATobs\transp \bfN$ and $\lambda$ its associated eigenvalue \;
			Sample $\Vfac_\indcoef^{\star} \sim \mathrm{vMF} (\bfv, \lambda)$ \;
			Update $\MATfac_{\calJ_n^\star}^\star = \Big[ \MATfac_{\calJ_n^\star}^\star, \Vfac_\indcoef^{\star} \big]$ \;
		}
		\tcp{Metropolis Hasting step}
		Compute $u_{\mathbf{s} \rightarrow \mathbf{s}^{\star}}$ according to Eq.~\eqref{eq:model:posterior:acceptation_singleton} \;
		Sample $u \sim \mathcal{U}([0, 1])$ \;
		\If{$ u \leq u_{\mathbf{s} \rightarrow \mathbf{s}^{\star}} $} {
			Set $\mathbf{s} = \mathbf{s}^{\star}$ and update $\MATfac^{(t-\frac{1}{2})}$ \;
			Update $\dimcoef = \dimcoef - \kappa + \kappa^{\star}$ \;
		}
	}
	\BlankLine
	\KwOut{$\MATibp^{(t)}, \MATfac^{(t-\frac{1}{2})}$.} 
\end{algorithm}

Note that the proposal distribution Eq.\eqref{eq:model:full_proposal} is conditioned to $\MATfac_{\calI_n}$.
This choice is legit since the goal is to sample the conditional distribution $f(\MATibp, \MATfac_{\calJ_n} | \MATobs, \MATfac_{\calI_n}, \noisevar)$.
Close to the structure of the IBP, we propose to use for $\mathrm{q}\left( \kappa^{\star} | \MATfac_{\calI_n} \right)$ a Poisson distribution $\mathcal{P}(\ibpparam)$ combined with a mass $\mathrm{card}\big(\calI_n\big) / \dimobs$ on $0$:
\begin{equation} \label{eq:model:proposalKappa}
	 \mathrm{q}\left( \kappa^{\star} | \MATfac_{\calI_n} \right) = \frac{\mathrm{card}\big(\calI_n\big)}{\dimobs} \delta_0(\kappa) + \left(1 - \frac{\mathrm{card}\big(\calI_n\big) }{\dimobs}\right) \mathcal{P}(\alpha)
\end{equation}
Recall that $\mathrm{card}\big(\calI_n\big)$ is the number of coefficients $z_{k,n}=1$ of the $n$th column of $\MATibp$ that are not singletons (singletons$\Leftrightarrow z_{k,n}=1$ \& $\forall i\neq n$, $z_{k,i}=0$). Once $\kappa^{\star}$ has been chosen, a new matrix $\MATibp^{\star}$ is formed by concatenating columns with indices in $\calI_n$ and $\kappa^\star$ rows with zeros everywhere except ones at the $\nobs^{\mathrm{th}}$ position (or column).

For $\MATfac_{\calJ_n}$, a von Mises-Fisher distribution $\mathrm{vMF}(\bfC)$, see Section~\ref{subsec:preliminaries:stiefel}, is chosen as a proposal. The columns of $\bfC$ are built from the $\kappa$ first eigenvectors of the projection of $\MATobs\MATobs^T$ on the orthogonal of $\MATfac_{\calI_n}$, i.e. the span of singletons and unused directions.
The columns of $\bfC$ are then multiplied by their corresponding eigenvalues. Figure~\ref{fig:ajoutAtome} illustrates the procedure to add one new direction, $\kappa^\star = 1$, on a simple example in dimension 2.

\noindent The move $\mathbf{s} \rightarrow \mathbf{s}^{\star}$ is then accepted with probability
\begin{equation} \label{eq:model:posterior:acceptation_singleton}
	u_{\mathbf{s} \rightarrow \mathbf{s}^{\star}}
	= \frac{ f \left( \MATobs \given \MATfac_{\calI_n}, \MATfac_{\calJ_n}^\star, \MATibp^\star, \noisevar \right) }{ f \left( \MATobs \given \MATfac_{\calI_n}, \MATfac_{\calJ_n}, \MATibp, \noisevar \right) }
	\frac{ \prob \left( \mathbf{s}^{\star} \right) \qprob(\mathbf{s} \given \mathbf{s}^{\star}, \MATfac_{\calI_n})}{\prob \left( \mathbf{s} \right) \qprob(\mathbf{s}^{\star} \given \mathbf{s}, \MATfac_{\calI_n})}
\end{equation}
The full procedure is summarized in Algo.~\ref{alg:gibbs}.

\begin{algorithm}[th]
	\caption{Gibbs sampler} 
	\label{alg:gibbs}
	\KwIn{$\MATobs$, $n_{\mathrm{mc}}$}
	\BlankLine
	\For{$t \gets 1$ \KwTo $n_{\mathrm{mc}}$} {
			\tcp{Update directions and handle singletons}
			Sample $\MATibp^{(t)}$ and $\MATfac^{(t-\frac{1}{2})}$ as described in Alg.~\ref{alg:singleton} \;

			\tcp{Update activated directions and weights.}
			\For{$\indcoef \gets 1$ \KwTo $\dimcoef$} {
				Compute $\bfN_{K\backslash \indcoef}$, a basis of $\MATfac_{\backslash \indcoef}^{\perp\, (t-\frac{1}{2})}$ \;
				Sample $\bfv_\indcoef$ according to Eq.~\eqref{eq:posterior:Vk} \;
				Set $\Vfac_\indcoef^{(t)} = \bfN_{K\backslash \indcoef} \bfv_\indcoef$ \;
				Sample $\zellner_\indcoef^{2\,(t)}$ according to Eq.~\eqref{eq:posterior:deltaX} \;
			}
		\tcp{Update hyperparameters.}
		Sample $\noisestd^{2\,(t)}$ according to Eq.~\eqref{eq:model:posterior:noisevar} \;
		Sample $\ibpparam^{(t)}$ according to Eq.~\eqref{eq:model:posterior:alpha} \;

	}
	\BlankLine
	\KwOut{A collection of samples $\left\{ \MATfac^{(t)}, \MATibp^{(t)}, \zellner_\indcoef^{2\,(t)}, \noisestd^{2\,(t)}, \ibpparam^{(t)} \right\}_{t=n_{\mathrm{burn}}+1}^{n_{mc}}$ asymptotically distributed according to Eq.~\eqref{eq:posterior_marginalized}.}
\end{algorithm}

\noindent \textbf{Sampling the orthonormal basis $\MATfac$.}

\noindent Let $\mathcal{A} \subset \left\{1,\dotsc,D\right\} $ denote the set of $K$ indices corresponding to the active directions in $\MATfac$, i.e., the $K$ columns of $\MATfac$ actually used by at least one observed vector: $\forall k\leq\dimcoef$, $\exists n$ s.t. $z_{k,n}=1$. Matrix $\MATfac$ can be split into $2$ parts $\MATfac = [\MATfac_{\mathcal{A}},\MATfac_{\bar{\mathcal{A}}}]$. The matrix $\MATfac_{\mathcal{A}}$ features the $K$ active directions and $\MATfac_{\bar{\mathcal{A}}}$ the $(D-K)$ unused components. Let $\MATfac_{\mathcal{A} \backslash \indcoef}$ denote the matrix obtained by removing the column $\Vfac_k$ from $\MATfac_{\mathcal{A}}$ and $\bfN_{\mathcal{A} \backslash\indcoef}$ a matrix whose $(D-K+1)$ columns form an orthonormal basis for the orthogonal of $\MATfac_{\mathcal{A}\backslash\indcoef}$.
Since $\Vfac_\indcoef\in\MATfac_{\mathcal{A} \backslash \indcoef}^\perp$ it can be written as $\Vfac_\indcoef = \bfN_{\mathcal{A} \backslash \indcoef} \bfv_\indcoef$.  Since the prior distribution of $\MATfac$ is uniform on the orthogonal group $\mathcal{O}_{\dimobs}$, $\bfv_k$ is uniform on the $(\dimobs - \dimcoef+1)$-dimensional unit sphere \citep{h07_rd}. By marginalizing ${\MATfac}_{\bar{\mathcal{A}}}$, one obtains
\begin{equation} \label{eq:posterior:Vk}
	\begin{split}
		f(\bfv_{\indcoef} \given \MATobs, \MATfac_{\calA\backslash \indcoef}, \MATibp, \zellnervar_{\indcoef}, \noisevar) \propto \qquad \qquad \qquad \qquad \qquad \qquad \\
		\displaystyle{\exp\left(\frac{1}{2\noisevar} \frac{\zellnervar_\indcoef}{1+\zellnervar_\indcoef} \bfv_{\indcoef}\transp \bfN_{\calA \backslash \indcoef}\transp \left( \sum_{\nobs=1}^\Nobs \ibpkn \Vobs_\nobs\Vobs_\nobs\transp \right) \bfN_{\calA \backslash \indcoef} \bfv_\indcoef  \right)}
	\end{split}
\end{equation}
which is a Bingham distribution on the $(D-K+1)$-unit sphere, see Section~\ref{subsec:preliminaries:stiefel}. As a consequence,
\begin{equation} \label{eq:posterior:Pk}
	\begin{split}
		f(\Vfac_\indcoef  \given \MATobs, \MATfac_{\calA\backslash \indcoef}, \MATibp, \zellnervar_{\indcoef}, \noisevar) \propto \qquad \qquad \qquad \qquad \qquad \qquad \\
		\displaystyle{\exp\left(\frac{1}{2\noisevar} \frac{\zellnervar_\indcoef}{1+\zellnervar_\indcoef} \Vfac_\indcoef\transp \left( \sum_{\nobs=1}^\Nobs \ibpkn \Vobs_\nobs\Vobs_\nobs\transp \right)\Vfac_\indcoef  \right)}
	\end{split}
\end{equation}


\noindent \textbf{Sampling the scale parameters $\zellnervar_{\indcoef}$.} The posterior distribution of $\zellner_\indcoef^2$ for all $\indcoef$, can be rewritten as
\begin{equation} \label{eq:posterior:deltaX}
	\begin{split}
	f \left( \zellner_\indcoef^2  \given \MATfac, \MATibp, \noisevar \right) {} \propto
	\left(\frac{1}{1 + \zellner_\indcoef^2}\right)^{a_{\zellner} + \frac{1}{2} \sum_{\nobs} \ibp{\indcoef, \nobs} + 1} \qquad \\
	\quad \exp \left[ - \frac{1}{1 + \zellnervar_{\indcoef}} \bigg( b_\zellner + \frac{1}{2 \noisevar } \sum_{\indcoef, \nobs} \ibp{\indcoef, \nobs} \scalarProdSquare{\Vfac_{\indcoef}}{\Vobs_{\nobs} } \bigg) \right] .
	\end{split}
\end{equation}
which is a shifted Inverse Gamma distribution.

\noindent \textbf{Sampling the noise variance $\noisevar$.} By looking carefully at \eqref{eq:posterior_marginalized}, one obtains 
\begin{equation} \label{eq:model:posterior:noisevar}
	\begin{split}
		\noisevar \given \MATobs, \MATibp, \MATfac, \boldsymbol{\zellner} \sim \mathcal{IG} \bigg( \frac{\Nobs \dimobs}{2}, \qquad \qquad \qquad \\
		{} \frac{1}{2} \trace\left(\MATobs\MATobs\transp\right) - \sum_{\indcoef, \nobs} \frac{1}{2}\frac{\zellner_\indcoef^2}{1 + \zellner_\indcoef^2} \ibpkn \scalarProdSquare{\Vobs_\nobs}{\Vfac_\indcoef} \bigg) .
	\end{split}
\end{equation}

~\\
\noindent \textbf{Sampling the IBP parameter $\ibpparam$.} The conditional posterior distribution of $\ibpparam$ is gamma distributed
\begin{equation} \label{eq:model:posterior:alpha}
	\alpha \given \MATobs, \MATibp {} \sim
	\mathcal{G} \left( \dimcoef, \sum_{n=1}^N \frac{1}{n} \right) .
\end{equation}
Algo.~\ref{alg:gibbs} describes the full sampling procedure.




\section{Estimators: theoretical properties} \label{sec:estimators}


Since one motivation of the proposed BNP-PCA approach is its expected ability to identify a relevant number of degrees of freedom of the proposed model, this section focuses on this aspect. Section~\ref{subsec:estimators:behaviour_pk} derives theoretical results concerning the marginal maximum a posteriori (MAP) estimator of $\dimcoef$ associated with the proposed IBP-based model. In particular, Theorem~\ref{th:inconsistance} apparently brings some bad news by showing that this estimator is not consistent when the parameter $\alpha$ of the IBP is fixed. Similar results have been reported by \cite{chen2016} on an empirical basis only. Note that our approach considers $\alpha$ as an unknown parameter as well, which may explain the good behaviour observed experimentally in Section~\ref{sec:easyExperiment}. Section~\ref{subsec:selecting_true_K} proposes an efficient way to select the right number of components based on simple statistical tests. Section~\ref{subsec:model:estimators} deals with estimators of other parameters.

\subsection{Posterior distribution of the subspace dimension} \label{subsec:estimators:behaviour_pk}

The consistency of Dirichlet process mixture models (DPMMs) for Bayesian density estimation has been widely studied, see \cite{ghosal2009} and references therein. For instance, posterior consistency of such DPMMs with a normal kernel has been obtained by \cite{ghosal1999}.
While such results tend to motivate the use of nonparametric priors, a certain care should be paid regarding the behaviour of any posterior distribution. For instance, \cite{mccullagh2008} have provided both experimental and analytical results about the ability of DPMMs to identify and separate two clusters.
More recently, \cite{miller2013,miller2014} have shown that the posterior distribution of the number of clusters of DPMMs and Pitman-Yor process mixture models are not consistent. When the number of observations tends to infinity, the marginal posterior does not concentrate around any particular value, despite the existence of concentration rates. Fewer results are available when an IBP is used, see for instance \cite{chen2016} where posterior contraction rates are established for phylogenetic models.

The following theorem shows that the marginal MAP estimator of the number of components $K$ is not consistent when conditioned upon (fixed) $\alpha$.
\begin{theorem} \label{th:inconsistance}
	Let $\MATobs_\Nobs=\left[\Vobs_1, \ldots, \Vobs_\Nobs\right]$ denote a matrix of $\Nobs$  $\dimobs$-dimensional observations. Let $\dimcoef_\Nobs$ denotes the random variable associated with the latent subspace dimension of the model described in Section~\ref{sec:model}. Then, the two following assertions
	\begin{align}
		\forall \indcoef < \dimobs \quad & \underset{\Nobs\rightarrow\infty}{\lim\sup} \;\; \mathrm{P}\big[ \dimcoef_N = k \mid \MATobs_\Nobs, \alpha \big] &&{}<{} 1 \label{eq:th:consistance:estimator:K_k} \\
		&\underset{\Nobs\rightarrow\infty}{\lim\sup} \;\; \mathrm{P}\big[ \dimcoef_\Nobs = \dimobs \mid \MATobs_\Nobs, \alpha \big] &&{}>{} 0 \label{eq:th:consistance:estimator:K_D}
	\end{align}
	are true.
\end{theorem}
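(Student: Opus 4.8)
The plan is to write the marginal posterior of the dimension as a normalised ratio. For $k\in\{0,\dots,\dimobs\}$ let $g_\Nobs(k)$ be the integral of the marginalised posterior \eqref{eq:posterior_marginalized} over all $(\MATfac,\MATibp,\noisevar,\boldsymbol{\zellner})$ whose binary matrix $\MATibp$ has exactly $k$ active rows, so that $\mathrm{P}[\dimcoef_\Nobs=k\mid\MATobs_\Nobs,\alpha]=g_\Nobs(k)\big/\sum_{j=0}^{\dimobs}g_\Nobs(j)$; the orthogonality of $\MATfac$ caps the support at $\dimobs$, hence the sum is finite. Both assertions then follow from control of the single ratio $\rho_\Nobs(k)\triangleq g_\Nobs(\dimobs)/g_\Nobs(k)$: since $\mathrm{P}[\dimcoef_\Nobs=k]\le(1+\rho_\Nobs(k))^{-1}$, a lower bound $\liminf_\Nobs\rho_\Nobs(k)>0$ yields \eqref{eq:th:consistance:estimator:K_k}, while $\mathrm{P}[\dimcoef_\Nobs=\dimobs]\ge(\dimobs+1)^{-1}$ as soon as $\dimobs$ is a posterior mode, so it suffices that $\rho_\Nobs(k)\ge1$ for all $k$ infinitely often to obtain \eqref{eq:th:consistance:estimator:K_D}.

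I would then estimate $g_\Nobs(k)$ by separating \eqref{eq:posterior_marginalized} into three asymptotically decoupled contributions. First, the IBP factor: marginally, the number of active features of an $\mathrm{IBP}(\alpha)$ is Poisson with mean $\alpha\sum_{n=1}^\Nobs n^{-1}\sim\alpha\log\Nobs$, so once truncated to $\{0,\dots,\dimobs\}$ by orthogonality this prior concentrates at the cap and contributes a factor growing like $(\alpha\log\Nobs)^{\dimobs-k}$ to $\rho_\Nobs(k)$. Second, the integral over each scale $\zellner_\indcoef^2$ is a shifted-incomplete-gamma integral which I would evaluate at its saddle; for a direction carrying per-sample energy $\lambda=\Vfac_\indcoef\transp\MATobs\MATobs\transp\Vfac_\indcoef/\Nobs$ this produces the Gaussian-evidence gain $\exp[\tfrac{\Nobs}{2}\,\psi(\lambda/\noisevar)]$ with $\psi(x)=x-1-\log x\ge0$. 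Third, the direction integral over the Stiefel manifold is a matrix-Bingham normalising constant ${}_1\mathrm{F}_1$ whose logarithm, the exponent being of order $\Nobs$, localises on the leading eigenvectors of $\MATobs\MATobs\transp$. Collecting these, $\log g_\Nobs(k)$ equals to leading order $\tfrac{\Nobs}{2}\sum_{j=1}^{k}\psi(\lambda_j/\noisevar)$ plus the truncated-Poisson term, where $\lambda_1\ge\dots\ge\lambda_\dimobs$ are the eigenvalues of $\tfrac1\Nobs\MATobs\MATobs\transp$.

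Since $\psi$ vanishes only at $x=1$ and is positive elsewhere, any ``signal'' direction whose eigenvalue stays bounded above $\noisevar$ injects an exponentially large factor into $\rho_\Nobs(k)$ and is retained with overwhelming probability; these directions alone force $\rho_\Nobs(k)\to\infty$ whenever $k$ is below the signal rank, settling \eqref{eq:th:consistance:estimator:K_k} in that range. The decisive and delicate regime is that of the smallest, noise-level eigenvalues $\lambda_j\approx\noisevar$, for which $\psi(\lambda_j/\noisevar)\to0$: the leading exponential gain degenerates and the fate of these directions is decided by the truncated-Poisson push $(\alpha\log\Nobs)^{\dimobs-k}$ against the $O(1)$ spectral fluctuations $\tfrac{\Nobs}{2}\psi(\lambda_j/\noisevar)\sim\tfrac{\Nobs}{4}(\lambda_j/\noisevar-1)^2$ together with the sub-exponential Laplace and Stiefel corrections. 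These competing non-exponential terms neither vanish nor stabilise as $\Nobs$ grows, which is precisely why the conclusion is of $\limsup$ type: along the subsequences where the balance tips towards inclusion, all of directions $k+1,\dots,\dimobs$ are retained, keeping $\rho_\Nobs(k)$ bounded below and making $\dimobs$ a posterior mode infinitely often.

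I expect the main obstacle to be the rigorous, uniform-in-$k$ asymptotics of the Stiefel-manifold / matrix-hypergeometric integral together with the careful bookkeeping of the sub-exponential factors at the noise level, where the saddle of the $\zellner_\indcoef^2$-integral migrates to the boundary $\zellner_\indcoef^2=0$ and the Laplace approximation must be replaced by a boundary estimate. Quantifying the non-degeneracy of the noise-level eigenvalue fluctuations is a secondary difficulty, but only a crude two-sided control of the spectrum of $\tfrac1\Nobs\MATobs\MATobs\transp$ near $\noisevar$ is needed, since the weak $\limsup$ statements do not require identifying a limit.
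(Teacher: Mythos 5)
Your opening reduction is sound: writing $\mathrm{P}[\dimcoef_\Nobs=k\mid\MATobs_\Nobs,\alpha]=g_\Nobs(k)/\sum_{j}g_\Nobs(j)$ and controlling $\rho_\Nobs(k)=g_\Nobs(\dimobs)/g_\Nobs(k)$ would indeed deliver both assertions. The gap is that everything you propose for actually bounding $\rho_\Nobs(k)$ is either unsubstantiated or would fail. First, your ``three asymptotically decoupled contributions'' do not decouple: in $g_\Nobs(k)=\sum_{\MATibp}\mathrm{P}[\MATibp\mid\alpha]\int f(\MATobs\mid\MATibp,\cdot)$ the IBP prior and the likelihood are tied together through the activation counts $m_\indcoef=\sum_\nobs z_{\indcoef,\nobs}$. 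The per-direction evidence gain $\exp[\tfrac{\Nobs}{2}\psi(\lambda/\noisevar)]$ you write assumes a fully active row ($m_\indcoef\approx\Nobs$), for which the IBP prior weight per feature is of order $\alpha/\Nobs$, i.e.\ it \emph{decreases} like $\Nobs^{-1}$ per added feature; the $(\alpha\log\Nobs)^{\dimobs-k}$ ``push toward the cap'' comes from aggregating over very sparse rows whose likelihood contribution is entirely different. These two factors are carried by disjoint regions of the configuration space and cannot be multiplied as if independent, and since assertion \eqref{eq:th:consistance:estimator:K_D} rests entirely on that product, this is a fatal miscount rather than a technicality. Second, the regime you correctly identify as decisive (noise-level eigenvalues, where the exponential gain degenerates) is exactly the content of the theorem, and your text describes the competition without resolving it: nothing in the sketch shows that $\rho_\Nobs(k)$ stays bounded below, let alone exceeds $1$ infinitely often. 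Third, the theorem is stated for an arbitrary sequence of observations, whereas your route needs the spectrum of $\Nobs^{-1}\MATobs_\Nobs\MATobs_\Nobs\transp$ to admit a stable signal/noise split relative to a fixed $\noisevar$ (itself an integrated parameter), plus uniform Laplace asymptotics for the Stiefel and shifted-inverse-gamma integrals that you yourself flag as open.

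The paper's proof avoids all of this with a short, non-asymptotic, data-uniform argument in the spirit of Miller and Harrison. For every binary matrix $\MATibp$ with $k$ rows and every $j\in\{1,\dots,\Nobs\}$, let $\MATibp'(j)$ append a row with a single $1$ in column $j$. Lemma~\ref{lem:app:consistance:prior} bounds the prior drop $\mathrm{P}[\MATibp\mid\alpha]/\mathrm{P}[\MATibp'(j)\mid\alpha]$ by $O(\Nobs)$, and Lemma~\ref{lem:app:consistance:majorationLLK} bounds the marginal-likelihood drop by a constant $\kappa$ depending only on $(a_\delta,b_\delta)$: the new exponential factor is at least $1$ and the new $\delta^2$ integrates to a fixed constant. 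Averaging over the $\Nobs$ possible columns $j$ recovers the lost factor $\Nobs$ and yields $f(\MATobs_\Nobs,\dimcoef_\Nobs=k\mid\alpha)\leq C\,f(\MATobs_\Nobs,\dimcoef_\Nobs=k+1\mid\alpha)$ with $C$ independent of $\Nobs$ and of the data; both assertions then follow from Bayes' rule, with no spectral asymptotics and no Laplace approximations. If you wish to salvage an evidence-based route you would at minimum need to restate the theorem under assumptions on the data-generating process and supply the boundary estimates you defer; the singleton-insertion argument is both simpler and strictly more general.
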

\begin{proof}
See Appendix~\ref{app:inconsistency}.
\end{proof}
As discussed in the proof, Eq.~\eqref{eq:th:consistance:estimator:K_k} can be extended to a wider range of models, while Eq.~\eqref{eq:th:consistance:estimator:K_D} results from the orthogonality constraint. Up to our knowledge, no similar results have been derived for the IBP. We emphasize that Theorem~\ref{th:inconsistance} does not claim that the marginal MAP estimator of the subspace dimension defined as
\begin{equation}
\label{eq:K_mMAP_alpha}
  \widehat{K}_{\mathrm{mMAP},\alpha} = \operatornamewithlimits{argmax}_{k \in \{0,\dotsc,D\}} \mathrm{P}\big[ \dimcoef = k \mid \MATobs_\Nobs, \alpha \big]
\end{equation}
is biased or irrelevant. However, a corollary of Eq. \eqref{eq:th:consistance:estimator:K_k} is that this estimator is not consistent. This can be explained by a certain leakage of the whole mass towards the probability of having $\dimcoef = \dimobs$, as shown by Eq. \eqref{eq:th:consistance:estimator:K_D}.
To overcome this issue, instead of resorting to the conventional marginal MAP estimator of the dimension, an alternative strategy will be proposed in Section \ref{subsec:selecting_true_K} to identify the dimension of the relevant subspace.

By considering an additional hypothesis on the distribution of the measurements $\MATobs_{\Nobs}$, the following theorem states an interesting result.
\begin{theorem} \label{th:inconsistance_severe}
	Let $\Vobs_1, \ldots, \Vobs_\Nobs$ be $\Nobs$  $\dimobs$-dimensional observations independently and identically distributed according to a centered Gaussian distribution of common variance $\sigma^2_{\bfy}$. Then
	\begin{equation}
		\mathrm{P} \big( K_\Nobs = 0 | \MATobs_\Nobs, \ibpparam, \sigma^2_{\bfy} \big) \overset{\mathrm{a.s.}}{\underset{\Nobs \rightarrow + \infty}{\longrightarrow}} 0 .
	\end{equation}
\end{theorem}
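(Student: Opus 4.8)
The plan is to show that the posterior odds in favour of at least one active component diverge almost surely. Writing
$R_\Nobs = \Prob\big(\dimcoef_\Nobs \geq 1 \mid \MATobs_\Nobs, \ibpparam, \sigma^2_{\bfy}\big)/\Prob\big(\dimcoef_\Nobs = 0 \mid \MATobs_\Nobs, \ibpparam, \sigma^2_{\bfy}\big)$,
the identity $\Prob(\dimcoef_\Nobs=0\mid\cdots) = (1+R_\Nobs)^{-1}$ reduces the claim to proving $R_\Nobs \to +\infty$ almost surely; here I treat the noise variance as fixed at its true value $\noisevar = \sigma^2_{\bfy}$. Because the marginalized posterior \eqref{eq:posterior_marginalized} carries a configuration-independent factor $(2\pi\noisevar)^{-\Nobs\dimobs/2}\exp(-\trace(\MATobs\transp\MATobs)/2\noisevar)$, a common IBP factor $e^{-\ibpparam\sum_\nobs 1/\nobs}$, and common constants $\noisevar^{-1}\ibpparam^{-1}$, all of these cancel in $R_\Nobs$, so I may work throughout with unnormalised masses and the single normalising constant also cancels.

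Next I would lower-bound the numerator by the posterior mass of a single family of one-component configurations, which is a subset of $\{\dimcoef_\Nobs\geq1\}$. Fix $\dimcoef=1$, activate every observation (so $z_{1,\nobs}=1$ for all $\nobs$ and $m_1=\Nobs$), and integrate the remaining direction $\Vfac\in\bbS^{\dimobs-1}$ and scale $\zellnervar$ against their priors. Reading the factors off \eqref{eq:posterior_marginalized}, this family contributes a combinatorial IBP weight $\ibpparam\,(\Nobs-m_1)!(m_1-1)!/\Nobs! = \ibpparam/\Nobs$, a constant sIG factor $b_\delta^{a_\delta}/\gamma(a_\delta,b_\delta)$, the uniform direction density $1/\mathrm{vol}(\bbS^{\dimobs-1})$, and the data-dependent integral $\int\!\int \exp\big[\tfrac{1}{2\noisevar}\tfrac{\zellnervar}{1+\zellnervar}\,\Vfac\transp\MATobs\MATobs\transp\Vfac\big]\,\prob_{\mathrm{sIG}}(\zellnervar)\,\mathrm{d}\Vfac\,\mathrm{d}\zellnervar$, where I used $\sum_\nobs \scalarProdSquare{\Vfac}{\Vobs_\nobs} = \Vfac\transp\MATobs\MATobs\transp\Vfac$. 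The $\dimcoef=0$ denominator reduces to a constant (all $\indcoef$-indexed products are empty). Relative to it, the numerator is the above integral times a factor that decays only polynomially, namely $\ibpparam/\Nobs$ times constants.

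The heart of the argument is an exponential-in-$\Nobs$ lower bound for that integral. Let $\Vfac_1$ be a top eigenvector of $\MATobs\MATobs\transp$ with eigenvalue $\lambda_{\max}$. Restricting the direction integral to a \emph{fixed} spherical cap $\mathcal{C}$ of half-angle $\theta_0<\pi/2$ and solid angle $\Omega_0>0$ centred at $\Vfac_1$, every $\Vfac\in\mathcal{C}$ satisfies $\Vfac\transp\MATobs\MATobs\transp\Vfac \geq \lambda_{\max}(\Vfac\transp\Vfac_1)^2 \geq \lambda_{\max}\cos^2\theta_0$, using the nonnegativity of all eigenvalues; restricting the scale integral to $\zellnervar\geq 1$ (a set of positive prior mass $c_\delta>0$) gives $\tfrac{\zellnervar}{1+\zellnervar}\geq\tfrac12$. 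Hence the integral is at least $\Omega_0\,c_\delta\,\exp\big[\tfrac{\cos^2\theta_0}{4\noisevar}\,\lambda_{\max}\big]$. Finally, since the columns of $\MATobs$ are i.i.d.\ centred Gaussian, the strong law gives $\tfrac1\Nobs\MATobs\MATobs\transp \to \sigma^2_{\bfy}\indmat_\dimobs$ almost surely, so $\lambda_{\max}\geq \tfrac12\sigma^2_{\bfy}\Nobs$ for all large $\Nobs$, almost surely. Combining, $R_\Nobs$ is at least a positive constant times $\Nobs^{-1}\exp(c\,\Nobs)$ with $c>0$, which diverges, whence $\Prob(\dimcoef_\Nobs=0\mid\cdots)\to 0$ almost surely.

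I expect the main obstacle to be precisely this exponential lower bound on the direction-and-scale integral: one must verify that the at-most-polynomial correction factors — the $1/\Nobs$ IBP penalty, the sphere normalisation, and any shrinkage of the integration region — cannot overwhelm the growth coming from $\lambda_{\max}$ being linear in $\Nobs$. Working with a fixed cap rather than an asymptotically shrinking one sidesteps delicate Laplace asymptotics, at the harmless cost of the constant $\cos^2\theta_0$. Two further points deserve care. First, the saturation $\zellnervar/(1+\zellnervar)\to 1$ should be handled by restricting to $\zellnervar\geq1$, so the scale prior contributes a genuine positive constant rather than vanishing mass. Second, for $\dimcoef=1$ the orthogonality constraint on $\MATfac$ is inactive — a single unit vector is free on $\bbS^{\dimobs-1}$ — so the mechanism here is genuinely distinct from the $\dimcoef=\dimobs$ mass leakage of Theorem~\ref{th:inconsistance}. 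The almost-sure convergence of the empirical covariance is standard and is the only probabilistic input.
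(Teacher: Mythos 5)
There is a genuine gap, and it sits exactly where you predicted the ``main obstacle'' would be: the exponential lower bound on the direction-and-scale integral is false, because you have dropped a factor of the marginalized posterior. In Eq.~\eqref{eq:posterior_marginalized} the $\zellnervar$-dependent part of the weight of a component $\indcoef$ is not the sIG prior alone but the prior multiplied by the Occam factor $\bigl(1+\zellnervar_\indcoef\bigr)^{-\frac{1}{2}\sum_\nobs \ibp{\indcoef,\nobs}}$ coming from the marginalization of the coefficients $\MATcoef$. For your chosen configuration ($\dimcoef=1$, all $\Nobs$ observations active) this factor is $(1+\zellnervar)^{-\Nobs/2}$, which on your integration region $\zellnervar\ge 1$ is at most $2^{-\Nobs/2}=e^{-\Nobs(\ln 2)/2}$. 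Your exponential gain, with $\noisevar=\sigma^2_{\bfy}$ and $\lambda_{\max}\sim\sigma^2_{\bfy}\Nobs$, is at most $\exp\bigl[\tfrac{1}{4}\Nobs\cos^2\theta_0\bigr]$, and $\tfrac14<\tfrac{\ln 2}{2}$, so the penalty wins and your lower bound on $R_\Nobs$ actually tends to $0$. This is not an artifact of the constants: writing $t=\zellnervar/(1+\zellnervar)$, the integrand scales as $\exp\bigl[\tfrac{\Nobs}{2}\bigl(t\,\lambda_{\max}/(\Nobs\noisevar)+\ln(1-t)\bigr)\bigr]$ and $t+\ln(1-t)\le 0$ for all $t\in[0,1)$, so no choice of cap or of $\zellnervar$-region yields exponential growth. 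More structurally, the quantity you are bounding is a Bayes factor whose expectation under the true (pure-noise) model equals $1$; by Markov and Borel--Cantelli it is almost surely eventually bounded by, say, $\Nobs^{2}$, so a single configuration, once multiplied by its IBP weight $\ibpparam/\Nobs$, can never supply almost-sure divergence.

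The paper's proof takes a route that is immune to this: rather than extracting exponential growth from one configuration, it sums over \emph{all} one-row matrices $\MATibp$, grouped by the number $l$ of active entries. For each fixed $l$ the normalized sum of Bayes factors over the $\binom{\Nobs}{l}$ subsets is a U-statistic $U_l(\MATobs)$ whose kernel has expectation $1$ under the noise model, so $U_l\to 1$ almost surely by Hoeffding's strong law; the group with $l$ active entries then contributes $\ibpparam/l$ in the limit, and the divergence of the harmonic series $\sum_l \ibpparam/l$ drives $R_\Nobs\to\infty$. The mechanism is combinatorial (infinitely many configurations each contributing a bounded amount) rather than energetic (one configuration with exploding likelihood), and the latter mechanism is precisely what the Bayesian Occam factor you omitted is designed to forbid. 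To repair your argument you would have to reintroduce the $(1+\zellnervar)^{-\frac12\sum_\nobs z_{1,\nobs}}$ term and then aggregate over configurations essentially as the paper does.
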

\begin{proof}
  See Appendix~\ref{app:severe_inconsistency}.
\end{proof}
Two distinct interpretations of this theorem can be proposed. Indeed the Gaussian assumption is used twice in this case: both the data and the noise are Gaussian. On one hand, if Gaussian measurements are interpreted as noise, i.e., $\Vobs_\nobs = \boldsymbol{\epsilon}_\nobs$ and $\sigma^2_{\bfy}=\sigma^2$ in the proposed latent factor model \eqref{eq:intro:modelPPCA}, the expected dimension of the latent subspace should be $0$. Theorem~\ref{th:inconsistance_severe} states that this will almost surely not be the case, so that
$\widehat{\dimcoef}_\Nobs$ is inconsistent.
On the other hand, the same Theorem~\ref{th:inconsistance_severe} can be positively interpreted since one would rather expect to find $\widehat{\dimcoef}_\Nobs = \dimobs$ since white Gaussian noise spreads its energy equally in every direction. With respect to this second interpretation,
$\widehat{\dimcoef}_\Nobs$ may be considered as consistent.


In the present approach, we consider that a latent subspace is meaningful as soon as it permits to distinguish a signal from white Gaussian noise: we stick to the first interpretation of Theorem~\ref{th:inconsistance_severe} and consider that $\widehat{\dimcoef}_\Nobs$ is inconsistent.
Finally, we emphasize that the two theorems above are related to posterior estimators of $\dimcoef$ conditioned upon $\alpha$ and possibly $\sigma^2$. A posterior estimator $\widehat{K}_{\mathrm{mMAP}}$ will be defined later by Eq.\eqref{eq:K_mMAP} where parameters $\alpha$ and $\sigma^2$ are marginalized. Experiments conducted in Section \ref{sec:easyExperiment} will show that this $\widehat{K}_{\mathrm{mMAP}}$ seems to be asymptotically consistent.

\subsection{Selecting the number of components} \label{subsec:selecting_true_K}

As emphasized in Section~\ref{subsec:estimators:behaviour_pk}, the posterior probabilities $\mathrm{P}\big[\dimcoef \given \MATobs,\ibpparam\big]$ may not to be sufficient to properly derive reliable estimates of the subspace dimension and select the number of relevant directions. However, the proposed BNP-PCA considers the IBP parameter $\alpha$ as unknown. Then one can define the marginalized MAP estimate
\begin{equation}
\label{eq:K_mMAP}
  \widehat{K}_{\mathrm{mMAP}} = \operatornamewithlimits{argmax}_{k \in \{0,\dotsc, \dimobs\}} \mathrm{P}\big[ \dimcoef = k \mid \MATobs \big].
\end{equation}
The numerical study of Section~\ref{sec:easyExperiment} will show that it seems to be consistent contrary to $\widehat{K}_{\mathrm{mMAP},\alpha}$. As a consequence, as soon as sufficient amount of data is available, one may use $\widehat{K}_{\mathrm{mMAP}}$ for model selection.


Another possibility, with theoretical guarantees, is to take advantage of the posterior distribution of the principal components $\MATfac$ and to use statistical tests. In accordance with the notations introduced in Section \ref{subsec:model:inference}, let $\mathcal{A} \subset \left\{1,\dotsc,D\right\} $ denote the set of $K$ indices corresponding to the estimated active directions in $\MATfac$.
Elaborating on \eqref{eq:posterior:Pk}, the posterior distribution of $\Vfac_\indcoef$, $\forall \indcoef \in \bar{\mathcal{A}}$, can be expressed thanks to a $(D-K)$-dimensional unit-norm random vector $\boldsymbol{w}_{\indcoef} = \bfN_{\mathcal{A}}^T \Vfac_{\indcoef}  $ whose distribution is given by
\begin{equation}
\label{eq:distribution_vk}
  f\left(\boldsymbol{w}_{\indcoef} \given \MATobs_{\Nobs}, \MATfac_{\mathcal{A}}, \MATibp, \zellnervar_{\indcoef}, \noisevar\right)  \propto \exp\left(\boldsymbol{w}_{\indcoef}^T \boldsymbol{\Lambda}_{\indcoef,{\Nobs}} \boldsymbol{w}_{\indcoef} \right)
\end{equation}
where
\begin{equation}
\label{eq:distribution_vk_param}
  \boldsymbol{\Lambda}_{\indcoef,{\Nobs}}= \gamma_{\indcoef}  \sum_{n=1}^N\bfN_{\mathcal{A}}^T \bfy_n \bfy_n^T \bfN_{\mathcal{A}}
\end{equation}
where $\bfN_{\mathcal{A}}$ is a $D\times (D-K)$ orthogonal matrix which spans the null space of $\MATfac_{\mathcal{A}}$; $\gamma_{\indcoef} $ depends on $\noisevar$ and $\zellnervar_\indcoef$. Interestingly, if $\MATfac_{\mathcal{A}}$ correctly identifies the unknown signal subspace of dimension $K$, any component $\Vfac_\ell$, $\ell\in \bar{\mathcal{A}}$ is actually a non-relevant direction. According to the latent factor model \eqref{eq:intro:modelPPCA}, the projected vectors $\bfN_{\mathcal{A}}^T \bfy_n$ ($n=1,\dotsc,\Nobs$) in \eqref{eq:distribution_vk_param} should reduce to white Gaussian noises so that
\begin{equation}
\label{eq:projected_vectors}
  \operatornamewithlimits{lim}_{\Nobs  \rightarrow + \infty} \frac{1}{\Nobs}\boldsymbol{\Lambda}_{\indcoef,{\Nobs}} = \gamma_{\indcoef} \sigma^2 \indmat_{D-K}
\end{equation}
where $\indmat_{D-K}$ is the $(D-K)$ identity matrix. This means that the posterior distribution \eqref{eq:distribution_vk} of the $\boldsymbol{w}_{\ell}$ tends to be uniform over the $(D-K)$-dimensional sphere.
Let $L=D-K$ and $\boldsymbol{\mathcal{W}}_{\bar{\mathcal{A}}}$ the $L\times L$ orthogonal matrix whose columns are the vectors $\left\{\boldsymbol{w}_\ell\right\}_{\ell \in \bar{\mathcal{A}}}$. One could think of building tests of goodness-of-fit able to identify the maximum dimension $L=D-K\in\{0,\dotsc,D\}$ for which $\boldsymbol{\mathcal{W}}_{\bar{\mathcal{A}}}$ remains uniformly distributed over the orthogonal group $\mathcal{O}_{L}$.
However, since $\boldsymbol{\mathcal{W}}_{\bar{\mathcal{A}}}$ lives in a possibly high dimensional space, this testing procedure would be inefficient to provide a reliable decision rule.
As an alternative, we propose to conduct a statistical tests on the set of the following $L=D-K$ absolute scalar products
\begin{equation}
\label{eq:omega_absolute}
	\omega_\ell \triangleq |\boldsymbol{w}_{\ell}^T \bfu_\ell|, \quad \ell \in \bar{\mathcal{A}},
\end{equation}
where the $\left\{\bfu_\ell\right\}_{\ell \in\bar{\mathcal{A}}}$  is a set of $L$ arbitrary $L$-dimensional unit-norm vectors, for instance uniformly distributed on the sphere.
Indeed, if $\boldsymbol{\mathcal{W}}_{\bar{\mathcal{A}}}$ is uniformly distributed over the orthogonal group $\mathcal{O}_{L}$, the distribution of the $L$-dimensional random vector
$\boldsymbol{\omega}_{\bar{\mathcal{A}}}$ whose components are given by \eqref{eq:omega_absolute} can be easily derived as stated by the following theorem.

\begin{theorem} \label{th:distribution_scalar_product}
	Let $K \in \left\{0, \ldots,\dimobs \right\}$, $\boldsymbol{\mathcal{W}}=\left[\boldsymbol{w}_1,\dotsc,\boldsymbol{w}_{D-K}\right]^T$ be a random matrix uniformly distributed on the orthogonal group $\mathcal{O}_{D-K}$,
	and $\bfu_{1}, \ldots, \bfu_{\dimobs-K}$ be  $L=(\dimobs - K)$ arbitray unit-norm $L$-dimensional vectors.
	Let $\boldsymbol{\omega} = \left[ \omega_{1}, \ldots, \omega_{L} \right]^T$ such that $\omega_{\ell} \triangleq |\boldsymbol{w}_\ell\transp \bfu_\ell |$.
	Then, the components of $\boldsymbol{\omega}$ are identically distributed and the cumulative distribution (cdf) of any component $\omega_\ell$ is given by
	\begin{equation} \label{eq:cdf_scalar_product}
	\begin{split}
		\mathrm{P}\left( \omega_\ell \leq \lambda \right)
		&{}={} \frac{\mathrm{vol}\left(\mathcal{O}_{L-2}\right)}{\mathrm{vol}\left(\mathcal{O}_{L-1}\right)} \; 2\int_0^{\lambda} \left( 1 - z^2 \right)^{(L-3) / 2} \mathrm{d}z \\
		&{}={} 2\lambda_l \frac{\mathrm{vol}\left(\mathcal{O}_{L-2}\right)}{\mathrm{vol}\left(\mathcal{O}_{L-1}\right)} \;_2F_1\left( \frac{1}{2}, -\frac{L-3}{2} ; \frac{3}{2} ; \lambda^2 \right).
	\end{split}
	\end{equation}
\end{theorem}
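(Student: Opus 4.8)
The plan is to reduce the statement about the random orthogonal matrix $\boldsymbol{\mathcal{W}}$ to a one-dimensional computation about a single uniform direction on the sphere. The key observation is that, since $\boldsymbol{\mathcal{W}}$ is Haar-distributed on $\mathcal{O}_{L}$ with $L=\dimobs-\dimcoef$, each of its rows $\boldsymbol{w}_\ell$ is marginally uniform on the unit sphere $S^{L-1}\subset\R^{L}$. Indeed, for any fixed $Q\in\mathcal{O}_L$ the matrix $\boldsymbol{\mathcal{W}}Q$ is again Haar-distributed and its $\ell$-th row is $Q\transp\boldsymbol{w}_\ell$, so the law of $\boldsymbol{w}_\ell$ is invariant under the orthogonal group and hence uniform on $S^{L-1}$. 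Consequently, because $\|\bfu_\ell\|=1$, the law of $\omega_\ell=|\boldsymbol{w}_\ell\transp\bfu_\ell|$ does not depend on the particular $\bfu_\ell$: a rotation sending $\bfu_\ell$ to a fixed reference vector $\bfe_1$ leaves the law of $\boldsymbol{w}_\ell$ unchanged. This simultaneously proves that the $\omega_\ell$ are identically distributed and reduces the problem to computing the law of $|\langle \boldsymbol{w},\bfe_1\rangle|$ for a single $\boldsymbol{w}$ uniform on $S^{L-1}$.

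Next I would compute the marginal density of $t=\langle\boldsymbol{w},\bfe_1\rangle$. Slicing $S^{L-1}$ at height $t$ yields a sphere of radius $\sqrt{1-t^2}$ in the orthogonal hyperplane, and the co-area computation shows that the uniform surface measure disintegrates as $(1-t^2)^{(L-3)/2}\,\mathrm{d}t$ times the uniform measure on $S^{L-2}$. Integrating out the latter gives a density $c_L(1-t^2)^{(L-3)/2}$ on $[-1,1]$, where the normalizing constant $c_L$ is fixed by total mass one; it is the ratio of the surface areas of $S^{L-2}$ and $S^{L-1}$, which through the volume formula \eqref{eq:preleminaries:vol_stiefel} can be rewritten as the orthogonal-group volume ratio appearing in \eqref{eq:cdf_scalar_product}. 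By symmetry of the sphere, $t$ is symmetric about $0$, so the density of $\omega_\ell=|t|$ on $[0,1]$ is $2c_L(1-z^2)^{(L-3)/2}$, and integrating from $0$ to $\lambda$ produces the first line of \eqref{eq:cdf_scalar_product}.

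Finally I would recognize the incomplete integral as a Gauss hypergeometric function. Expanding $(1-z^2)^{(L-3)/2}$ by the binomial series and integrating term by term gives $\int_0^\lambda (1-z^2)^{(L-3)/2}\,\mathrm{d}z=\lambda\,{}_2F_1\!\left(\tfrac12,-\tfrac{L-3}{2};\tfrac32;\lambda^2\right)$, using $\frac{(1/2)_k}{(3/2)_k}=\frac{1}{2k+1}$ and $\frac{(-a)_k}{k!}=(-1)^k\binom{a}{k}$ with $a=(L-3)/2$. Substituting this back yields the closed form on the second line.

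The only delicate points are bookkeeping ones. Forcing the normalization constant into exactly the stated volume ratio requires careful use of the fibration $\mathcal{O}_n\to S^{n-1}$ with fiber $\mathcal{O}_{n-1}$ (equivalently the explicit formula \eqref{eq:preleminaries:vol_stiefel}), and the low-dimensional cases deserve a separate check: for $L=1$ the vector $\boldsymbol{w}$ is deterministic and $\omega=1$ almost surely, while for $L=2$ the exponent $(L-3)/2=-\tfrac12$ gives an integrable singularity that the Beta-function normalization still accommodates. The main conceptual step, and the one I would be most careful to justify, is the disintegration of the uniform measure along one coordinate that produces the factor $(1-t^2)^{(L-3)/2}$; everything downstream is a routine special-function identity.
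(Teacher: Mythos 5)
Your proposal is correct and follows essentially the same route as the paper's Appendix~E proof: reduce to a single row that is uniform on the unit sphere via rotation invariance of the Haar measure, obtain the marginal density $\propto (1-z^2)^{(L-3)/2}$ of one coordinate (you via a slicing/co-area disintegration, the paper via explicit spherical coordinates), normalize by the ratio of sphere volumes, and identify the resulting incomplete integral as $\lambda\,{}_2F_1\left(\tfrac12,-\tfrac{L-3}{2};\tfrac32;\lambda^2\right)$ (you by term-by-term series integration, the paper by citing a corollary of Gupta and Nagar). Your explicit justification that each row of a Haar-distributed orthogonal matrix is marginally uniform on the sphere, and your remarks on the low-dimensional edge cases, are welcome additions but do not change the argument.
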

\begin{proof}
  See Appendix~\ref{app:expectation_scalar_product}.
  \end{proof}

Note that the $\omega_\ell$ can be interpreted as generalized cosines in dimension $L=D-K$. The distribution Eq.~\eqref{eq:cdf_scalar_product} depends on the difference $\dimobs-K$ only. Fig.~\ref{fig:produit_scalaire} shows the empirical and theoretical pdf's associated with the cdf \eqref{eq:cdf_scalar_product} for various values of $D-K$.

\begin{figure}
	\includegraphics[width=\columnwidth]{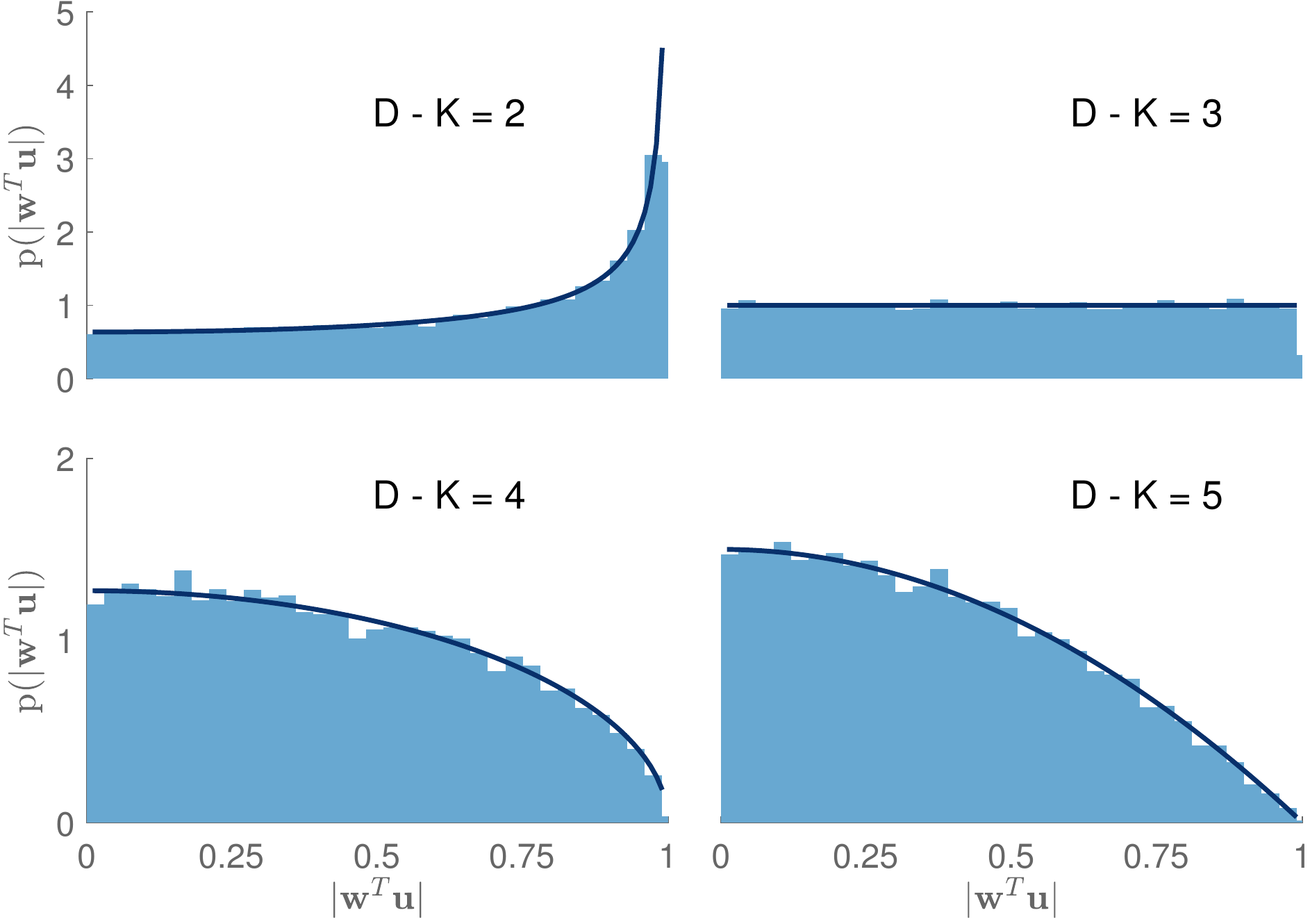}
	\caption{
		\label{fig:produit_scalaire}
		Empirical (light blue bars, computed from $20000$ samples) and theoretical (dark blue lines) pdf's associated with the cdf \eqref{eq:cdf_scalar_product} for $4$ different values of the dimension.
	}
\end{figure}

\begin{algorithm}
	\caption{Selecting the number of relevant directions}
	\label{alg:select_number_of_directions}
	\KwIn{
		level of KS test; a collection of samples $\left\{ \Vfac_1^{(t)}, \ldots \Vfac_\dimobs^{(t)}, \MATibp^{(t)}  \right\}_{t=n_{\mathrm{burn}}+1}^{T_{MC}}$ generated by Alg.~\ref{alg:gibbs}.
	}
	\BlankLine
	For each iteration, relabel the directions $\Vfac_\indcoef^{(t)}$ w.r.t. their frequency of activation, given by $\MATibp^{(t)}$\;
	Sample $\bfu_{1}\ldots\bfu_{\dimobs} \overset{\mathrm{i.i.d.}}{\sim} \stiefel{\dimobs}{1}$ \;
	\For{$K \gets 1$ \KwTo $\dimobs-1$} {
		\For{$t \gets n_{\mathrm{burn}}+1$ \KwTo $n_{\mathrm{burn}} + n_{\mathrm{iter}}$} {
			Let $\bfN_K$ be a basis of the orthogonal of $\Vfac_1^{(t)}\ldots \Vfac_{K}^{(t)}$ \;
			Compute $\omega_{K+1}^{K\,(t)} \triangleq  \Vert \bfN_K\transp \bfu_{K+1} \Vert^{-1} | \Vfac_{K+1}^{(t)\, T} \bfN_K\transp \bfu_{K+1} |,\ldots$ $\omega_\dimobs^{K\,(t)} \triangleq \Vert \bfN_K\transp \bfu_{\dimobs} \Vert^{-1} | \Vfac_\dimobs^{(t)\, T} \bfN_K\transp \bfu_\dimobs |$ \;
		}
		Stack the $\omega_{K+1}^{K\,(t)}, \ldots \omega_\dimobs^{K\,(t)}$ into a single collection of samples in view of Kolmogorov-Smirnov's test \;
		\If{$\calH_K$ is not rejected}{
			$\hat{\dimcoef}_{\mathrm{KS}} = K$ \;
			break\;
		}
	}
	\BlankLine
	\KwOut{$\hat{\dimcoef}_{\mathrm{KS}}$, an estimator of the number of relevant components.}
\end{algorithm}

We propose to use Theorem~\ref{th:distribution_scalar_product} to design the following Kolmogorov-Smirnov test of goodness-of-fit applied to the matrices $\left\{\MATfac^{(t)}\right\}_{t=n_{\mathrm{bi}}}^{n_{\mathrm{mc}}}$ generated by the Gibbs sampler detailed in Algo. \ref{alg:gibbs}. For a given candidate $\mathcal{A}$ of  $K$ indices associated with the subspace spanned by $\MATfac_{\mathcal{A}}$, one can test whether the remaining set $\bar{\mathcal{A}}$ of indices corresponds to directions $\MATfac_{\bar{\mathcal{A}}}$ uniformly distributed over the orthogonal group $\mathcal{O}_{D-K}$. Thanks to Theorem~\ref{th:distribution_scalar_product} this is equivalent to test whether the absolute scalar products \eqref{eq:omega_absolute}  are distributed according to \eqref{eq:cdf_scalar_product}. Note also that the random variables $\left\{\omega_\ell\right\}_{ \ell \in \bar{\mathcal{A}}}$ form a set of identically distributed components of a $L$-dimensional random vector $\boldsymbol{\omega}_{\bar{\mathcal{A}}}$. This permits to use a single statistical test to be performed for each dimension candidate $K$ iteratively in increasing or decreasing order, rather than $\dimobs - K$ multiple tests.
The null hypothesis is defined as
\begin{equation}
    \label{eq:KS_test}
	\calH_0^{(K)}: \omega_\ell \overset{\mathrm{cdf}}{\sim} \eqref{eq:cdf_scalar_product},\quad \forall \ell \in \bar{\mathcal{A}}=\{D-K+1,..., D\}
\end{equation}
Obviously, if this null hypothesis is accepted for a given set $\bar{\mathcal{A}}$ of $D-K$ indices, it will be accepted for any subset of lower dimension. Conversely, if this null hypothesis is rejected for some $K$ and a given set $\bar{\mathcal{A}}$ of $D-K$ indices, it will be definitely rejected for any superset of $\bar{\mathcal{A}}$, that is for subspace dimensions smaller than $K$. Since the objective of the proposed procedure is to identify an a priori small number $K$ of relevant components (and not a lower or upper bound), this hypothesis should be tested for an increasing number $K$ of active components. As a result, the following estimator $\widehat{K}_{\mathrm{KS}}$ of the number of active components is finally proposed:
\begin{equation} \label{eq:estimators:def_kolmo_estimators}
	\Kkolmo = \min \left\{ K \in \left\{ 0,\dotsc, \dimobs\right\}\; |\; \calH_0^{(K)} \text{ is accepted} \right\} .
\end{equation}
By convention, $\calH_0^{(D)}$ is accepted when $\calH_0^{(K)}$ has been rejected for all $K \in \left\{ 0,\dotsc, \dimobs-1\right\}$: thus the model would identify data to white Gaussian noise with no special direction. Algo.~\ref{alg:select_number_of_directions} describes the full procedure.
\subsection{Estimating other parameters} \label{subsec:model:estimators}

This section discusses the derivation of estimates associated with the remaining parameters,  other than the dimension $K$ of the subspace. Regarding the orthonormal matrix $\MATfac$ of which the $K$ first columns span the signal subspace, it is not recommended to use a simple average of the samples $\MATfac^{(t)}$ generated by the MCMC algorithm to approximate the minimum mean square error (MMSE) estimator. Indeed, the Markov chain targets a highly multimodal distribution with modes that depend on the current state of the dimension $K^{(t)}$. In particular, at a given iteration $t$, the last $D-K^{(t)}$ columns of $\MATfac^{(t)}$ are directly drawn from a uniform prior. One alternative is to compute the MMSE estimator conditioned upon an estimate $\widehat{K}$ of the relevant dimension. This can be easily done by averaging the samples $\MATfac^{(t)}$ corresponding to the iterations $t$ for which $K^{(t)}=\widehat{K}$. A similar procedure applies for the binary matrix $\MATibp$. Note that in the specific context of parametric subspace estimation, other Bayesian estimators have been proposed by \citet{Besson2011,Besson2012}.

\noindent{\bf Remark: }
the posterior distribution of the scale parameters $\boldsymbol{\zellner} = \left\{\zellnervar_1,\dotsc,\zellnervar_K\right\}$, where the matrix $\MATfac$ has been marginalized, cannot be derived analytically.  This posterior distribution can be derived explicitly in some very particular cases only, assuming that the binary matrix $\MATibp$ is the $\dimcoef\times\Nobs$ matrix $\boldsymbol{1}_{\dimcoef,\Nobs}$ with only $1$'s everywhere, see App.~\ref{app:deltapost} for details. The resulting posterior involves a generalized hypergeometric function of two matrices that could be used as a measure of mismatch between the magnitudes of the eigenvalues of covariance matrices. We leave this open question for future work.

\section{Performance assessment of BNP-PCA} \label{sec:easyExperiment}

\begin{figure*}[ht!]

\begin{tikzpicture}
	\node[right] at (0, 0){\includegraphics[width=\columnwidth]{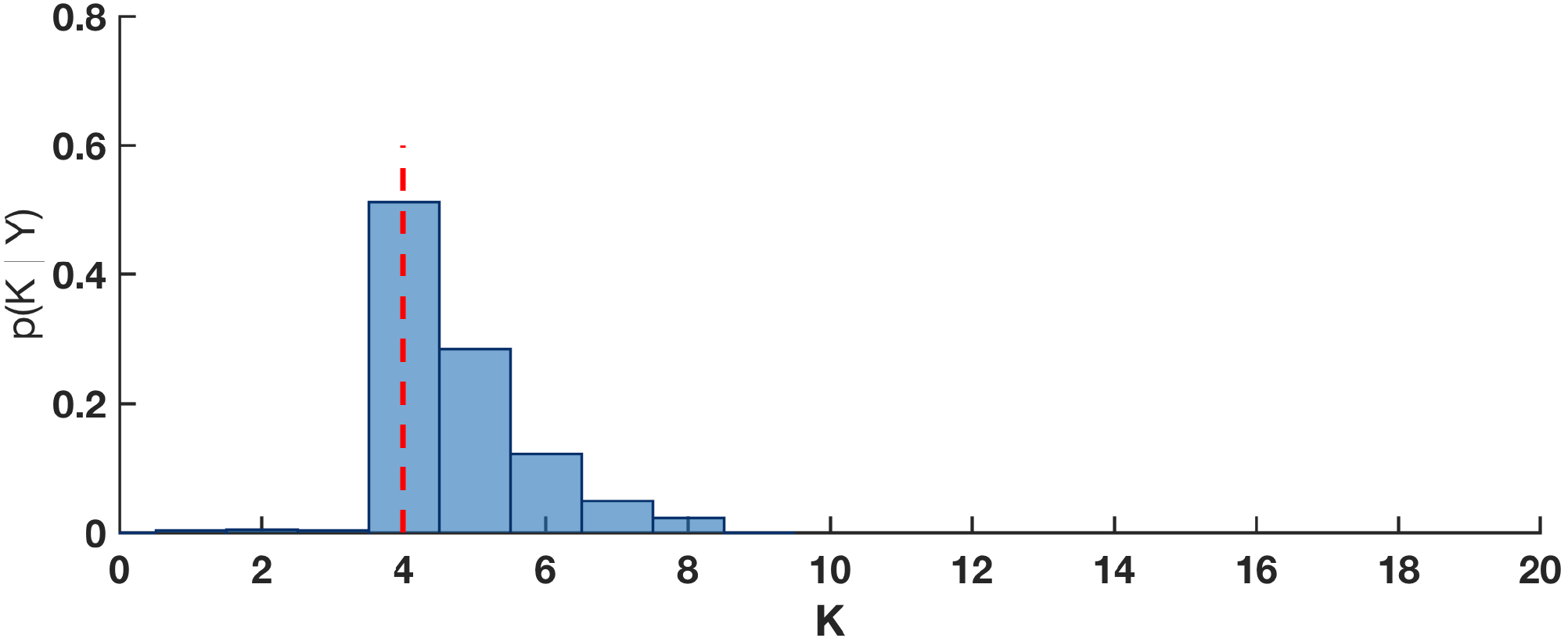}};
	\node[right] at (9, 0){\includegraphics[width=\columnwidth]{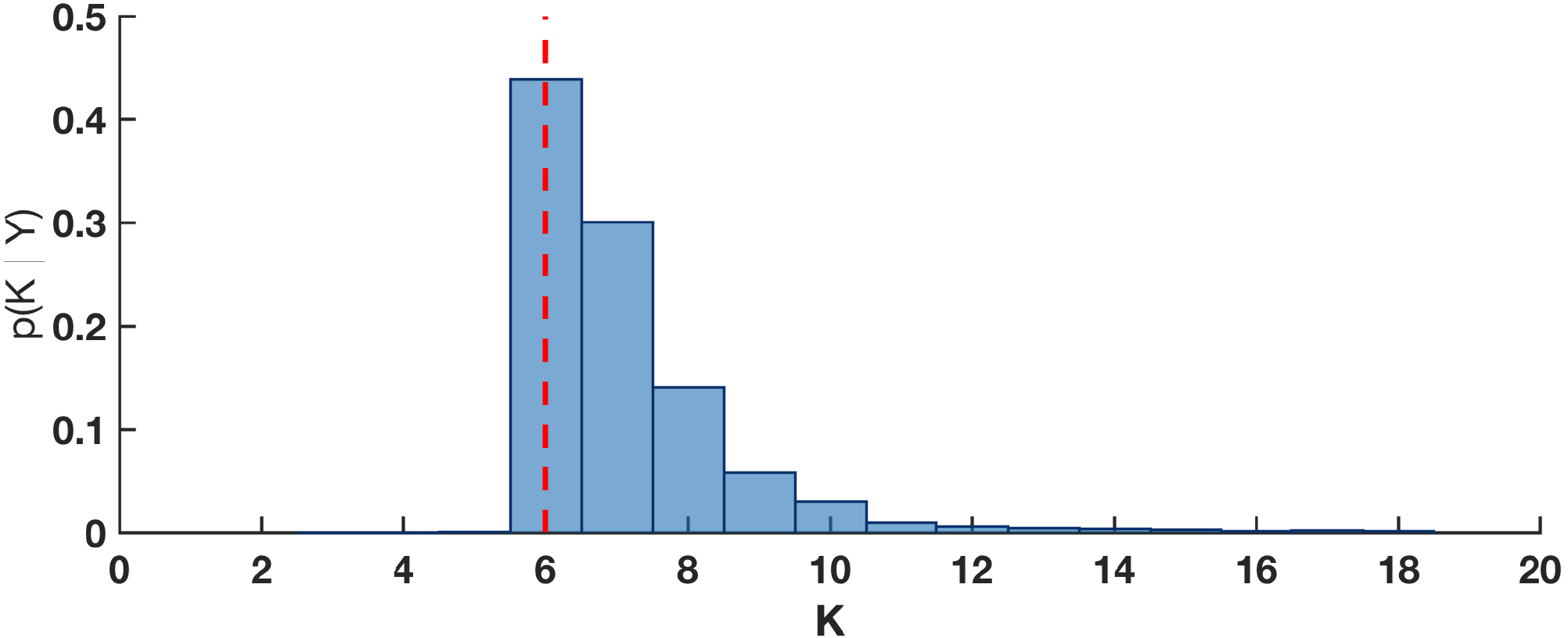}};
	\node[right] at (0, -4){\includegraphics[width=\columnwidth]{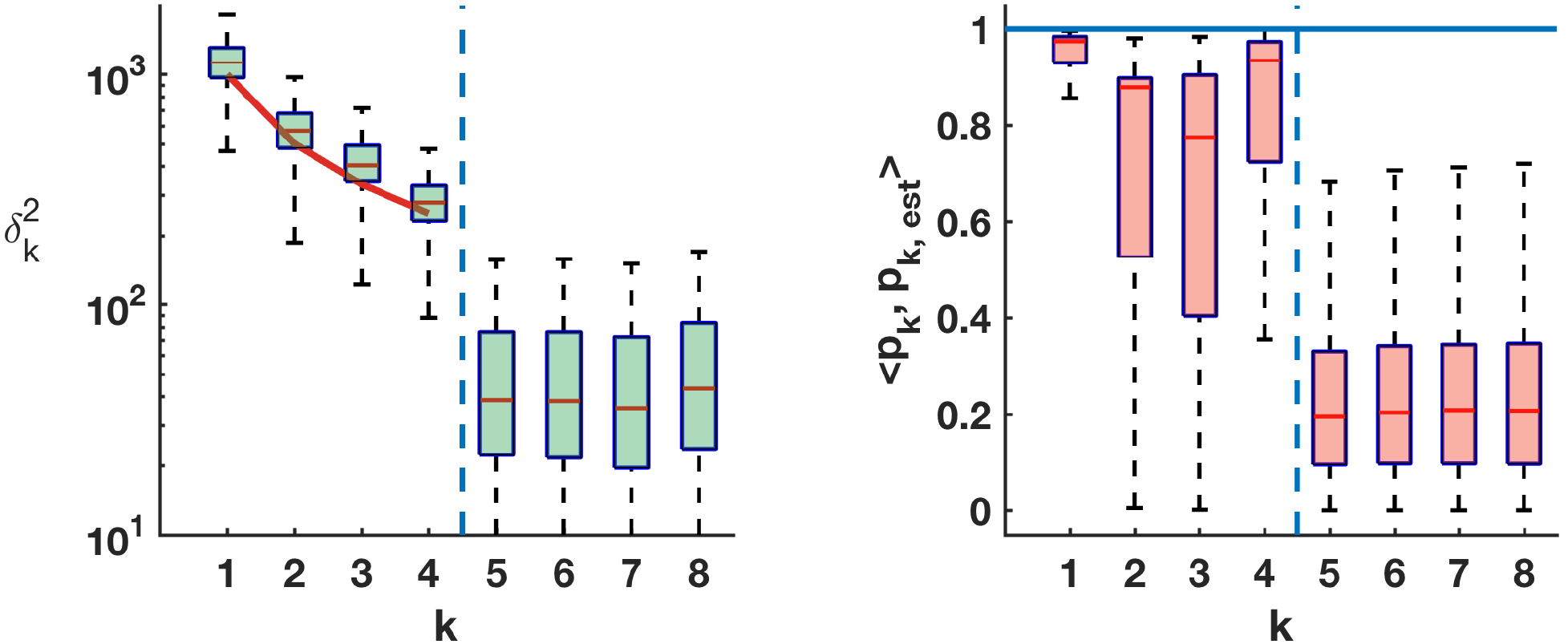}};
	\node[right] at (9, -4){\includegraphics[width=\columnwidth]{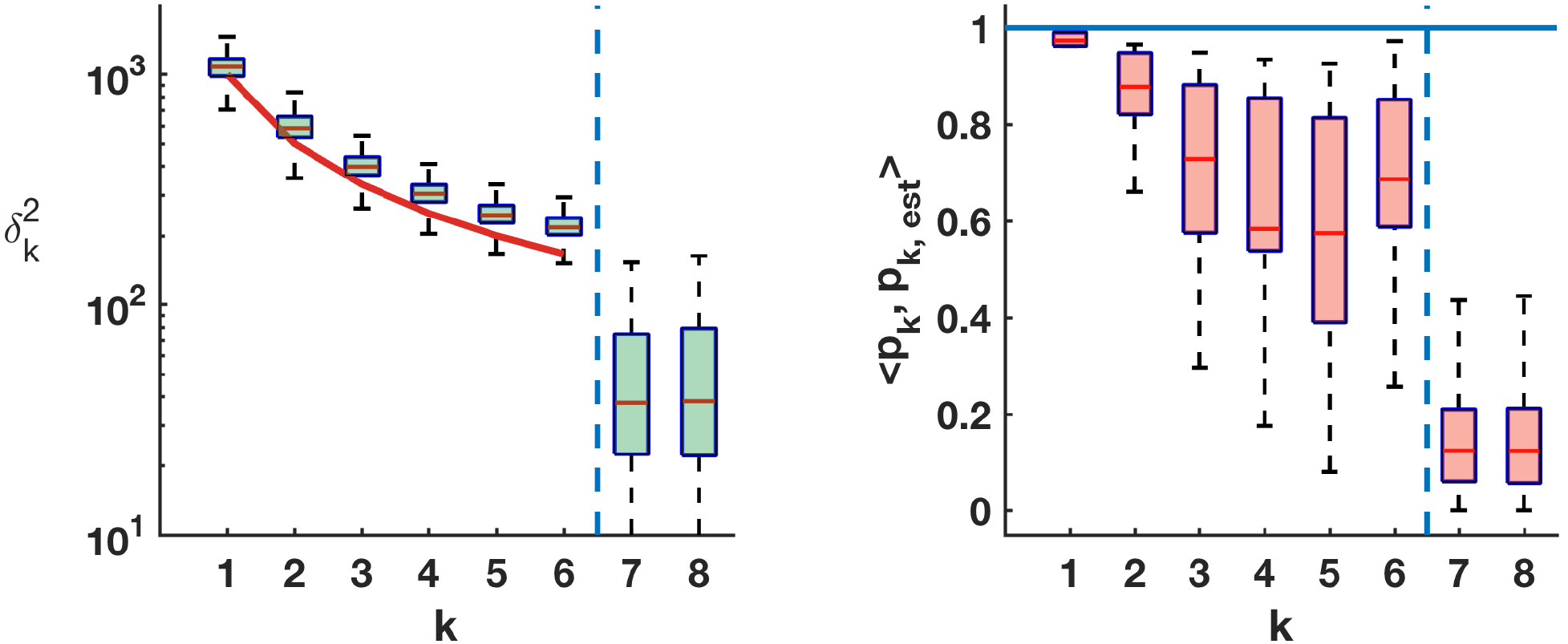}};
	\node at (4.5, 1.5) {\textsf{\textbf{(a)}}};
	\node at (13.5, 1.5) {\textsf{\textbf{(b)}}};
	\node at (2.5, -2.1) {\textsf{\textbf{(c)}}};
	\node at (7, -2.1) {\textsf{\textbf{(d)}}};
	\node at (11.5, -2.1) {\textsf{\textbf{(e)}}};
	\node at (16, -2.1) {\textsf{\textbf{(f)}}};
\end{tikzpicture}
	\caption{
		\label{fig:figure1}
		Top :
		posterior distribution of $\dimcoef$ for (a) $\dimobs=16, \Nobs=100$,  and (b) $\dimobs=36, \Nobs=500$.
		Bottom :
		posterior distributions of (c) \& (e) scale factors $\zellner_\dimcoef^2$  and (d) \& (f) dispersion of the projection ${\widehat{\MATfac}\transp} \MATfac$ for $\dimobs=16, \Nobs=100$ and $\dimobs=36, \Nobs=500$, respectively.
		The red lines indicates the true values of $\zellnervar_1\dots \zellnervar_\dimcoef$.
	}
\end{figure*}

The performance of the proposed BNP-PCA is assessed on datasets simulated according to the linear model
\begin{equation} \label{eq:easyExp:generativeModel}
	\Vobs_\nobs = \bfH \bfu_\nobs + \bfe_\nobs
\end{equation}
where $\bfe_\nobs$ is an additive Gaussian noise of covariance matrix $\noisevar \indmat_\dimobs$ and the quantities $\bfH$ and $\bfu_\nobs$ are specified as follows. First, for a given dimension $\dimobs$ of the observations, $\dimcoef$ orthonormal directions are gathered in a $\dimobs\times \dimcoef$ matrix $\bfH$ which is uniformly generated on the Stiefel manifold $\stiefel{\dimobs}{\dimcoef}$. Then, $\Nobs$ representation vectors $\bfu_1,\dotsc, \bfu_\Nobs$ of dimension $\dimcoef$ are identically and independently generated according to a centered Gaussian distribution with a diagonal covariance matrix $\boldsymbol{\Sigma} = \mathrm{diag}\left\{\delta_1^2\sigma^2,\dotsc,\delta_{\dimcoef}^2\sigma^2\right\}$ where the scale factors $\delta_1^2,\dotsc,\delta_K^2$ control the relevance of a particular direction.
Equivalently, by choosing different values for the scale factors, this model also conveniently permits to consider the case of an anisotropic noise corrupting an isotropic latent subspace. In the following, the choice of these scale factors will be specified in four typical scenarios.

Since each scale factors $\delta_k^2$ controls the signal-to-noise ratio in each direction, a unique value $\noisevar=0.01$ of the noise variance is considered without loss of generality. Several dimensions $\dimobs$ and $\dimcoef$ are considered for various numbers of observations $\Nobs$. 
The proposed Gibbs sampler has been run during $1000$ iterations after a burn-in period of $100$ iterations.

\subsection{Scale factors and alignment of components}

The performances of the proposed algorithm have been first evaluated on various simulated datasets.
As an illustration, we report here the results on $2$ datasets corresponding to $(\dimobs=16, \dimcoef=4, \Nobs=100)$ and  $(\dimobs=36, \dimcoef=6, \Nobs=500)$ and where the scale coefficients $\delta_k^2$ are defined as proportional to $1/k$.

\noindent Fig.~\ref{fig:figure1}(a) \& (b) show the posterior distributions of $\dimcoef$ for $(\dimobs=16,\Nobs=100)$ and $(\dimobs=36, \Nobs=500)$, respectively.
We observe that the maximum of the two posterior histograms correspond to the expected dimension, \textit{i.e.}, $K=4$ for $\dimobs=16$ and $\dimcoef=6$ for $\dimobs=36$.
Note that this estimator corresponds to the marginal maximum a posteriori estimator defined by Eq.~\eqref{eq:K_mMAP}.
These two examples suggest that the marginal MAP estimator $\dimcoef | \MATobs$ seems to be consistent since it is able to recover the expected dimension. This is in contrast with the behaviour of the conditional MAP estimator $\dimcoef | \MATobs, \alpha$ that is known to be inconsistent from Theorems~\ref{th:inconsistance} and~\ref{th:inconsistance_severe}. Section~\ref{subsec:easyExperiment:K_func_N} will come back to this question in more details.
We do not comment on the behaviour of $\Kkolmo$ based on KS tests here: in such simple scenarios, $\Kkolmo$ and $\Kkolmo$ always give the same results.


\noindent Fig.~\ref{fig:figure1} (c) \& (e) show the posterior distributions of the 8 first scale factors.
Fig.~\ref{fig:figure1}(d) \& (f) show the alignment of the true $\Vfac_k$ with the estimated $\widehat{\Vfac}_k$; see Fig.~\ref{fig:figure1}(c)\&(d) for $\dimobs=16,\Nobs=100$ and Fig.~\ref{fig:figure1}(e)\&(f) for $\dimobs=36, \Nobs=500$.
The alignment is measured by the scalar product $\langle \Vfac_k,\widehat{\Vfac}_k\rangle$ between each column of $\MATfac$ and its estimate.
No ordering problem is expected here since the variances are sufficiently different in every direction.
In both cases, it appears that scale factors are correctly identified.
We observe that inferred directions correspond to actual principal components with an alignment typically higher than $0.8$ on average.
All other components, for $k\geq 5$ on Fig.~\ref{fig:figure1}(d) and $k\geq 7$ on Fig.~\ref{fig:figure1}(f)), are considered as inactive since associated to components with comparable factors and much lower alignment.
This observation motivated the procedure proposed in Section~\ref{subsec:selecting_true_K} elaborated on KS tests to build the estimator $\Kkolmo$, see Eq.~\eqref{eq:estimators:def_kolmo_estimators}. Recall that $\Kkolmo$ will be especially useful when the signal to noise ratio is close to 1 for some components, that is $\delta_k^2\simeq 1$.

These first experiments show that the proposed BNP-PCA is able to identify the relevant latent subspace through its dimension $K$ as well as principal components $\Vfac_k$ and their corresponding scale factors $\delta_k^2$. They also indicate that $\widehat{K}_{\mathrm{mMAP}}$ seems to be consistent in contrast with $\widehat{K}_{\mathrm{mMAP},\alpha}$, see Theorems~\ref{th:inconsistance} \& \ref{th:inconsistance_severe} of Section~\ref{subsec:estimators:behaviour_pk}.

\subsection{Marginal MAP estimator of the latent dimension} \label{subsec:easyExperiment:K_func_N}

This section experimentally investigates the behaviour of the marginal MAP estimator $\widehat{K}_{\mathrm{mMAP}}$ of the dimension of the latent subspace defined by \eqref{eq:K_mMAP}.
Note that this estimator is different from the marginal MAP estimator $\widehat{K}_{\mathrm{mMAP},\alpha}$ defined in \eqref{eq:K_mMAP_alpha} which was still conditioned upon $\alpha$. Indeed, in the Bayesian model proposed in Section \ref{sec:model}, a prior distribution is assigned to the hyperparameter $\alpha$ which is thus jointly inferred with the parameters of interest. While Theorem~\ref{th:inconsistance} of section~\ref{subsec:estimators:behaviour_pk} says that $\widehat{K}_{\mathrm{mMAP},\alpha}$ with fixed $\alpha$ is inconsistent, we will empirically show that $\widehat{K}_{\mathrm{mMAP}}$ seems to be consistent.

\begin{figure*}
	\centering
	\includegraphics[width=0.95\textwidth]{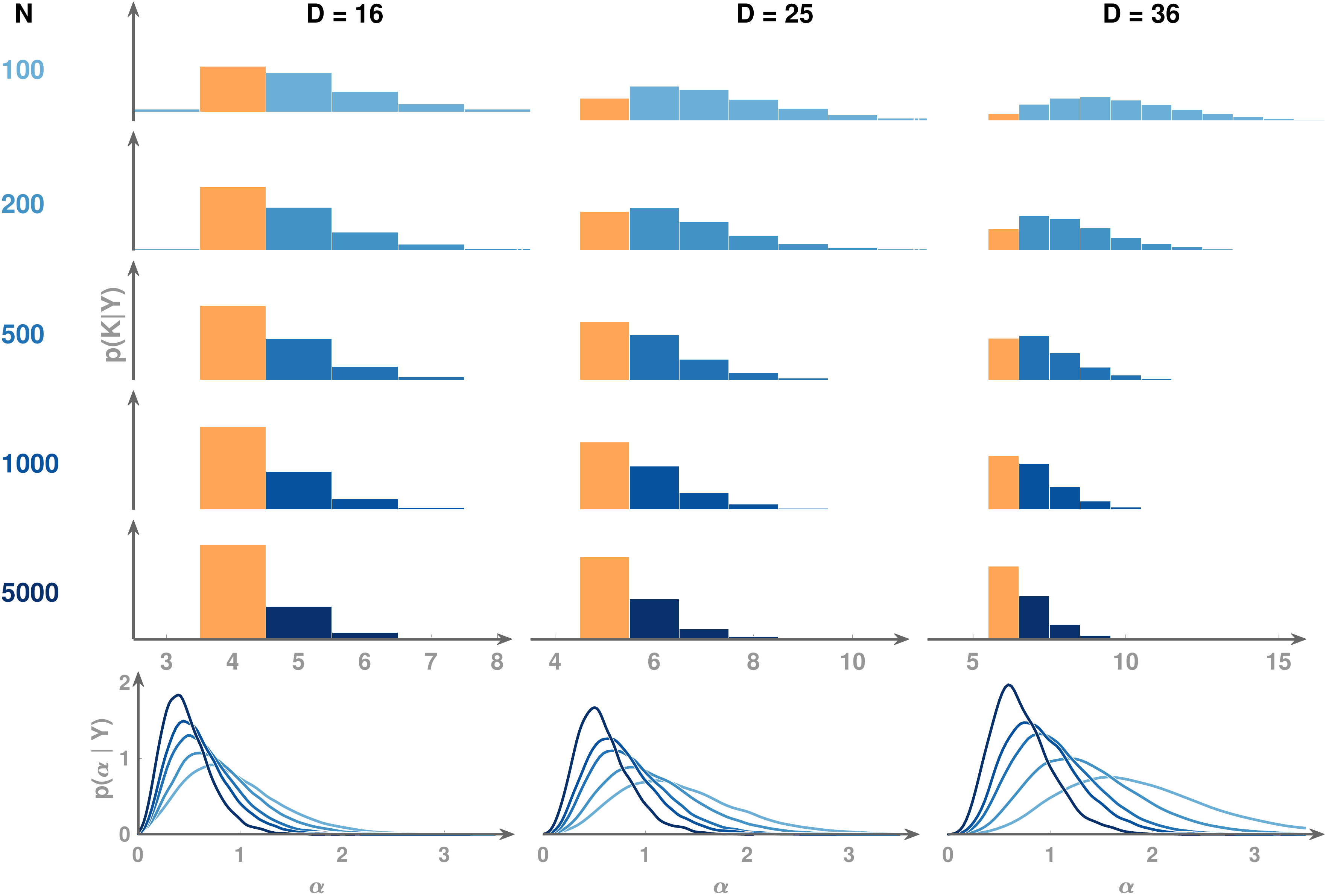}
	\caption{Empirical posterior probabilities $\mathrm{P}\left[\dimcoef=k|\MATobs\right]$ of the latent dimension for (left) $\dimobs=16$, (center) $\dimobs=25$, (right)$\dimobs=36$ and $\Nobs\in\left\{100,200,500,1000,5000\right\}$. The orange bars indicate the true dimension $K$ of the latent subspace. Bottom plots are the empirical marginal posterior distributions $f\left(\alpha | \MATobs\right)$ where the number of observations $\Nobs$ increases when the line color goes from light to dark blue lines.
	}
 	\label{fig:exp:synthetic:func_N}
\end{figure*}

Fig.~\ref{fig:exp:synthetic:func_N} shows the empirical posterior probabilities $\mathrm{P}\left[\dimcoef=k|\MATobs\right]$ when all the scaling factors have been fixed to values significantly higher than 1, such that $\zellnervar_\indcoef = 50 / \indcoef$, $1\leq \indcoef\leq K$.
Actual subspace dimensions are $K = \sqrt{D}$ for $\dimobs\in \left\{16, 25, 36\right\}$. This figure shows that, for $\dimobs=16$, the marginal MAP estimator $\widehat{K}_{\mathrm{mMAP}}$ correctly recovers the latent dimension for all values of $\Nobs$. The proposed model needs around $\Nobs=500$ observations for $\dimobs=25$, and $\Nobs=1000$ for $\dimobs=36$. All posteriors seem to concentrate around the true value $K = \sqrt{D}$ as the number of observations increases: these  numerical results suggest a consistent behaviour of the estimator.

These findings do not contradict Theorem~\ref{th:inconsistance} which states that the marginal MAP estimator of $\dimcoef$ is inconsistent {\em for fixed $\alpha$}. In contrast, sampling $\alpha$ jointly with the other parameters leads to a marginal MAP estimator $\widehat{K}_{\mathrm{mMAP}}$ which seems to be consistent, at least based on our numerical experiments. By examining the empirical marginal posterior distributions $f\left(\alpha | \MATobs\right)$ reported in Fig.~\ref{fig:exp:synthetic:func_N} (last row), one can note that this distribution seems to get closer to $0$ as the number of observations $\Nobs$ increases. Exploiting the fact that $\mathbb{E}[K]$ a priori scales as $\alpha \log(\Nobs)$, the posterior behaviour of the latent subspace dimension seems to result from a decreasing estimated value of $\alpha$, this is expected. Moreover, recall that Theorem~\ref{th:inconsistance} states that the marginal posterior probabilities $\mathrm{P}\left[ \dimcoef_N = k |\MATobs_\Nobs, \alpha \right]$ does not admit $1$ as a limit for any value $k$. However, it does not state that the mode cannot converge to the true value.

Finally, let us recommend that a certain care be taken anyway when resorting to these posterior probabilities. We have shown that the proposed estimator $\widehat{K}_{\mathrm{mMAP}}$ can exhibit a good asymptotic behaviour, but how this asymptote behaves still seems to depend both on the generative model and the experiment settings and is out of the scope of the present paper.

\subsection{The BNP-PCA of white Gaussian noise} \label{subsec:whitenoise}

In this experiment, the scaling parameters are all chosen as $\delta_k^2 = 0$, leading to observed measurements $\Vobs_\nobs$ ($\nobs=1,\dotsc,\Nobs$) only composed of white Gaussian noise. In this particular case, data do not live in a particular subspace. The purpose of this first basic experiment is to check whether the algorithm is able to detect that no component is relevant, i.e., $K=0$ since data behaves like white Gaussian noise.
More precisely, since the signal is only composed of isotropic noise, the empirical covariance matrix of the observed vectors verifies
\begin{equation}
	\lim_{\Nobs \rightarrow + \infty} \Nobs^{-1} \MATobs\MATobs\transp = \noisevar \indmat_\dimobs.
\end{equation}
According to Section \ref{subsec:selecting_true_K} and Theorem~\ref{th:distribution_scalar_product}, the posterior distribution of a potential active direction in \eqref{eq:posterior:Pk} should asymptotically tend to be
$ \propto\exp \left( \Vfac\transp \Vfac/4 \right)$ that is constant since $ \Vfac\transp \Vfac=1$ by definition: one expects that the $\Vfac_\indcoef$ be uniformly distributed on the unit sphere.
BNP-PCA estimates scale factors that are all comparable given the prior. Therefore BNP-PCA does not identify any special latent subspace in this case.


\begin{table}
\caption{
	\label{tab:exp:OnlyNoise:tests}
	Results of Kolmogorov-Smirnov goodness-of-fit tests at level 0.05 averaged over 20 Monte Carlo simulations when the signal is made of $\Nobs=500$ $D$-dimensional realizations of an isotropic Gaussian noise.	Scores reported in each column correspond to the probability of rejecting the null hypothesis for a subspace of candidate dimension $K$.
}
\centering
\vspace{3pt}
\begin{tabular}{l | cccccc}
\toprule[1.5pt]
	\textbf{$K$} &
	\textbf{0} &
	\textbf{1} &
	\textbf{2} &
	\textbf{3} &
	\textbf{4} &
	\textbf{5} \\
	\midrule
    $\dimobs = 9 $ & 0.05 &    0.05 &   0.05 &    0.05 &    0.05 &    0 \\
    $\dimobs = 16$ & 0.05 &    0    &   0.05 &    0.05 &    0    &    0 \\
    $\dimobs = 25$ & 0.05 &    0.1  &   0.05 &    0.1  &    0.05 &    0 \\
    $\dimobs = 36$ & 0.05 &    0.05 &   0.05 &    0.05 &    0.05 &    0.05 \\
\bottomrule[1.5pt]
\end{tabular}
\end{table}

Table~\ref{tab:exp:OnlyNoise:tests} shows the results provided by the Kolmogorov-Smirnov (KS) goodness-of-fit test described in Section \ref{subsec:selecting_true_K}. More precisely, for $\Nobs=500$ and $\dimobs \in \left\{9, 16, 25, 36\right\}$, Table~\ref{tab:exp:OnlyNoise:tests} reports the probability of rejecting the null hypothesis $\mathcal{H}_0^{(K)}$ in \eqref{eq:KS_test} for candidate dimensions $K\in\left\{0,\dotsc,5\right\}$ of the latent subspace, i.e., $L=D-K\in\ldots\left\{\dimobs,\dotsc,\dimobs-5\right\}$. These results computed from $20$ Monte Carlo simulations show that the null hypothesis is very often rejected with a probability of the order of $0.05$, which corresponds to the chosen rejection level of the KS test here: it is considered as accepted (not rejected). Similar results are obtained for $K\in\left\{6,\dotsc,\dimobs\right\}$. As expected, the estimator $\Kkolmo$ defined by \eqref{eq:estimators:def_kolmo_estimators} well recovers the actual dimension of the latent subspace, i.e., $K=0$ here since the data is simply white Gaussian noise only.

\subsection{Influence of the distribution of  scaling factors} \label{subsec:anisotropic}

The third experiment aims at investigating two aspects of BNP-PCA. The first question is how far principal components are well recovered. The second aspects concerns the limitations of the proposed method when some scaling factors $\delta_k^2$ are below $1$, leading to poorly relevant directions of the latent subspace with respect to the noise level. More precisely, $\Nobs$ measurement vectors have been generated according to the model \eqref{eq:easyExp:generativeModel} with $\Nobs\in\left\{200,2000\right\}$, $\dimobs=16$ and $K=16$  with scaling factors $\delta_k^2 = 10 / k^{2.2}$ ($k=1,\dotsc,K$), such that the first 5 scaling factors are $[10, 2.2, 0.9, 0.5, 0.3]$; only 2 are larger than 1.
This setting permits to play with individual signal-to-noise ratios specified in each direction. Since the scaling factors $\delta_k^2$ are lower than $1$ for $k \geq 3$, not all directions are expected to be recovered. 
%

\begin{figure}
	\centering
	\includegraphics[width=\columnwidth]{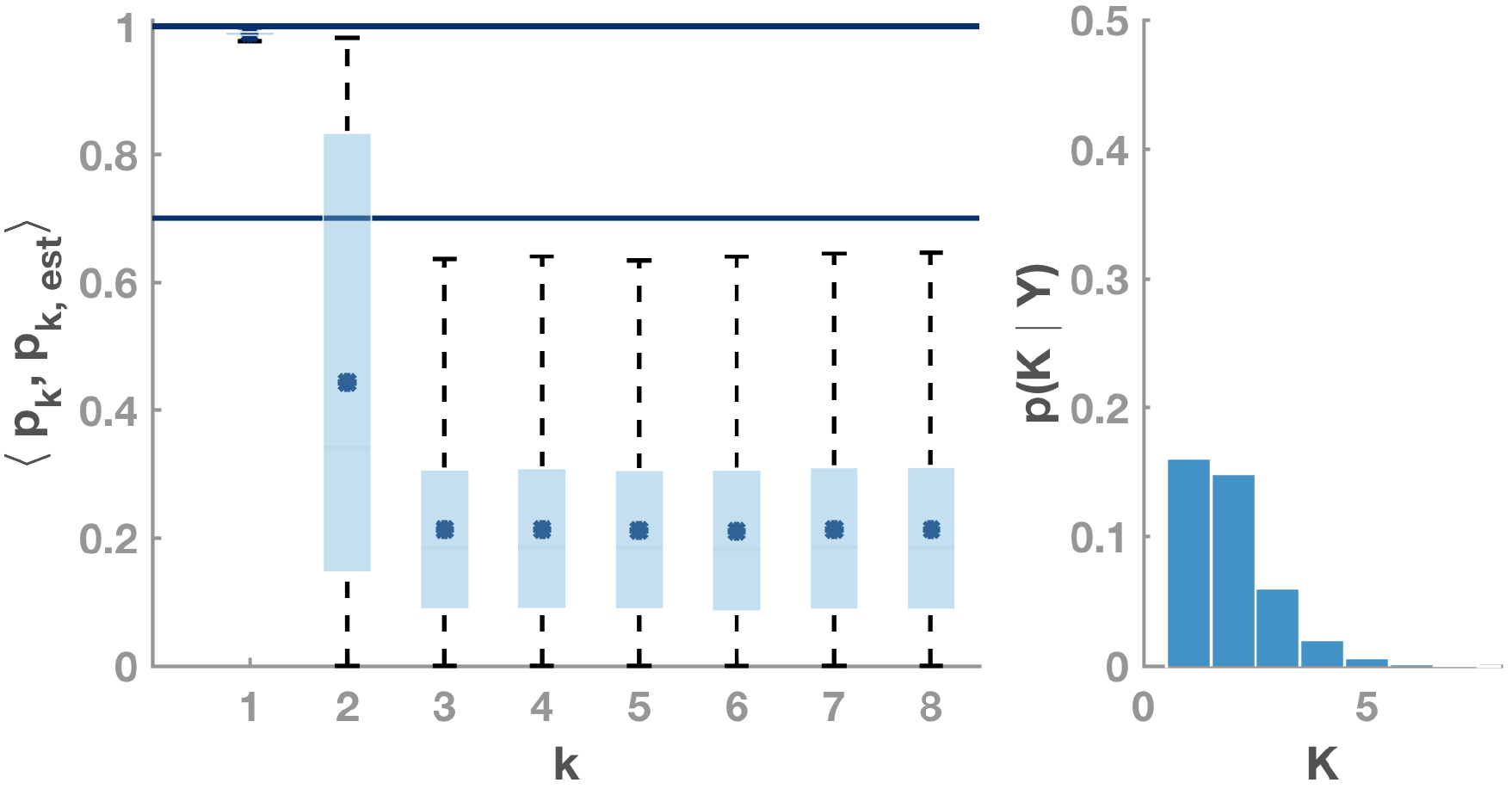}
	\includegraphics[width=\columnwidth]{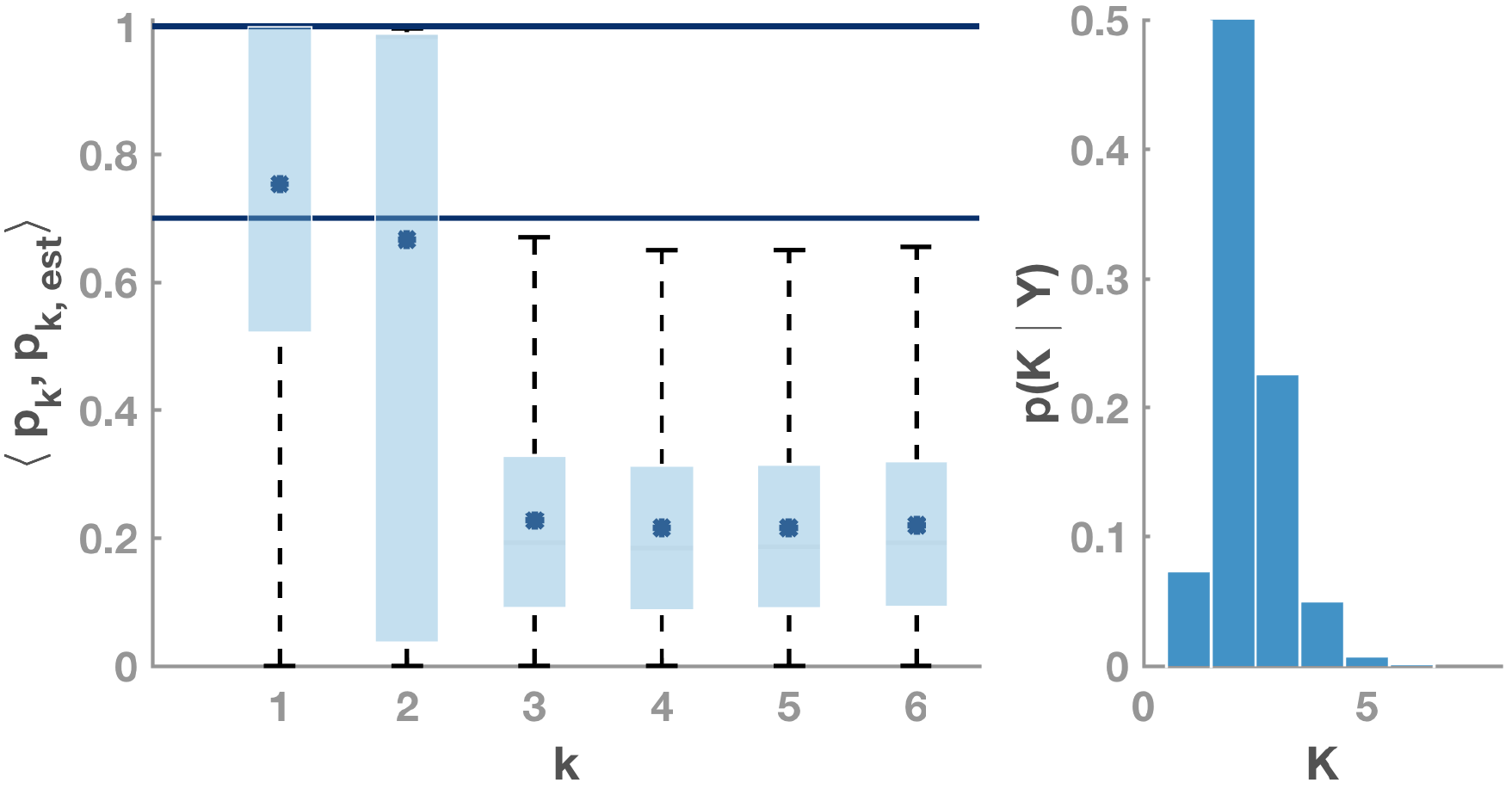}
	\caption{	\label{fig:anisotropic:results:posteriors}
		Marginal posterior distributions in case of signal with anisotropic noise, for $\dimobs=16$, $\Nobs=200$(top) and $\Nobs=2000$(bottom).
	}
\end{figure}

Fig. \ref{fig:anisotropic:results:posteriors} (right) shows the empirical marginal posterior probability of the latent dimension. These probabilities lead to marginal MAP estimators \eqref{eq:K_mMAP} of the latent dimension equal to $ \widehat{K}_{\mathrm{mMAP}}=2$ for both cases ($N=200$ and $N=2000$). The alternative estimator $\Kkolmo$ of the latent subspace derived from the Kolmogorov-Smirnov test (see Section \ref{subsec:selecting_true_K}) leads to estimates between $2$ (65\% provides $\Kkolmo=2$ for $\Nobs=200$) and $3$ (95\% provides $\Kkolmo=3$ for $\Nobs=2000$). These experiments indicate that BNP-PCA fails to detect principal components weaker than the noise level.

Fig.~\ref{fig:anisotropic:results:posteriors} (left) depicts the estimated inner products $\langle \Vfac_{k}, \hat{\Vfac}_{k} \rangle$ and corresponding confidence intervals computed from $50$ Monte Carlo simulations where $\hat{\Vfac}_{k}$ denote the estimated direction vectors. A high score (like a cosine) indicates a good alignment of the vectors, thus a correct recovery of the corresponding latent direction. This figure shows that, for $N=200$ (top), the proposed model accurately identifies the first component only among the two expected from $ \widehat{K}_{\mathrm{mMAP}}=2$. For larger $N=2000$ (bottom) the alignment is better and the 2 predicted components are well recovered as attested by the good alignment between the $\hat{\Vfac}_{k}$ and $\Vfac_k$. However, in both cases, the proposed strategy is not able to extract components with scaling factors $\delta_k^2$ smaller than $1$: they are identified \textcolor{red}{to} noise, as expected from signal-to-noise ratios.



\section{Applications} \label{sec:experimentsReal}

\subsection{BNP-PCA and clustering} \label{subsec:clustering}

To illustrate the flexibility of the proposed model, a simple experiment where the dimension reduction is combined with a linear binary classifier is presented.
The representation coefficients in Eq.~\eqref{eq:model:model} are now modeled by a mixture of two Gaussian distributions corresponding to 2 distinct clusters
\begin{equation}
\label{eq:GMM}
	\forall \nobs, \quad \Vcoef_{\nobs} {}\sim{} \pi \; \mathcal{N}\left( \boldsymbol{\mu}_0, \boldsymbol{\Delta}_0 \right) + (1 - \pi) \; \mathcal{N}\left( \boldsymbol{\mu}_1, \boldsymbol{\Delta}_1 \right),
\end{equation}
where $\boldsymbol{\mu}_i = \left[\mu_{i,1},\dotsc,\mu_{i,K}\right]\transp$ and $\boldsymbol{\Delta}_i = \mathrm{diag}\left\{\zellnervar_{i,1},\dotsc, \zellnervar_{i,K} \right\}$ for $i\in\left\{0, 1 \right\}$ are respectively the mean and the covariance matrix associated with each class.
A common centered Gaussian distribution is used as the prior distribution for the mean vectors $\boldsymbol{\mu}_i$ ($i \in \left\{0,1\right\}$) assumed to be a priori independent, i.e., $\boldsymbol{\mu}_i\sim\mathcal{N}\left( \boldsymbol{0}, s^2 \indmat \right)$. Note that the use of non-informative priors are prohibited here due to posterior consistency. Additionally, a binary label vector $\boldsymbol{\eta} = \left[\eta_1,\dotsc,\eta_\Nobs\right]^T$ which indicates whether the $\nobs$th observation belongs to the class ${\cal C}_0$ or ${\cal C}_1$ is assigned equiprobable prior probabilities and will be jointly estimated with the parameters of interest. Analytical marginalization w.r.t. to the scale factors remains tractable. All prior distributions are conjugate, yielding conditional posterior distributions that can be easily derived and sampled as described in Section~\ref{subsec:model:inference}.

\newcommand{\KMa}{kmean-np}
\newcommand{\KMb}{kmean-pca}
\newcommand{\EMa}{EM-nz}
\newcommand{\EMb}{EM-pca}
\newcommand{\USa}{us-1}
\newcommand{\USb}{us-2}


~ \\
\noindent \textbf{Results on a subset of the MNIST database.}
The performance of the proposed algorithm is illustrated on a subset of the MNIST database\footnote{Available online at \url{http://ufldl.stanford.edu/wiki/index.php/Using_the_MNIST_Dataset}}, obtained by extracting the first $200$ images associated with the digits $6$ and $7$.
Each image is encoded as a vector in lexicographic order where pixels with null variance (i.e., pixels mainly located in the image corners) have been removed, leading to observation vectors of dimension $D=572$. The objective of this experiment is to evaluate the need and impact for dimension reduction for this binary classification task. The results provided by the proposed method are compared with those obtained by using an expectation-maximization (EM) algorithm\footnote{Available through the \textit{gmdistribution} class of MATLAB.} as well as an MCMC algorithm, both inferring the parameters associated with the conventional Gaussian mixture model \eqref{eq:GMM} described above. Both algorithms, denoted respectively by GMM-EM and GMM-MCMC, are preceded by a supervised dimension reduction preprocessing which consists in computing the first $K$ principal components, for a wide set of dimensions $K$. We emphasize that the proposed BNP-PCA approach combined to an MCMC algorithm for inference addresses jointly the dimension reduction and classification tasks as well as it identifies the dimension of the relevant latent subspace and estimates the noise level.

To overcome the problem of label switching inherent to MCMC sampling of mixture models, the samples generated from the proposed Bayesian nonparametric approach and the Bayesian parametric GMM-MCMC algorithms are postprocessed appropriately \citep[Chapter 6-4]{Marin2007}. More precisely, first, the two farthest observation vectors (in term of Euclidean distance) are assumed to belong to distinct classes. Gibbs sampler iterations leading to equal labels for these two observations are discarded. For remaining iterations, all the generated labels are reassigned in agreement with consistent labels for these two particular observations.

Classification performance is evaluated by the resulting labeling errors. All results have been averaged over $20$ Monte Carlo simulations.

\begin{figure}
	\centering
	\includegraphics[width=\columnwidth]{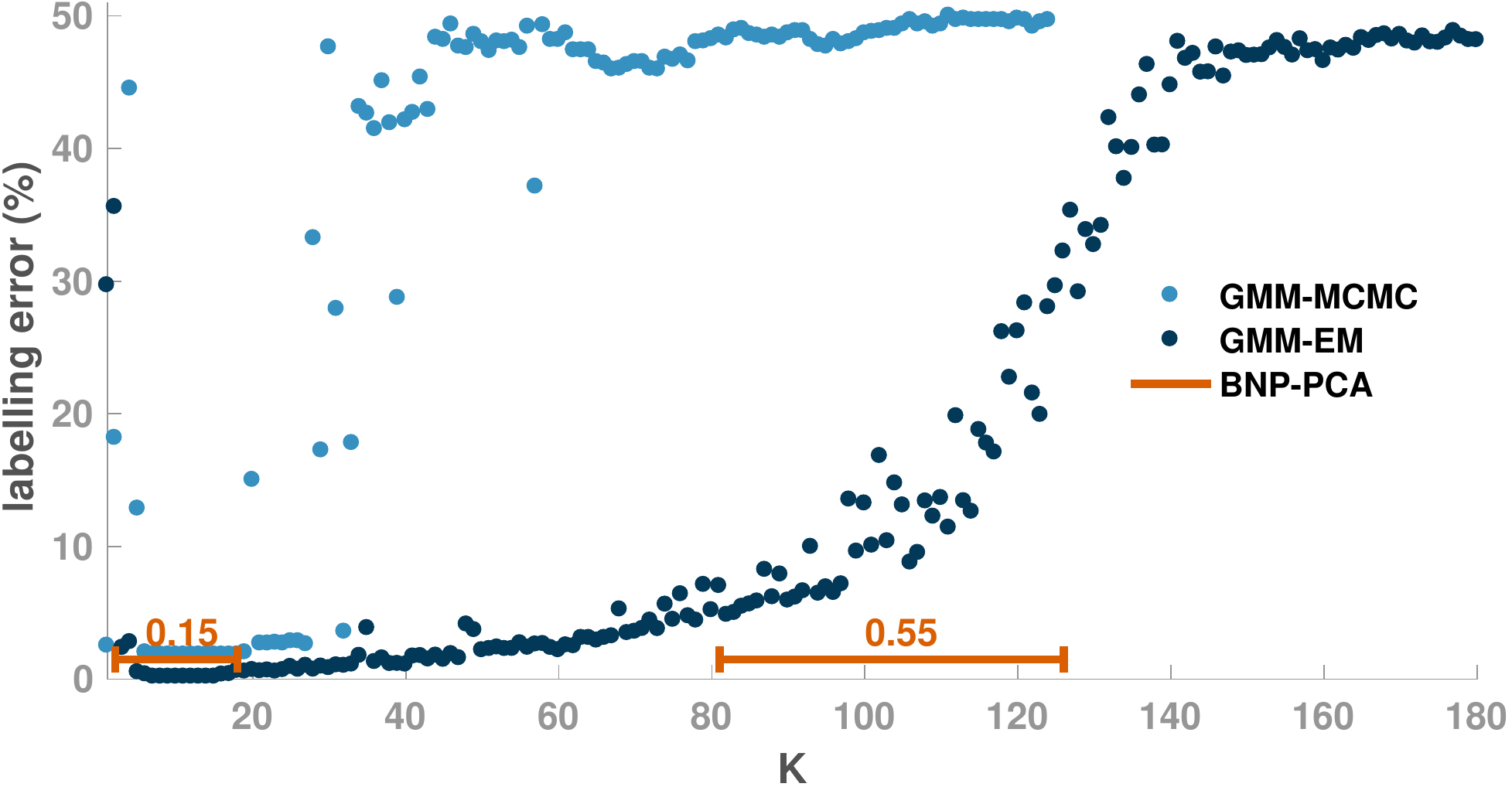}
	\caption{\label{fig:exp:clustering:mnist}
		Clustering results for the 200 first images of the MNIST database for digits $6$ and $7$.
	}
\end{figure}

Fig.~\ref{fig:exp:clustering:mnist} shows the clustering results for the 2 parametric methods compared to BNP-PCA.
Both parametric methods, GMM-EM and GMM-MCMC, show labeling errors close to $1$\% when using few principal components as input features, but exhibit a phase transition leading to error up to $50$\% when retaining too much principal components. Note that the phase transition occurs later for the EM-based algorithm that seems to be more robust, but a more elaborated MCMC method may have exhibited a similar performance.
The proposed Bayesian nonparametric method shows an average labeling error of about $1.5$\%. Fig.~\ref{fig:exp:clustering:mnist} indicates the typical ranges of values visited by the sampled latent dimension (brown lines). The intervals $\dimcoef \in [3, 18]$ and $\dimcoef \in [83, 130]$ correspond to $70$\% of the samples. It is noticeable that the two parametric methods reach their best performance when considering a number $\dimcoef$ of principal components belonging to the first interval.




\subsection{Hyperspectral subspace identification} \label{sec:experimentsHyperspectral}

As a second pratical illustration, the BN-PCA is employed to solve a key preprocessing task for the analysis of hyperspectral images. An hyperspectral image consists of a collection of several hundreds or thousands of $2$D images acquired in narrow and contiguous spectral bands. Such images can be interpreted as a collection of spectra measured at each pixel location. A classical objective is the recovering of spectral signatures of the materials that are present in the scene as well as their spatial distributions over the scene. A common assumption in spectral unmixing is to consider that each measured spectrim is a noisy convex combination of the unknown elementary spectral signatures called {\em endmembers}. The combination coefficients correspond to the unknown proportions to be estimated. Thus this so-called spectral unmixing can be formulated as a classical blind source separation or nonnegative matrix factorization problem. One crucial issue lies in the fact that the number $R$ of endmembers (i.e., the order of decomposition/factorization) present in the image is generally unknown in most applicative scenarios. However, under the hypothesis of a linear mixing model, measurements should lie in a $K$-dimensional linear subspace with $K=R-1$. As a consequence, most of the spectral unmixing techniques first estimate the relevant latent subspace by a dimension reduction step such as PCA. Then one usually considers \citep{Bioucas2012jstars} that the number of materials present in the scene is $R=K+1$. Precisely, the proposed BNP-PCA can identify the number $R$ of components that are significant in an hyperspectral image.

A real hyperspectral image, referred to as ``Cuprite hill'' and acquired by the Airborne Visible/Infrared Imaging Spectrometer (AVIRIS) over Cuprite, Nevada, is considered. The image of interest consists of $1250$ pixels observed in $190$ spectral bands after spatial subsampling in horizontal and vertical directions of a factor 2 and after removing the spectral bands of low SNR  typically corresponding to the water absorption bands. Then the hyperspectral image has been whitened according to the noise covariance matrix estimated by the strategy described by \cite{bioucas-dias2008}.

The proposed BNP-PCA based method is compared to the generic methods referred to as L-S and OVPCA introduced by \citet{minka2000_rd} and \citet{Smidl2007}, respectively, as well as to the hyperspectral-specific subspace identification algorithm HySime \citep{Bioucas2008}. The proposed Gibbs sampler has been run during $1100$ iterations including a burn-in period of $100$ iterations.


The HySime algorithm estimates a hyperspectral subspace of dimension $\widehat{K} = 10$ while L-S and OVPCA lead to $\widehat{K}=25$ and $\widehat{K} = 23$, respectively. There is no oracle correct number of materials or dimension of the latent subspace. Examining the crude mapping of the materials conducted by \cite{Clark1993} and \cite{Clark2003} permits to state that it is highly unlikely that more than $15$ materials are present in the considered region of interest.  Specialists generally agree about a number of components between 10 and 15. It appears that both HySime and OVPCA overestimate the number of endmembers
Using BNP-PCA on the same dataset, the marginal MAP estimator defined by \ref{eq:K_mMAP} yields $\widehat{K}_{\mathrm{mMAP}} = 25$ while the implementation of the Kolmogorov-Smirnov goodness-of-fit test detailed in Section \ref{subsec:selecting_true_K} leads to a latent subspace dimension estimate $\widehat{K}_{\mathrm{KS}}=13$ which is quite coherent with the expected value.


\begin{figure}
	\centering
	\includegraphics[width=\columnwidth]{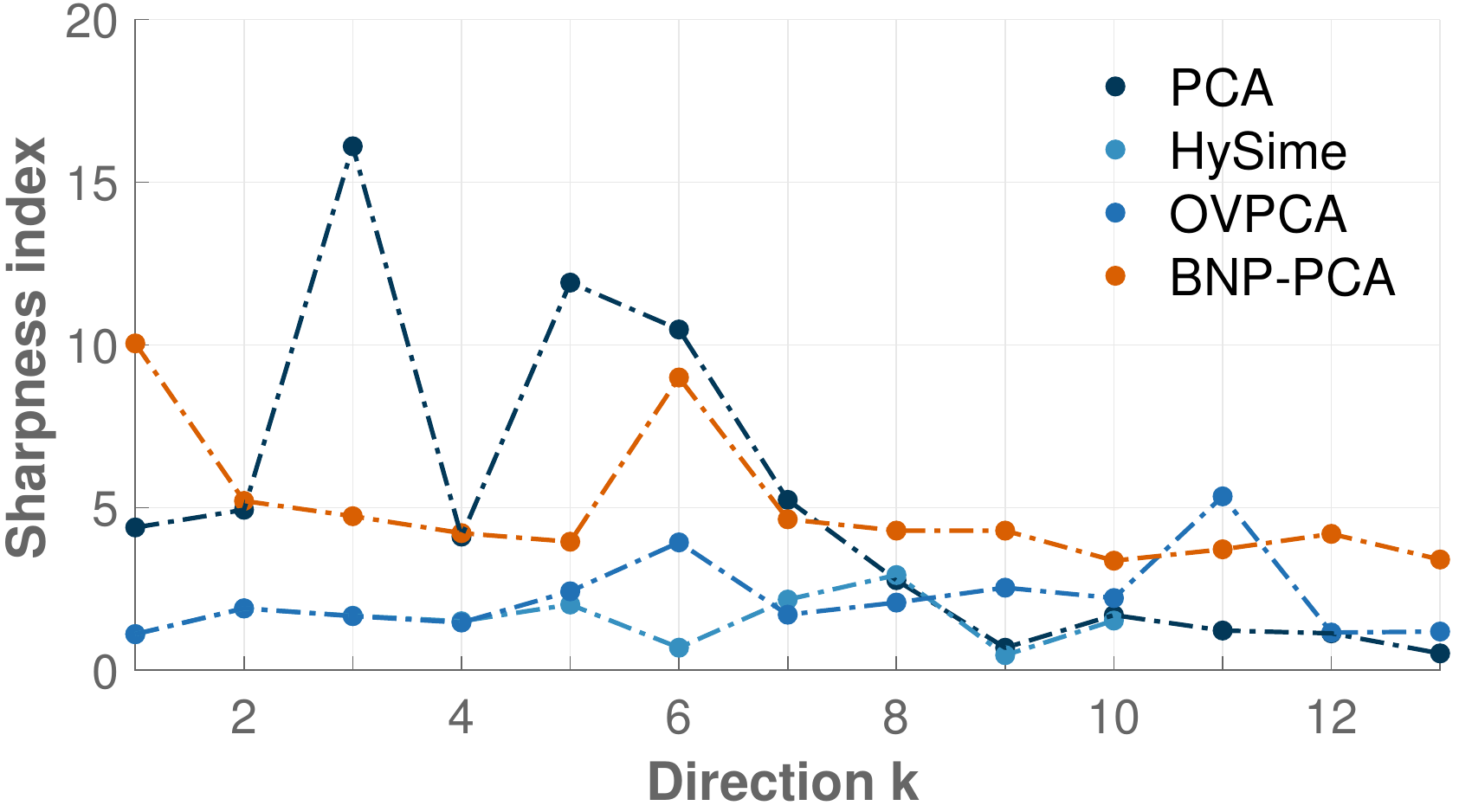}
	\caption{Sharpness index of the images resulting from the projection onto the directions inferred by PCA (dark blue) and the proposed method (light blue).}
	\label{fig:sharpnessIndex}
\end{figure}

To evaluate the relevance of the $K$ directions recovered by BNP-PCA, the measured hyperspectral spectra are orthogonally projected on each direction $\Vfac_{1},\dotsc,\Vfac_{K}$. The resulting $K$ images are supposed to explain most of the information contained in the original hyperspectral image with respect to each endmember. They are expected to individually provide relevant interpretation of the scene. The sharpness index introduced by \citet{blanchet2012} as a ground truth-free image quality measure is computed on each image. Figure~\ref{fig:sharpnessIndex} features the corresponding scores for each direction. 
These values are compared with those similarly obtained by a standard PCA. Figure~\ref{fig:sharpnessIndex} shows that our method consistently provides better scores, except for components 3, 5 and 6. This can be empirically explained by the fact that more spatial information (structure and texture) has been recovered by BNP-PCA due to its sparsity promoting property. It ensures a better separation between relevant components and purely random white process than the images projected on the principal components identified by a standard PCA.

\section{Conclusion} \label{sec:conclusion}

This paper indroduces a Bayesian nonparametric principal component analysis (BNP-PCA). This approach permits to infer the orthonormal basis of a latent subspace in which the signal lives as an information distinct from white Gaussian noise. It relies on the use of an Indian buffet process (IBP) prior which permits to deal with a family of models with a potentially infinite number of degrees of freedom. The IBP features two regularizing properties: it promotes sparsity and penalizes the number of degrees of freedom.

Algorithms implementing a Markov chain Monte Carlo (MCMC) sampling are described for all parameters according to their conditional posterior distributions. BNP-PCA appears to be close to completely nonparametric since no parameter tuning or initialization is needed and the most general priors are used. Compared to a parametric approach based on RJ-MCMC, the Markov chain is much easier to implement and mixes much more rapidly. One limitation of the proposed approach is the use of MCMC for inference: faster estimates may be obtained by resorting to variational inference for instance.

Since one may be interested in a BNP approach to estimate the dimension $K$ of the latent subspace (or equivalently the number of degrees of freedom), we have studied the theoretical properties of some estimators based on BNP-PCA in the case where the parameter $\alpha$ of the IBP is fixed. Theorems~\ref{th:inconsistance} \&~\ref{th:inconsistance_severe} show that the marginal MAP (mMAP) estimate of $K$ is not consistent in this case: its posterior does not asymptotically concentrate on any particular value as the number of observations increases.

Numerical experiments show that the proposed BNP-PCA that considers the parameter $\alpha$ of the IBP as an unknown parameter yields very good results. In particular, experimental results indicate that the mMAP estimate of $K$ seems to be consistent (as soon as $\alpha$ is not fixed anymore). To make our approach even more robust, we have elaborated on a Kolmogorov-Smirnov test to propose a method to accurately identify the dimension of the relevant latent subspace. An expected limitation is that a principal component may not be recovered when its energy/eigenvalue is below the noise level.
%
Finally, we have applied BNP-PCA to two classical problems: clustering based on Gaussian models mixture applied to the MNIST dataset and linear unmixing of hyperspectral images (or more generally matrix factorization). The clustering performance of the proposed approach is very good. The inspection of the significance of the elementary images (also called endmembers) estimated from a hyperspectral image is in favour of BNP-PCA compared to standard PCA: each component seems to extract more detailed information as attested by image-guided diagnosis.
 Performed on real datasets, these experiments show that BNP-PCA can be used in a general Bayesian model and yield good performance on real applications. Again we emphasize that the resulting approach will call for very few parameter tuning only.

Based on these encouraging results, future work will aim at studying the consistency of both the new KS-based estimator and the marginal MAP estimator when the IBP parameter has been marginalized. We plan to use BNP-PCA as a subspace identification strategy in a refined linear hyperspectral unmixing method.

\appendix


\section{Marginalized posteriori distribution}
	\label{app:marginalize_coeffs}


\paragraph{}
The marginal posterior distribution is obtained by computing
\begin{equation*}
	f\left( \paramvect,\hypervect  \given \MATobs\right) = \int_{\R^{\dimobs\Nobs} } f\left( \MATobs \given \paramvect,\MATcoef\right) f\left(\paramvect,\MATcoef \given \hypervect \right) f\left(\hypervect\right) \mathrm{d} \MATcoef.
\end{equation*}
The rationale of the proof is to split the exponential in two.
The coefficients $\coefkn$ corresponding to non activated block in $\MATibp$, \textit{i.e.}, for which $\ibpkn=0$, vanish.
The remaining constant is $\prod_{\indcoef=1}^\dimcoef (2\pi \zellnervar_\indcoef)^{-\Vibp_\indcoef\transp\Vibp_\indcoef/2}$ where $\Vibp_\indcoef$ denotes the $\indcoef^{\text{th}}$ row.

\paragraph{}
The remaining exponential term becomes
\begin{equation}
	\label{eq:app_marginal_posteriori:expo_term}
	- \frac{1}{2\noisevar} \sum_{\nobs=1}^{\Nobs} \Bigg( \Vert\Vobs_{\nobs} - \sum_{ \substack{\indcoef \\ \ibpkn=1}} \Vfac_{\indcoef} \Vcoef_{\nobs}\Vert_2^2
	+ \sum_{ \substack{\indcoef \\ \ibpkn=1}} \frac{1}{\zellnervar_\indcoef} \Vcoef_{\nobs}\transp \Vcoef_{\nobs}  \Bigg) .
\end{equation}
The $\ell_2$ norm in Eq.~\eqref{eq:app_marginal_posteriori:expo_term} can be easily simplified since $\Vfac_{l}\transp\Vfac_{m} = \delta_{l, m}$ where $\delta_{l, m}$ is the Kronecker symbol.
In addition, the posterior in Eq.~\eqref{eq:app_marginal_posteriori:expo_term} is conjugated to a Gaussian distribution.
The remaining terms after integration are a constant \\$\big( 2\pi \zellnervar_\indcoef\noisevar /(1 + \zellnervar_\indcoef) \big)^{\Vibp_\indcoef\transp\Vibp_\indcoef/2}$ as well as terms proportional to $\Vobs_{\nobs}\transp \Vfac_{\indcoef} \Vfac_{\indcoef}\transp \Vobs_{\nobs}$ which can be rewritten as $\big(\Vfac_\indcoef\transp \Vobs_\nobs\big)^2$.
The marginal posterior Eq.~\eqref{eq:posterior_marginalized} is obtained by combining all these terms.

\section{Shifted inverse gamma distribution}
	\label{app:sIG}

\begin{figure}
	\includegraphics[width=\columnwidth]{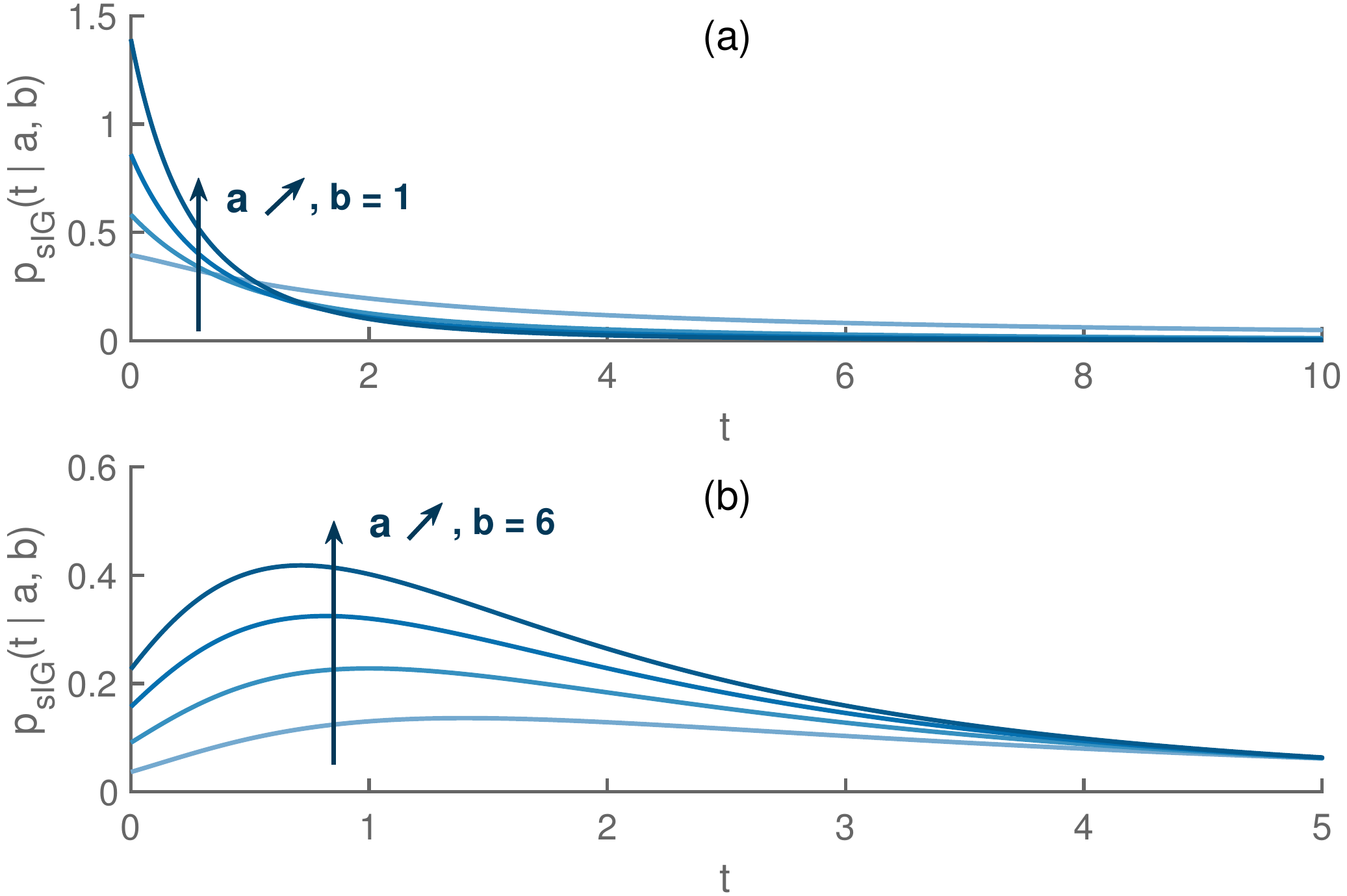}
	\caption{
		\label{fig:app:sIG}
		pdf of the sIG distribution for (a) $a=0.25, 1, 1.5, 2$ and $b=1$,  and (b)  $a=1.5, 2, 2.3, 2.5$ and $b=6$.
	}
\end{figure}

The sIG pdf is defined for all real $x >0$ by
\begin{equation}
	\prob_{\mathrm{sIG}} \big( x  \given a, b \big) = \frac{ b^a }{\gamma \big(a, b \big)}
	\big(1 + x\big)^{-(a+1)} \exp \left( - \frac{b}{1 + x} \right) 
\end{equation}
with shape parameter $a$ and rate parameter $b$, and $\gamma(a,b) = \int_{0}^a t^{b-1}e^{-t}\mathrm{d}\,t$ is the lower incomplete gamma function.
If $b>a+1$, it is easy to see that the pdf has a unique maximum in $\frac{b}{a+1}-1$, but no maximum otherwise.
Fig.~\ref{fig:app:sIG} displays the pdf of the sIG distributions for several values of $a$ and $b$.

if $X \sim \mathrm{sIG}(a, b)$, the two first moments of $X$ are given by
\begin{align}
	\mathbb{E}[X] \,=\,& b \frac{\gamma(a-1, b)}{\gamma(a, b)} - 1 \\
	\mathrm{var}(X) \,=\,& b^2 \Bigg( \frac{\gamma(a-2, b)}{\gamma(a, b)} - \left(\frac{\gamma(a-1, b)}{\gamma(a, b)}\right)^2 \Bigg)  .
\end{align}
Note finally that the sIG distribution can be easily sampled by resorting to the change of variable $u=1+\zellnervar_\indcoef$ where $u^{-1}$ follows a Gamma distribution of parameters $a_\zellner$ and $b_\zellner$  truncated on the segment $(0, 1)$.

\section{Jeffreys' prior for the IBP hyperparameter}
	\label{app:ibp_jeffrey}

By definition, the Jeffreys' prior is given by \citep[Ch. 2]{Marin2007}
\begin{equation}
	f(\ibpparam) \propto \sqrt{ \mathbb{E} \bigg[ \Big(\frac{\mathrm{d}}{\mathrm{d}\,\ibpparam} \log \Prob \big[ \MATibp | \ibpparam \big] \Big)^2 \bigg] }.
\end{equation}
Since $\frac{\mathrm{d}}{\mathrm{d}\,\ibpparam} \log \Prob \big[ \MATibp | \ibpparam \big] = \frac{\dimcoef}{\ibpparam} - \sum_{\nobs=1}^\Nobs \frac{1}{\nobs}$, and does not depend on $\MATibp$,
\begin{equation}
	\mathbb{E} \Bigg[ \Big(\frac{\mathrm{d}}{\mathrm{d}\,\ibpparam} \log \Prob \big[ \MATibp | \ibpparam \big] \Big)^2 \Bigg] = \Bigg( \frac{\dimcoef}{\ibpparam} - \sum_{\nobs=1}^\Nobs \frac{1}{\nobs} \Bigg)^2 .
\end{equation}
Thus $f(\ibpparam) \propto \ibpparam^{-1}$.



\section{Marginalized posterior distribution}
	\label{app:marginalizedPosteriorForMapm}

The marginal posterior distribution is obtained by integrating the marginal posterior given by Eq.~\eqref{eq:posterior_marginalized} with respect to the parameters $\zellnervar$ and $\ibpparam$. By mean of conjugacy, straightforward computations lead to
\begin{equation}
	\begin{split}
		f&\left( \MATfac, \MATibp, \noisevar \given \MATobs \right) =
		\left( \frac{1}{2\pi \noisevar} \right)^{\Nobs\dimobs/2} \exp \left[ \mathrm{trace}\left[ - \frac{1}{2\noisevar}\MATobs\MATobs\transp \right] \right] \\
		&\times \left( \frac{b_\delta^{a_\delta}}{\gamma(a_\delta, b_\delta)} \right)^\dimcoef \prod_{\indcoef=1}^\dimcoef \frac{ \gamma(a_\indcoef, b_\indcoef) }{b_\indcoef^{a_\indcoef}} \exp \left( \frac{1}{2\noisevar} \sum_\nobs \scalarProdSquare{\Vfac_\indcoef}{\Vobs_\nobs} \right) \\
		&\times \left( \sum_\nobs \frac{1}{\nobs} \right)^{-\dimcoef} \frac{  \Gamma(\dimcoef) }{ \prod_\indcoef \dimcoef_{\nobs}! } \prod_\indcoef \frac{(\Nobs - m_{\indcoef})! \; (m_\indcoef-1)!}{\Nobs!} \indfunc{ \mathbb{U}_{\dimobs} }{ \MATfac },
	\end{split}
\end{equation}
where for all $\indcoef$
\begin{align*}
	a_\indcoef {}={}& a_\delta + \Vibp_\indcoef\transp\Vibp_\indcoef \\
	b_\indcoef {}={}& b_\delta + \frac{1}{2\noisevar} \sum_\nobs \scalarProdSquare{\Vfac_\indcoef}{\Vobs_\nobs} .
\end{align*}

\section{Law and expectation of scalar product}
	\label{app:expectation_scalar_product}

{
\newcommand{\vinit}{\bsu}
\newcommand{\palea}{\bsv}
\newcommand{\qalea}{\bsq}

This section derives the marginal distribution of the projections evoked in Theorem~\ref{th:distribution_scalar_product} under the uniform distribution over $\stiefel{\dimobs-\dimcoef}{\dimobs}$.

\paragraph{}
\noindent\textbf{Area element of the sphere.}
The rationale of the proof is to adapt the vector to the area element in the $\dimobs$-dimensional Euclidean space expressed in spherical coordinate. The $\dimobs$-dimensional element parametrized by $\dimobs-1$ angles is given by
\begin{equation*}
	\mathrm{d}{}S^{\dimobs} = \sin^{\dimobs-2}(\phi_1) \sin^{\dimobs-3}(\phi_2) \dots \sin(\phi_{\dimobs-2}) \mathrm{d}{}\phi_1 \dots \mathrm{d}{}\phi_{\dimobs-1},
\end{equation*}
and the Cartesian coordinates $\palea_1\dots \palea_\dimobs$ of a vector $\palea$ are given by
\begin{align*}
	\palea_1\ =& \cos (\phi_1) \\
	\palea_2\ =& \sin(\phi_1) \cos(\phi_2) \\
	\vdots& \\
	\palea_{\dimobs-1}\ =& \sin(\phi_1) \dots \sin(\phi_{\dimobs-2})\cos (\phi_{\dimobs-1}) \\
	\palea_{\dimobs}  \ =& \sin(\phi_1) \dots \sin(\phi_{\dimobs-1}).
\end{align*}
The proof considers a non explicit rotation applied to $\vinit$ such that only the last component $\palea_{\dimobs}$ is involved in the scalar product.



\paragraph{}
\noindent\textbf{Proof.}
Let $\bfu$ be a unit vector of $\mathbb{R}^L$.
See $L$ here as the size of the orthogonal of the relevant component, $L=\dimobs-\dimcoef$.
Let $\boldsymbol{\nu}$ be a random variable uniformly distributed on the $L$-dimensional unit sphere.
Let also $w$ be the random variable associated to the scalar product $w =  |\langle \bfu, \boldsymbol{\nu} \rangle | =|\boldsymbol{\nu}\transp \bfu|$.
The density of $w$ will be obtain from the cdf
\begin{align}
	\mathrm{p}_w \left( w \leq \lambda \right) {}={}& \mathrm{p}_{\boldsymbol{\nu}} \left( \vert \boldsymbol{\nu}\transp\bfu  \vert \leq \lambda \right)
	={} \int \bm{1}_{\vert \boldsymbol{\nu}\transp \bfu \vert}(\boldsymbol{\nu}) \mathrm{d} \boldsymbol{\nu} , \label{eq:app:expectation:cdf_beginning}
\end{align}
where the sum appearing in the last equation is expressed w.r.t. the Haar measure on the sphere.
\par

Let $\bfR$ the rotation matrix such that $\bfe = \bfR \bfu$ where ${\bse} = [1, 0, 0, \dots]$.
Since the Haar measure is invariant under rotation, Eq.~\eqref{eq:app:expectation:cdf_beginning} becomes, once rewritten w.r.t. the area element $\mathrm{d}S^{L-1}$
\begin{equation*}
	\mathrm{p} \left( w \leq \lambda \right) =
	\frac{1}{\mathcal{S}_{L-1}} \int \bm{1}_{\vert \cos(\phi_1) \vert \leq \lambda}(\bsv) \mathrm{d} S^{L-1}.
\end{equation*}

Since $\vert \cos(\phi_1) \vert \leq \lambda$ if $\phi_1$ belongs to the set $[\arccos(\lambda), \pi - \arccos(\lambda)]$, one have, by means of symmetry around $\pi/2$
\begin{align*}
	\mathrm{p} \left( w \leq \lambda \right) &{}={}
	\frac{2}{\mathcal{S}_{L-1}} \int_{\phi_1 = \arccos(\lambda)}^{\pi/2} \int_{\phi_2\dots \phi_{L-2}=0}^\pi \int_{\phi_{L-1}=0}^{2\pi} \\
	& \sin^{L-2}(\phi_1) \dots \sin(\phi_{L-2}) \mathrm{d}\phi_1 \dots \mathrm{d}\phi_{L-1} \\
	&{}={} 2\frac{\mathcal{S}_{L-2}}{\mathcal{S}_{L-1}} \int_{\phi_1 = \arccos(\lambda)}^{\pi/2} \sin^{L-2}(\phi_1)  \mathrm{d}\phi_1,
\end{align*}
which is only composed of independent sum.
By recognizing the area of the $L-2$-sphere and by defining the change of variable $y = \cos(\phi_1)$, one have
\begin{equation*}
	\mathrm{p} \left( w \leq \lambda \right) {}={} \frac{\mathcal{S}_{L-2}}{\mathcal{S}_{L-1}} \; 2\int_0^\lambda \sin^{L-3}(\arccos(y)) \mathrm{d}y .
\end{equation*}

Knowing that $\sin(\arccos(y))$ can be rewritten as $\sqrt{1 - y^2}$, one obtains, after two changes of variable
\begin{align*}
	\int_0^\lambda \sin^{L-3}(\arccos(y)) \mathrm{d}y
	&{}={} \int_0^\lambda \left(1 - y^2\right)^{L-3} \mathrm{d}y \\
	&{}={} \lambda \int_0^1 \left(1 - \lambda^2y^2\right)^{L-3} \mathrm{d}y \\
	&{}={} \frac{\lambda}{2} \int_0^1 \left(1 - \lambda^2z\right)^{L-3} z^{-1/2} \mathrm{d}z .
\end{align*}
The sum can be resolved using Corollary~1.6.3.2 page~36 in \cite{gupta1999} with parameters $\alpha=\frac{1}{2}$, $\beta=-\frac{L-3}{2}$, $\gamma=\frac{3}{2}$ and $R=\lambda^2$, leading to
\begin{equation*}
	\int_0^\lambda \sin^{L-3}(\arccos(y)) \mathrm{d}y = 2 \lambda \; _2F_1\left( \frac{1}{2}, -\frac{L-3}{2} ; \frac{3}{2} ; \lambda^2 \right),
\end{equation*}
which is the expected result.

}

\section{Inconsistency of the marginal MAP estimator of the latent dimension}
\label{app:inconsistency}

{
\newcommand{\setA}{\calA}
\newcommand{\setB}{\calB}
We emphasize that the proof is conducted with arguments similar to the one in \cite{miller2013}.
\par

Let first introduce a few notations.
We call $\setA(\dimcoef, \Nobs)$ the set all binary matrices $\MATibp$ with $\dimcoef$ rows and $\Nobs$ columns.
For every binary matrix $\MATibp$, we call $\setB(\MATibp)$ the set of matrices $\MATibp'$ which are identical to $\MATibp$ except that a new line have been added with only one active element.
The notation $\MATibp'(j)$ will seldom be employed, where $j$ indicates the index of the new active element.
Finally, let $c_\Nobs(\dimcoef, \alpha)$ be the quantity
\begin{equation}
	c_\Nobs(\dimcoef, \alpha) \overset{\triangle}{=} \underset{\MATibp \in \setA(\dimcoef, \Nobs)}{\max} \quad \underset{\MATibp' \in \setB(\MATibp)}{\max} \quad \frac{\mathrm{P}[\MATibp | \alpha ]}{\mathrm{P}[\MATibp' | \alpha ]} .
\end{equation}
\par

\subsection{Two lemmas}

Let first consider the two following lemmas
\begin{lemma} \label{lem:app:consistance:prior}
	For all $\alpha, \dimcoef$
	\begin{equation}
		\underset{\Nobs \rightarrow + \infty}{\lim\;\sup} \quad \frac{1}{\Nobs} c_\Nobs(\dimcoef, \alpha) \leq + \infty.
	\end{equation}	
\end{lemma}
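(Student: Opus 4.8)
The plan is to bound the prior ratio directly from the exchangeable feature probability function of Eq.~\eqref{eq:intro:pdf_ibp}: I will show that for every admissible pair the ratio $\Prob[\MATibp\given\alpha]/\Prob[\MATibp'\given\alpha]$ grows at most linearly in $\Nobs$, so that dividing by $\Nobs$ yields a bounded quantity and the $\limsup$ is finite. Fix $\MATibp\in\setA(\dimcoef,\Nobs)$ and let $\MATibp'=\MATibp'(j)\in\setB(\MATibp)$ be obtained by appending one extra row whose single active entry lies in column $j$. Both matrices have the same number $\Nobs$ of columns, so the factor $e^{-\alpha\sum_\nobs 1/\nobs}$ is common and cancels in the ratio.

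Next I would track the three terms in Eq.~\eqref{eq:intro:pdf_ibp} that are affected by appending the new row. The number of features rises from $\dimcoef$ to $\dimcoef+1$, introducing one extra factor $\alpha$ in $\MATibp'$. The new row carries mass $m=1$, so the product $\prod_{\indcoef}(\Nobs-m_\indcoef)!(m_\indcoef-1)!/\Nobs!$ gains exactly one factor $(\Nobs-1)!\,0!/\Nobs! = 1/\Nobs$, all other row factors being unchanged. Finally, writing $\ell_j$ for the number of rows of $\MATibp$ already equal to the single-entry row with its $1$ in column $j$, the history of the new row has multiplicity $\ell_j+1$ in $\MATibp'$, so the term $\prod_i K_i!$ is multiplied by $\ell_j+1$. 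Combining these contributions gives
\[
	\frac{\Prob[\MATibp\given\alpha]}{\Prob[\MATibp'\given\alpha]} = \frac{\Nobs\,(\ell_j+1)}{\alpha}.
\]

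It then remains to maximize over $\MATibp$ and $\MATibp'$. Since $\MATibp$ possesses only $\dimcoef$ rows, one has $\ell_j\le\dimcoef$ for every $j$ (with equality when all rows coincide with that single-entry pattern), hence
\[
	c_\Nobs(\dimcoef,\alpha) = \frac{(\dimcoef+1)\,\Nobs}{\alpha}.
\]
Dividing by $\Nobs$ removes the dependence on the sample size, so $\limsup_{\Nobs\rightarrow+\infty}\tfrac{1}{\Nobs}\,c_\Nobs(\dimcoef,\alpha) = (\dimcoef+1)/\alpha < +\infty$, as claimed.

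The calculation is entirely elementary; the only step needing attention is the bookkeeping of the combinatorial factor $\prod_i K_i!$. One must notice that appending a single-entry row either creates a fresh history (contributing a factor $1$) or increments the multiplicity of an existing one (contributing $\ell_j+1$), and that in both cases this contribution is at most $\dimcoef+1$. Every other term either cancels or reduces to the single explicit factor $1/\Nobs$, so no asymptotic estimate is required.
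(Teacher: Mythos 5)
Your proof is correct and follows essentially the same route as the paper's: both compute the ratio $\Prob[\MATibp\given\ibpparam]/\Prob[\MATibp'\given\ibpparam]$ directly from the exchangeable feature probability function, identifying the factor $\Nobs/\ibpparam$ from the new feature's mass term and the multiplicity of the new row's history (bounded by the number of rows), which yields at most linear growth in $\Nobs$. Your version is in fact slightly sharper, since you track the exact value $c_\Nobs(\dimcoef,\ibpparam)=(\dimcoef+1)\Nobs/\ibpparam$ rather than the inequality the paper states.
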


\begin{proof}
	Let $\Nobs, \dimcoef$ be two positive integers, $\MATibp, \MATibp'$ two  binary matrices belonging respectively to $\setA(\dimcoef, \Nobs)$ and $\setB(\MATibp)$.
	\par
	According to Eq.~\eqref{eq:intro:pdf_ibp}, one have, by noting $K_{\mathrm{new}}^h$ the number of column in $\MATibp'$ identical to the added one,
	\begin{equation*}
		\frac{\mathrm{P}[\MATibp | \alpha ]}{\mathrm{P}[\MATibp' | \alpha ]} \leq \frac{\Nobs}{\alpha} K_{\mathrm{new}}^h
		{}\leq{} \frac{\dimcoef}{\alpha} \Nobs ,
	\end{equation*}
	which lead to the expected result.
	\qed
\end{proof}

\begin{lemma} \label{lem:app:consistance:majorationLLK}
	Let $\MATibp, \MATibp'$ be respectively two elements of $\setA(\dimcoef, \Nobs)$ and $\setB(\MATibp)$. 
	Thus,
	\begin{equation}
		\mathrm{p}\left( Y_{1:N} \mid \MATibp \right) \leq \kappa \; \mathrm{p}\left( Y_{1:N} \mid \MATibp'  \right) ,
	\end{equation}
	where
	\begin{equation}
		\kappa = b_{\delta} \frac{\gamma(a_\delta, b_\delta)}{\gamma(a_\delta+1, b_\delta)} .
	\end{equation}
\end{lemma}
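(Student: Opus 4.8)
The plan is to exploit that $\MATibp'\in\setB(\MATibp)$ adds to $\MATibp$ a single extra latent direction, activated by exactly one observation $\Vobs_j$. Recall that $\mathrm{p}(Y_{1:N}\mid\MATibp)$ is the likelihood of the model of Section~\ref{sec:model} after integrating out the orthonormal basis $\MATfac$, the coefficients $\MATcoef$ and the scale parameters $\zellnervar$ at fixed $\noisevar$. Because $\MATfac$ is orthonormal, marginalizing the Gaussian coefficients makes the $\Vobs_\nobs$ jointly Gaussian with a covariance that is diagonal in the basis $\MATfac$; hence the integrand factorizes across directions, the contribution of a direction $\Vfac_\indcoef$ depending only on the projections $\scalarProdSquare{\Vfac_\indcoef}{\Vobs_\nobs}$ of the observations that activate it. In particular the extra direction of $\MATibp'$ decouples from the $\dimcoef$ directions shared with $\MATibp$.

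First I would condition on the $\dimcoef$ shared directions. By consistency of the Haar measure on $\calO_\dimobs$, these form a uniform element of $\stiefel{\dimobs}{\dimcoef}$ and, given them, the extra direction $\Vfac\triangleq\Vfac_{\dimcoef+1}$ of $\MATibp'$ is uniform on the unit sphere of their orthogonal complement (which is non-empty precisely when $\dimcoef<\dimobs$). Since the shared directions carry identical activations in both matrices, both likelihoods reduce to the same integral over $\stiefel{\dimobs}{\dimcoef}$ of a common nonnegative integrand $g$, except that $\mathrm{p}(Y_{1:N}\mid\MATibp')$ carries one additional inner factor $\mathbb{E}_{\Vfac}\,h(\Vfac)$, where $h(\Vfac)$ is the scale-marginalized contribution of the new single-activation direction. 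It therefore suffices to establish the pointwise bound $h(\Vfac)\geq 1/\kappa$ for every unit vector $\Vfac$: it then propagates through $g\geq 0$ to give $\mathrm{p}(Y_{1:N}\mid\MATibp)\leq\kappa\,\mathrm{p}(Y_{1:N}\mid\MATibp')$.

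To prove the bound I would marginalize the scale parameter of the new direction (a shifted inverse gamma integral, as in Eq.~\eqref{eq:model:posterior:ibp:normal}) and substitute $s=(1+\zellnervar)^{-1}$, which recasts its contribution, at fixed $\Vfac$, as
\begin{equation*}
	h(\Vfac)=\frac{b_\delta^{a_\delta}}{\gamma(a_\delta,b_\delta)}\int_0^1 s^{a_\delta}\,e^{-b_\delta s}\,e^{\beta(1-s)}\,\mathrm{d}s,\qquad \beta\triangleq\frac{1}{2\noisevar}\scalarProdSquare{\Vfac}{\Vobs_j}\geq 0 .
\end{equation*}
Since $e^{\beta(1-s)}\geq 1$ on $[0,1]$, discarding this factor gives $h(\Vfac)\geq \frac{b_\delta^{a_\delta}}{\gamma(a_\delta,b_\delta)}\int_0^1 s^{a_\delta}e^{-b_\delta s}\,\mathrm{d}s = b_\delta^{-1}\gamma(a_\delta+1,b_\delta)/\gamma(a_\delta,b_\delta)=1/\kappa$, uniformly in $\Vfac$ and $j$. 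The hard part will be the factorization step: one must verify carefully that marginalizing $\MATcoef$ genuinely decouples the new direction from the shared ones (which rests on the orthonormality of $\MATfac$) and that the conditional law of $\Vfac_{\dimcoef+1}$ is indeed uniform on the complement sphere. Once this is secured, the monotone reduction of the scale integral is routine and pins down the constant $\kappa$.
\qed
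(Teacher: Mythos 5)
Your proposal is correct and follows essentially the same route as the paper's own proof: isolate the multiplicative factor contributed by the single new activation in the marginalized likelihood, lower-bound its nonnegative exponential term by $1$, and integrate the remaining shifted-inverse-gamma factor over the new scale parameter to obtain the constant $1/\kappa$. You merely make explicit two points the paper leaves implicit (the orthonormality-based decoupling of the new direction and its uniform conditional law on the complement sphere, which requires $\dimcoef<\dimobs$), so no further comparison is needed.
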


\begin{proof}
	Let $\Theta$ be the set of all parameters and hyperperameters, such that
	\begin{equation*}
		\mathrm{p} (\MATobs | \MATibp) = \int_{\Theta} \mathrm{p} (\MATobs | \theta, \MATibp) \mathrm{p}(\theta | \MATibp) \mathrm{d}\theta .
	\end{equation*}
	Let $\MATibp'$ be an element of $\setB(\MATibp)$, and $j$ be the index of the active element in the new line.
	Note the activation of the $j^{th}$ element adds a term of the form
	\begin{equation} \label{eq:app:consistance:proof_lemma2}
		\frac{1}{1+\delta_{K+1}^2} \exp \left( \frac{\delta_{K+1}^2}{1+\delta_{K+1}^2} \frac{ \scalarProdSquare{\Vobs_j}{\Vfac_{K+1}}}{\noisevar} \right) .
	\end{equation}
	The term in the exponential is always positive, so the exponential can be minored by 1. By integrating w.r.t. $\delta_{K+1}^2$, one has
	\begin{align*}
		\mathrm{p}\left( Y_{1:N} \mid \MATibp'  \right) \geq \frac{b_\delta^{a_\delta}}{\gamma(a_\delta, b_\delta)} \frac{\gamma(a_\delta+1, b_\delta)}{b_\delta^{a_\delta+1}} \mathrm{p}\left( Y_{1:N} \mid \MATibp \right),
	\end{align*}
	which completes the proof.
	\qed
\end{proof}

\subsection{proof}

For all integer $j$ in $\llbracket 1, \Nobs\rrbracket$
\begin{align}
	\mathrm{p}\big( \MATobs,& \dimcoef_\Nobs = \dimcoef | \alpha \big)  \\
	{}={}& \sum_{\MATibp_\dimcoef \in \setA(\dimcoef, \Nobs)}  \mathrm{P}\left[\MATibp_\dimcoef\right] \mathrm{p}(\MATobs | \MATibp_\dimcoef, \alpha) \nonumber \\
	{}\leq{}& \sum_{\MATibp_\dimcoef \in \setA(\dimcoef, \Nobs)}  {\Nobs c_\Nobs(\dimcoef, \alpha)\; \mathrm{P}\left[\MATibp'(j)\mid \alpha\right]} \; {\kappa \;\mathrm{p}\left(\MATobs \mid {\MATibp'}(j), \alpha \right)} . \nonumber
\end{align}
where the last inequality has been obtained using both Lemmas~\ref{lem:app:consistance:prior} and \ref{lem:app:consistance:majorationLLK}.
Since this inequality is true for all $j$, one can average over all values of $j$, leading to
\begin{align*}
	\mathrm{p}\big( &\MATobs, \dimcoef_\Nobs = \dimcoef | \alpha \big)  \\
	{}\leq{}& \sum_{\MATibp_\dimcoef \in \setA(\dimcoef, \Nobs)}
		\sum_{j=1}^\Nobs
	\kappa c_\Nobs(\dimcoef, \alpha)\; \mathrm{P}\left[\MATibp'(j)\mid \alpha\right] \; \;\mathrm{p}\left(\MATobs \mid {\MATibp'}(j), \alpha \right) \\
	{}\leq{}& \kappa c_\Nobs(\dimcoef, \alpha)
	\sum_{\MATibp_\dimcoef \in \setA(\dimcoef, \Nobs)}
	\sum_{\MATibp'\in \setA(\dimcoef+1, \alpha)} \mathrm{p}\left(\MATobs | \MATibp'(j) | \alpha \right) \bm{1}_{\MATibp' \in \setB(\MATibp)} \\
	{}\leq{}& \kappa c_\Nobs(\dimcoef, \alpha)
	\sum_{\mathclap{\MATibp'\in \setA(\dimcoef+1, \alpha)}} \mathrm{card}\Big\{ \MATibp, \MATibp' \in \setB(\MATibp) \Big\}
	\mathrm{p}\left(\MATobs | \MATibp'(j) | \alpha \right) \bm{1}_{\MATibp' \in \setB(\MATibp)} .
\end{align*}

However, for each matrix $\MATibp'$ in $\setA(\dimcoef+1, \alpha)$, there are at most one matrix $\MATibp$ verifying the condition, leading to
\begin{equation}
	\begin{split}
		\mathrm{p}\big( &\MATobs, \dimcoef_\Nobs = \dimcoef | \alpha \big)  \\
		{}\leq{}& \kappa c_\Nobs(\dimcoef, \alpha)
		\sum_{\MATibp'\in \setA(\dimcoef+1, \alpha)}
		\mathrm{p}\left(\MATobs | \MATibp'(j) | \alpha \right) \bm{1}_{\MATibp' \in \setB(\MATibp)} . \label{eq:app:consistance:inequality_almost_finish}
	\end{split}
\end{equation}

From now, the proof is almost finished.
By the Bayes rule, one has for $\dimcoef < \dimobs$
\begin{align*}
	\mathrm{p} \Big( \dimcoef_\Nobs&=\dimcoef | \MATobs,  \alpha \Big) \\
	{}={}& \frac{ \mathrm{p} \left( \dimcoef_\Nobs=\dimcoef, \MATobs \mid \alpha \right) }{ \sum_{k=0}^\infty \mathrm{p} \left( \dimcoef_\Nobs=\dimcoef, \MATobs, \alpha \right) } \\
	{}<{}& \frac{ \mathrm{p} \left( \dimcoef_\Nobs=\dimcoef, \MATobs | \alpha \right) }{ \mathrm{p} \left( \dimcoef_\Nobs=\dimcoef, \MATobs, \alpha \right) + \mathrm{p} \left( \dimcoef_\Nobs=\dimcoef + 1 | \MATobs, \alpha \right) } \\
	{}<{}& \frac{c_\Nobs(\dimcoef, \alpha) \kappa}{c_\Nobs(\dimcoef, \alpha) \kappa + 1} \\
	{}<{}& 1 .
\end{align*}

finally, for $\dimcoef = \dimobs$
\begin{align*}
	\mathrm{p} \Big( \dimcoef_\Nobs=&\dimobs | \MATobs, \alpha \Big) \\
	{}={}& \frac{ \mathrm{p} \left( \dimcoef_\Nobs=\dimobs, \MATobs | \alpha \right) }{ \sum_{k=0}^\infty \mathrm{p} \left( \dimcoef_\Nobs=k \mid \MATobs, \alpha \right) } \\
	{}\geq{}& \frac{ \mathrm{p} \left( \dimcoef_\Nobs=\dimobs, \MATobs | \alpha \right) }{ \sum_{k=0}^\dimobs \left(c_\Nobs(k, \alpha)\kappa\right)^{\dimcoef-k} \mathrm{p} \left( \dimcoef_\Nobs=k | \MATobs, \alpha \right) } \\
	{}\geq{}& \frac{ 1 }{ \sum_{k=0}^\dimobs \left({c_\Nobs(k, \alpha)\kappa}\right)^{\dimobs-k} } \\
	\geq{}& \frac{ 1 }{ 1 + \sum_{k=1}^\dimobs \left({c_\Nobs(\dimcoef, \alpha)\kappa}\right)^{\dimcoef} } \\
	{}>{}& 0.
\end{align*}

One can see from the last couple of equations that the result stated in Eq.~\eqref{eq:th:consistance:estimator:K_k} can be generalized to all models based on an IBP and verifying Lemma~\ref{lem:app:consistance:majorationLLK}.
However, the result in Eq.\eqref{eq:th:consistance:estimator:K_D} results from the orthogonality constraints.

}

\section{Severe inconsistency in case of a simple generative model}
\label{app:severe_inconsistency}

Assumes that for all $\nobs$, $\Vobs_\bsn \sim \calN(0, \noisevar \indmat_\dimobs)$ and $g$ be the quantity
\begin{equation*}
	\begin{split}
	g(\MATobs, \MATibp, \MATfac, \delta^2) = \frac{\calK(a_{\delta^2}, b_{\delta^2})^\dimcoef}{\mathrm{vol}(\calS_D)} \prod_{\indcoef=1}^{\dimcoef} \left(\frac{1}{1 + \delta_k^2} \right)^{a_{\delta^2} + \bsz_\indcoef\transp\bsz_\indcoef} \\
	\exp \left[ - \frac{1}{1 + \delta_\indcoef^2} \left( b_{\delta^2} + \frac{1}{2\noisevar} \sum_{\nobs=1}^\Nobs \ibpkn \scalarProdSquare{\Vfac_\indcoef}{\Vobs_\nobs} \right) \right],
	\end{split}
\end{equation*}
\textit{i.e.}, $ g\propto \mathrm{p}\left(\MATibp, \MATfac, \delta^2 | \MATobs, \noisevar, \alpha \right)$.
Let emphasize that $g$ is intimately linked to a probability distribution.
\par

Let $K_\Nobs$ be again the random variable associated to the latent subspace dimension.
One has, by definition
\begin{align}
	\mathrm{P}\left[ K_\Nobs = 0 | \MATobs, \noisevar, \alpha \right] {}={}& \frac{ \mathrm{p}\left( K_\Nobs = 0, \MATobs | \noisevar, \alpha \right) }{ \sum_{K=1}^{+\infty} \mathrm{p}\left( K_\Nobs = K, \MATobs | \noisevar, \alpha \right) } \nonumber \\
	{}\leq{}& \frac{1}{1 + \frac{\mathrm{p}\left( K_\Nobs = 1, \MATobs | \noisevar, \alpha \right)}{\mathrm{p}\left( K_\Nobs = 0, \MATobs | \noisevar, \alpha \right)}} \label{eq:app:inconsistency_with_gen_model:frac} .
\end{align}

The quantity appearing in the denominator of Eq.~\eqref{eq:app:inconsistency_with_gen_model:frac} can be rewritten
\begin{align*}
	\;&\frac{\mathrm{p}\left( K_\Nobs = 1, \MATobs | \noisevar, \alpha \right)}{\mathrm{p}\left( K_\Nobs = 0, \MATobs | \noisevar, \alpha \right)} \\
	{}={}& \sum_{\MATibp, K_\Nobs=1} \int_{\calS_\dimobs} \int_{\mathbb{R}_+} g(\MATobs, \MATibp, \delta^2, \MATfac) \mathrm{d}\noisevar \mathrm{d}\MATfac \mathrm{d}\delta^2 \; \frac{\mathrm{P}[\MATibp | \alpha]}{\mathrm{P}[\bm{0} | \alpha]} \\
	{}={}& \sum_{\MATibp, K_\Nobs=1} \int_{\calS_\dimobs} \int_{\mathbb{R}_+} g(\MATobs, \MATibp, \delta^2, \MATfac) \mathrm{d}\noisevar \mathrm{d}\MATfac \mathrm{d}\delta^2 \\
	&\times  \; \alpha \frac{(\Nobs - \Vibp_1^t\Vibp_1)!(\Vibp_1^t\Vibp_1 - 1)!}{N!} .
\end{align*}

Since the matrix $\MATibp$ appearing in the former equation has only one row, one can decompose the sum over the number of active component and the number of instance,
\begin{align*}
	&\frac{\mathrm{p}\left( K_\Nobs = 1, \MATobs | \noisevar, \alpha \right)}{\mathrm{p}\left( K_\Nobs = 0, \MATobs | \noisevar, \alpha \right)} {}={}\\
	& \sum_{l=1}^\Nobs \sum_{\MATibp, K_\Nobs=1, \Vibp_1\Vibp_1\transp = l}
	\frac{\alpha}{l} \frac{1}{\binom{\Nobs}{l}}
	\int_{\calS_\dimobs} \int_{\mathbb{R}_+} g(\MATobs, \MATibp, \delta^2, \MATfac) \mathrm{d}\noisevar \mathrm{d}\MATfac \mathrm{d}\delta^2 .
\end{align*}

Let define for each $l$ the U-statistic
\begin{equation}
	U_l(\bsY) \overset{\triangle}{=} \frac{1}{\binom{\Nobs}{l}} \quad
	\sum_{\mathclap{\substack{\MATibp, K_\Nobs=1, \\ \Vibp_1\Vibp_1\transp = l}}} \quad
	\int_{\calS_\dimobs\cup\mathbb{R}_+} g(\MATobs, \MATibp, \delta^2, \MATfac) \mathrm{d}\noisevar \mathrm{d}\MATfac \mathrm{d}\delta^2,
\end{equation}
where the support of each permutation is given by the $l$ active components of $\bsZ$. By the strong law of large number \citep{hoeffding1961}, for all $l$,
\begin{equation}
	U_l(\bsY) \overset{a.s.}{\underset{\Nobs\rightarrow+\infty}{\longrightarrow}} \mathbb{E}_{\bsY}  \left[ \int_{\calS_\dimobs \cup \mathbb{R}_+} g(\MATobs, \MATibp, \delta^2, \MATfac) \mathrm{d}\noisevar \mathrm{d}\MATfac \mathrm{d}\delta^2 \right] = 1. \label{eq:app:inconsistency_with_gen_model:hoffding}
\end{equation}
The former equality holds since the quantity under the expectation is a density.
Consequently, for all $L\leq \Nobs$
\begin{align*}
	\frac{\mathrm{p}\left( K_\Nobs = 1, \MATobs | \noisevar, \alpha \right)}{\mathrm{p}\left( K_\Nobs = 0, \MATobs | \noisevar, \alpha \right)}
	{}\geq{}& \sum_{l=1}^L \frac{\alpha}{l} U_l(\bsY) \overset{a.s.}{\underset{\Nobs\rightarrow+\infty}{\longrightarrow}} \sum_{l=1}^L \frac{\alpha}{l} .
\end{align*}
Since the former equality is true for all $L$, and that the harmonic series $\sum_l \frac{1}{l}$ diverges, the quantity $\frac{\mathrm{p}\left( K_\Nobs = 1, \MATobs | \noisevar, \alpha \right)}{\mathrm{p}\left( K_\Nobs = 0, \MATobs | \noisevar, \alpha \right)}$ goes to infinity almost surely as $\Nobs$ increases. This complete the proof.

\section{Marginal posterior distribution of the scale parameters}\label{app:deltapost}
\newcommand{\scalef}{\lambda}
In the general case, the posterior distribution of the scale parameters $\boldsymbol{\zellner} = \left\{\zellnervar_1,\ldots,\zellnervar_K\right\}$, where the orthogonal matrix $\MATfac$ has been marginalized, cannot be derived analytically. However, assuming that the binary matrix $\MATibp$ is the $\dimcoef\times\Nobs$ matrix $\boldsymbol{1}_{\dimcoef,\Nobs}$ with only $1$'s everywhere, this posterior distribution can be derived explicitly. In particular, when $K=D$
\begin{equation} \label{eq:algo:theoretical_behaviour}
	\begin{split}
	&f \left(\boldsymbol{\zellner} \given \MATobs, \noisevar, \ibpparam, \MATibp = \boldsymbol{1}_{\dimobs,\dimobs} \right) \propto\\
        &\prod_{\indcoef = 1}^{\dimobs} \left(\frac{1}{1 + \zellnervar_\indcoef}\right)^{a_{\delta}+1} \exp \left( - \frac{b_{\delta}}{1+ \zellnervar_\indcoef} \right)
         \\
		& \times {} _0\mathrm{F}_0\left( \emptyset, \emptyset, \frac{1}{\noisevar} \MATobs \MATobs\transp - \scalef \indmat_\dimobs, \boldsymbol{\Delta}_{\boldsymbol{\zellner}} \right) \etr\left( \scalef \boldsymbol{\Delta}_{\boldsymbol{\zellner}} \right)
	\end{split}
\end{equation}
with $\scalef \in (0,\frac{1}{\sigma^2} \rho_{\min})$ where\footnote{Note that the positive real number $\scalef$ has no particular interpretation and is only introduced here for convenience.} $\rho_{\min}$ is the minimum eigenvalue of $\MATobs \MATobs\transp$, $\boldsymbol{\Delta}_{\boldsymbol{\zellner}}$ is a $\dimobs\times\dimobs$ diagonal matrix formed by the ratios $\zellnervar_\indcoef / (1 + \zellnervar_\indcoef)$ and $_0\mathrm{F}_0$ is a generalized hypergeometric function of two matrices. In particular, this function is defined by
\begin{equation}
	{}_0\mathrm{F}_0 (\emptyset, \emptyset, \bfA, \bfB) = \sum_{\indcoef=1}^\infty \sum_{\kappa \vdash \indcoef} \frac{ C_\kappa(\bfA) C_\kappa(\bfB) }{ C_\kappa(\indmat_D) k!}
\end{equation}
where $\kappa \vdash  \indcoef$ denotes the integer partitions of $\indcoef$, $C_\kappa(\bfA)$ is a zonal polynomial defined by the eigenvalues of $\bfA$ \cite[Ch.~7]{muirhead1982}. Despite recent advances in numerical evaluation of zonal polynomials due to, e.g., \cite{Koev2006}, this quantity remains difficult to be computed. However, it can be interpreted as a measure of mismatch between the magnitudes of the principal components recovered by PCA (through the eigenvalues of $\frac{1}{\sigma^2}\MATobs \MATobs\transp - \scalef \indmat_\dimobs$) and the magnitudes of the relevant components identified by the proposed procedure (in $\boldsymbol{\Delta}_{\boldsymbol{\zellner}}$).

More generally, this hypergeometric function can be advocated for as an elegant way to compare two positive definite matrices using their respective eigenvalues. This finding would suggest the design of an appropriate metric which allows two covariance matrices  to be compared regardless of their respective induced orientations.

\bibliographystyle{spbasic} 

\begin{acknowledgement}
	Part of this work has been funded thanks to the BNPSI ANR project no. ANR-13-BS-03-0006-01.
\end{acknowledgement}

\end{document}